\newcommand{\rul}{\leftarrow}
\newcommand{\nat}{\mathbb{N}}
\newcommand{\true}{{\bf t}}
\newcommand{\false}{{\bf f}}
\newcommand{\unknown}{{\bf u}}
\newcommand{\curly}[1]{\mathcal{#1}}
\renewcommand{\vec}[1]{{\mathbf{#1}}}
\newcommand{\ci}{\curly{I}}
\newcommand{\cp}{\curly{P}}
\newcommand{\ce}{\curly{E}}
\newcommand{\cn}{\curly{N}}
\newcommand{\pts}{\curly{T}}
\newcommand{\sels}{\curly{S}}
\newcommand{\neither}{\emptyset}
\newcommand{\rl}{\curly{R}}
\newcommand{\noisyor}{\text{\emph{noisy-or}}}
\newcommand{\bntocp}{\text{\em CP}}
\newcommand{\ovalnodevos}{7pt}
\newcommand{\ovalnodehos}{7pt}
\newcommand{\ovalnode}[3]{
\pgfnodebox{#1}[virtual]{#2}{#3}{\ovalnodevos}{\ovalnodehos}
\pgfellipse[stroke]{\pgfnodecenter{#1}}{\pgfdiff{\pgfnodeborder{#1}{0}{0pt}}{\pgfnodecenter{#1}}}{\pgfdiff{\pgfnodeborder{#1}{90}{0pt}}{\pgfnodecenter{#1}}}
}
\newcommand{\parent}[2]{
\pgfnodesetsepstart{1pt}
\pgfnodesetsepend{1pt}
\pgfsetstartarrow{\pgfarrowto}
\pgfnodeconnline{#1}{#2}
}
\newtheorem{theorem}{Theorem}
\newtheorem{proposition}{Proposition}
\newtheorem{corollary}{Corollary}
\newtheorem{definition}{Definition}
\newtheorem{example}{Example}
\author[Joost Vennekens \and  Marc Denecker \and Maurice Bruynooghe]{JOOST VENNEKENS \and MARC DENECKER \and MAURICE BRUYNOOGHE\\ {\tt $\{$joost, marcd, maurice$\}$@cs.kuleuven.be} \\Dept.~of Computer Science, Katholieke Universiteit Leuven\\ Celestijnenlaan 200A\\ B-3001 Leuven, Belgium}
\title[CP-logic: A Language of Causal Probabilistic Events]{CP-logic: A Language of Causal Probabilistic Events and Its Relation to Logic Programming}
\newcommand{\myhline}{\noalign{\vspace{-2mm}}\hline\noalign{\vspace{-2mm}}}
\begin{document}

\maketitle

\begin{abstract}
This papers develops a logical language for representing probabilistic causal laws.    Our interest in such a language is twofold.  First, it can be motivated as a fundamental study of the representation of causal knowledge.  Causality has an inherent dynamic aspect, which has been studied at the semantical level by Shafer in his framework of probability trees.  In such a dynamic context, where the evolution of a domain over time is considered, the idea of a causal law as something which guides this evolution is quite natural.  In our formalization, a set of probabilistic causal laws can be used to represent a class of probability trees in a concise, flexible and modular way.  In this way,  our work extends Shafer's by offering a convenient logical representation for his semantical objects.

Second, this language also has relevance for the area of probabilistic logic programming.  In particular, we prove that the formal semantics of a theory in our language can be equivalently defined as a probability distribution over  the well-founded models of certain logic programs, rendering it formally quite similar to existing languages such as ICL or PRISM.  Because we can motivate and explain our language in a completely self-contained way as a representation of probabilistic causal laws, this provides a new way of explaining the intuitions behind such probabilistic logic programs: we can say precisely which knowledge such a program expresses, in terms that are equally understandable by  a non-logician.  Moreover, we also obtain an additional piece of knowledge representation methodology for probabilistic logic programs, by showing how they can express probabilistic causal laws.
\end{abstract}

\begin{keywords}
Uncertainty, Causality, Probabilistic Logic Programming
\end{keywords}


\section{Introduction}

\label{sec:intro}

Logic based languages, such as logic programming, play an important role in knowledge representation.  One of the known weaknesses of such languages is that they are not well suited for representing probabilistic or uncertain knowledge.  This has prompted a significant amount of research into {\em probabilistic logic programming} languages, both in the knowledge representation community itself, as well as in machine learning, where such languages are developed for the purpose of {\em stochastic relational learning}. 

Syntactically, such a language typically annotates a logic programming rule, or some part thereof, with a probability; the formal semantics of the language then somehow specifies a probability distribution---typically over a set of possible worlds---in terms of these individual probabilities.  This is the way in which these probabilistic logic programming languages tend to be formally defined.  However, such a formal definition still leaves one important question unanswered, namely that of how expressions in the langauge should be understood on the {\em informal} level, i.e., how would one explain their intuitive meaning to a non-logician?

For the two seperate components of logic programming and probability, this question has of course already been addressed at length.  For instance, the informal meaning of logic programs---and in particular its negation-as-failure connective---has been explained among others in epistemic terms, referring to the beliefs of a rational agent \cite{Gelfond91a}, and in terms of the well-known mathematical concept of an inductive definition \cite{Denecker98c}.  The meaning of statements in probability calculus, on the other hand, has been explained among others in frequentist terms, e.g.\ \cite{Venn}, and in terms of degrees of belief, e.g.\ \cite{definetti}.

So far, research on probabilistic logic programming languages has not yet paid a great deal of attention to this issue of the informal meaning of expressions.  It tends to be assumed that one already has sufficient intuitions about the meaning of logic programs and that the probabilities can simply be tacked on top of that. This paper presents an effort to develop a probabilistic logic programming langauge, whose informal semantics\footnote{The informal semantics of a language is commonly also referred to as its ``intuitive reading''.  We prefer the term ``informal semantics'', however, because it stresses the close relation that there (should) exist(s) to the formal semantics. .} is explained in full detail in a completely self-contained way.  In general, the advantage of such an approach is that it gives more philosophical insight into the meaning of statements in the language, makes it easier to explain it to domain experts, and can help to provide a better modeling methodology for it. 

One of the key tasks that such an effort needs to accomplish is to show convincingly that the formal semantics of the language indeed correctly captures the informal meaning that is attributed to its expressions, i.e., that these expressions indeed mean---formally---what we claim they---intuitively---mean.  To ensure that this is done properly, we will adopt a constructive approach, where we first describe a particular kind of knowledge that we want to represent, then show how we can formalise the meaning of this knowledge in a way which is straightforward enough for its correctness to be intuitively obvious, and finally prove that the language we have thus defined is actually equivalent to a certain probabilistic logic programming construction.

The language that we develop will attempt to formalise {\em probabilistic causal laws}.  The use of causal laws to compactly represent domains is commonplace in various {\em action languages}, related to logic programming, e.g.\ \cite{Gelfond93b}.  Here, we will investigate a probabilistic variant of such laws.  We will do this in the semantic context developed by \citeN{shafer:book}. 
In this work, Shafer presents his view on a number of fundamental causal and probabilistic concepts.  His central hypothesis is that such concepts are best considered in an explicitly {\em dynamic} context: when speaking of probability or causality, we should do so, he says, in the context of a particular story about how the domain evolves, which he formalises by means of {\em probability trees}.  As he himself puts it:
\begin{quote}
A full understanding of probability and causality requires a language for talking about the structure of contingency---a language for talking about the step-by-step unfolding of events.  This book develops such a language based on an old and simple yet general and flexible idea: the probability tree.
\end{quote}

\begin{figure}
\begin{quote}
{\em John and Mary are each holding a rock.  With probability $0.5$, Mary will throw her rock at a window.  This will break the window with probability $0.8$.   John will then also throw his rock at the window.  He will hit it with probability $0.6$.}
\end{quote}
\caption{The story of John and Mary.\label{story}}
\end{figure}

\begin{figure}
\begin{center}
\xymatrix@W=1.7cm{
&&  \bullet \ar[ld]^{0.5}_{\text{Mary throws}}  \ar[rd]_{0.5}^{\text{doesn't throw}} \\
& \bullet \ar[ld]^{0.8}_{\text{Window breaks}} \ar[rd]_{0.2}^{\text{doesn't break}} && \bullet \ar[d]_1|(.3){\text{John throws}}\\
\bullet \ar[d]_1|(.3){\text{John throws}} && \bullet \ar[d]_1|(.3){\text{John throws}} &\bullet  \ar[d]_{0.6}|(.3){\text{Window breaks}} \ar[rd]_{0.4}^{\text{doens't break}} \\
\bullet \ar[d]_{0.6}|(.3){\text{Window breaks}} \ar[rd]_{0.4}^{\text{doesn't break}} &&\bullet \ar[d]_{0.6}|(.3){\text{Window breaks}} \ar[rd]_{0.4}^{\text{doesn't break}} &\bullet &\bullet\\
\bullet &\bullet &\bullet & \bullet
}
\end{center}
\caption{Probability tree for the window breaking story.\label{tree1}}
\end{figure}
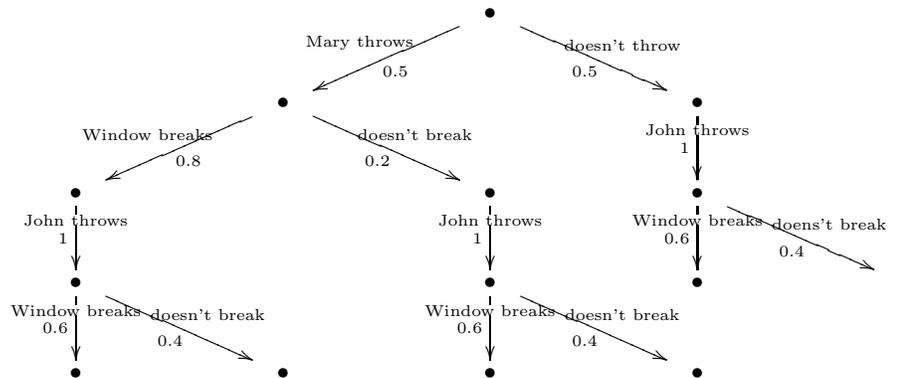

Figure \ref{tree1} depicts a probability tree corresponding to the story shown in Figure \ref{story}.  In natural language, we could say that such a tree paints the following picture.  The domain starts out in an {\em initial state}.  Then, some {\em event} happens, which causes the domain to transition to a new state.   However, we do not know up front precisely which new state this is going to be, exactly; instead, the new state is chosen probabilistically from a set of alternatives.  For instance, in the initial state of the tree in Figure \ref{tree1}, the event happens that Mary makes up her mind whether to throw, which leads to either a state in which she does or a state in which she does not.  This step is then repeated---that is, in the new state, a different event happens, which leads to another new state, again chosen probabilistically from some set of alternatives---until finally this process arrives at a {\em final state}, in which no further events happen.

Throughout this paper, we will continue to talk about probability trees using the language introduced above.  In particular, we will carry on using the word {\em event} to refer to the occurrence that causes a transition from one state to the next.  This use of the term differs from its standard use in probability theory, where it denotes a set of possible outcome of an experiment.  Shafer introduces the terms {\em Humean event} (that which causes a transition between states) and {\em Demoivrian event} (a set of possible outcomes) to distinguish between these two different meanings of the word.  Using the chain rule, the probability of following any particular branch in this tree can be computed as the product of the probabilities of the individual edges.  For instance, the probability of the left-most branch of the tree in Figure \ref{tree1} is $0.5 \cdot 0.8 \cdot 1 \cdot 0.6 = 0.24$. A Demoivrian event corresponds to a sets of branches of the tree.  For instance, the Demoivrian event of the window being broken corresponds to the set of all branches in which it breaks, i.e., the first, second, third and fifth branch.  The probability of such a Demoivrian event can be computed as the sum of the probabilities of all branches that belong to it, e.g., the probability of the window breaking is $0.24 + 0.16 + 0.06 + 0.3 = 0.76$. In the rest of this paper, we reserve the term ``event'' for  Humean events (i.e., transitions between states) and will therefore omit the modifier.

This paper will develop a language for describing the causal laws according to which a probability tree unfolds.  In other words, we will assume that each  event in such a tree happens for a {\em reason}, i.e., that it is actually caused by some particular property of the state in which it happens.  We then construct a language that allows to describe these reasons.  In the extreme case, it might be that we can say nothing more than that each state of the tree is in itself the reason for the event that happens there; in this case, we obtain nothing more than an alternative representation for the tree itself.  However, if the same event can happen in different parts of the tree, each time caused by the same property of the state in which it happens, we might end up with a significantly more compact representation.

In the story in Figure \ref{story}, we find four probabilistic causal laws:
\begin{itemize}
\item John throwing his rock causes the window to break with probability $0.6$;
\item Mary throwing her rock causes the window to break with probability $0.8$;
\item Mary decides to throw with probability $0.5$ (this event is vacuously caused);
\item John always throws (this event is also vacuously caused and it has only one possible outcome).
\end{itemize}

In the language that we will develop in the next section, we will write down such probabilistic causal laws in the format:
\[ \text{\em possible effects} \leftarrow \text{\em cause}\]
where the $cause$ can be omitted if the event is vacuously caused.  In this syntax, the above probabilistic causal laws can be written down as:
\begin{align}
(Break:0.8) &\leftarrow Throws(Mary).\label{mb}\\
(Break:0.6) &\leftarrow Throws(John).\label{jb}\\
(Throws(Mary): 0.5)&.\label{mt}\\
Throws(John)\label{jt}&.
\end{align}

In this representation, John and Mary each get their own probabilistic causal law.  This is necessary because they indeed throw differently, causing the window to break with different probability.  However, we can also imagine an example in which both hit the window with the same probability.  In this case, our language also allows a more compact representation, using a variable to range over the different persons that might throw:
\[\forall x\ (Break:0.8) \leftarrow Throws(x).\]
The meaning of such a statement is as one would expect: each particular person that throws (i.e, each instantation of $x$ for which $Throws(x)$ holds) hits the window with $0.8$.

\begin{figure}
\begin{center}
\xymatrix@W=1.7cm{
&&& \bullet \ar[d]_1|(.3){\text{John throws}}\\
&&&  \bullet \ar[ld]^{0.6}_{\text{Window breaks}}  \ar[rd]_{0.4}^{\text{doesn't break}} \\
&& \bullet \ar[ld]^{0.5}_{\text{Mary throws}} \ar[d]_{0.5}|(.3){\text{doesn't throw}} && \bullet \ar[ld]^{0.5}_{\text{Mary throws}} \ar[d]_{0.5}|(.3){\text{doesn't throw}} \\
&\bullet \ar[ld]^{0.8}_{\text{Window breaks}}   \ar[d]_{0.2}|(.3){\text{doesn't break}} & \bullet & \bullet \ar[ld]^{0.8}_{\text{Window breaks}}   \ar[d]_{0.2}|(.3){\text{doesn't break}} & \bullet \\
\bullet &\bullet &  \bullet &\bullet\\
}
\end{center}
\caption{Alternate probability tree for the window breaking story.\label{tree2}}
\end{figure}
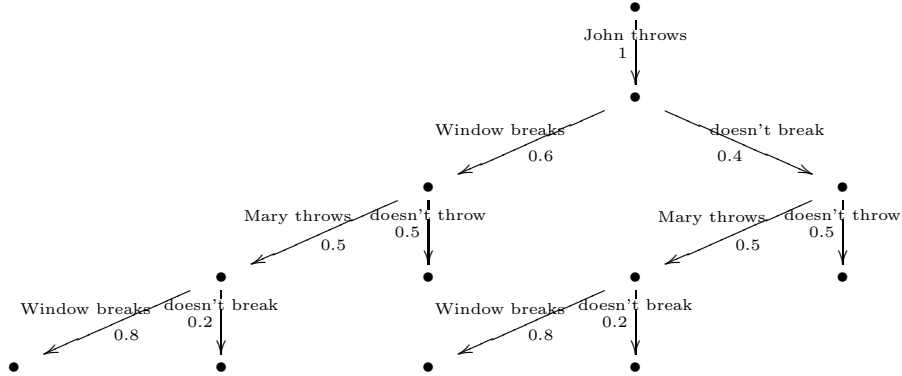

If we compare our four probabilistic causal laws to the probability tree in Figure \ref{tree1}, we see that this tree indeed obeys the causal laws, in the following sense:
\begin{itemize}
\item As we go down any branch of the tree, we find that all events which should happen according to our causal laws actually do so.  The two unconditional causal laws \eqref{mt} and \eqref{jt} state that the events that Mary decides whether she will throw and that John decides that he will throw should always happen; and indeed, we find that in each branch of the tree, they do.  In the left-most branch of the tree, for instance, the result of these two events is that both Mary and John decide to throw, so according to causal laws \eqref{mb} and \eqref{jb} the two events by which their respective throws break the window should also happen; and again, they indeed do.
\item The events that happens according to the causal laws are also the only events that happen.  For instance, in the right-most branch, Mary has decided not to throw, so the event of her rock breaking the window with probability $0.8$ does not happen.  Moreover, each of the events which should happen happens precisely once; it is not the case, for instance, that once Mary has decided to throw, the event of her rock breaking the window with probability $0.8$ keeps on happening {\em ad infinitum}.
\end{itemize}
To define the semantics of our language, we will formalise this idea of a probability tree obeying a set of probabilistic causal laws.  We will call such an obeying probability tree an {\em execution model} of the set of causal laws.  

In general, a single set of causal laws might have many such execution models.  Indeed, it is clear that a probability tree contains more information than the causal laws: events in the tree are totally ordered (for instance, in Figure \ref{tree1}, Mary decides to throw before John does), whereas the causal laws only provide a partial order on the events (for instance, the event of Mary's rock hitting the window can only happen after the event of Mary deciding to throw, since one causes the other; however, no order is imposed between e.g.\ the events of John throwing and Mary throwing).  However, the additional information that is contained in a probability tree is actually irrelevant for the final outcome that will be reached.  Let us consider, for instance, the alternative tree of Figure \ref{tree2}, in which John throws before Mary does.  This tree also obeys our four causal laws, yet has a different structure than the tree in Figure \ref{tree1}.  However, we see that the probability of the window eventually breaking is nevertheless precisely the same in this tree as it was in the other one, namely $0.76$. 
Later in this paper, we will prove that all execution models of a set of causal laws always generate precisely the same probability distribution.  In this sense, causal laws manage to capture the essence of a probability tree, while allowing irrelevant details (e.g.\ does John throw first or does Mary throw first?) to be ignored.  This renders our representation quite succint.


Shafer's book also recognizes that the naive graphical representation of probability trees tends to grow unwieldy rather quickly.  In the final chapter of his book, he therefore briefly examines a number of alternative, more compact representations for such trees, including Bayesian nets \cite{pearl:firstbook} and a representation based on Martin-L\"of type theory \cite{MartinLof82}.  In both these representations, new events are caused by the outcomes of some fixed  set of previous events.  The description of an event itself therefore already carries within it certain restrictions on the order in which events can happen.  By constrast, in CP-logic events are not caused directly by previous events, but rather by properties of the state in which they happen.  The fact that we do not represent any explicit {\em a priori} information about the order in which events happen makes our representation more flexible and allows probabilistic causal laws to easily be reused in different contexts.  Let us suppose, for instance, that we know a probabilistic causal law that describes one particular way in which a certain  disease can cause certain symptom.  This law can then be reused, without change, regardless of what might cause the disease, which other causes there might be for the same symptom, or even whether there are still other ways in which the same disease might also cause the same symptom.  

This paper is structured as follows.  Section \ref{sec:prel} briefly introduces some preliminary concepts from lattice theory and also logic programming. In Section \ref{sec:CP-logic}, we formally define an initial, restricted version of CP-logic.  In Section \ref{sec:complex}, we show how a certain kind of process can be modeled in this basic language, which also suggests a way of defining a more general version of CP-logic.  This will be done in Section \ref{sec:negation}.  Section \ref{sec:discuss} then discusses the resulting definitions in more detail.  In Section \ref{sec:BNs}, we investigate the precise relation between CP-logic and Bayesian networks.  Section \ref{ch:lpads} relates CP-logic to logic programming.  Finally, Section \ref{sec:related} discusses some related work.  Proof of the theorems presented in this paper will be given in Appendix \ref{sec:proofs}.

Part of the material in this paper was presented at conferences \cite{iclp04:lpads} and \cite{jelia06}.

\section{Preliminaries}
\label{sec:prel}

This section recalls some well-known definitions and results from lattice theory and logic programming.  To a large extent, this material is relevant only for the proofs of our theorems.  It can safely be skipped on a first reading of this paper.

\subsection{Some concepts from lattice theory}
\label{prel:lat}

A binary relation $\leq$ on a set $L$ is a {\em partial order} if it is reflexive, transitive and anti-symmetric.  A partially ordered set $\langle L, \leq \rangle$ is a {\em lattice} if every pair $(x, y)$ of elements of $L$ has a unique least upper bound and greatest lower bound.  Such a lattice $\langle L, \leq \rangle$ is {\em complete} if every non-empty subset $S \subseteq L$ has a least upper bound and greatest lower bound.  A complete lattice has a least element $\bot$ and a greatest element $\top$.  An operator $O: L \rightarrow L$ is {\em monotone} if for every $x \leq y$, $O(x) \leq O(y)$.  An element $x \in L$ is a {\em prefixpoint} of $O$ if $x \geq O(x)$, a {\em fixpoint} if $x = O(x)$ and a {\em postfixpoint} if $x \leq O(x)$.  If $O$ is a monotone operator on a complete lattice, then for every postfixpoint $y$, there exists a least element in the set of all prefixpoints $x$ of $O$ for which $x \geq y$.  This least prefixpoint greater than $y$ of $O$ is also the least fixpoint greater than $y$ of $O$.  Moreover, it can be constructed by successively applying $O$ to $y$, i.e., as the limit of the sequence $(y, O(y), O(O(y)),\ldots)$.  In particular, because $\bot$ is a trivial postfixpoint, $O$ has a least prefixpoint which is equal to its least fixpoint and which can be constructed by successive application of $O$ to $\bot$.

\newcommand{\leqp}{\leq_p}
\newcommand{\leqt}{\leq_t}

\subsection{Some concepts from logic programming}
\label{prel:lp}

We assume familiarity with classical logic.  A {\em Herbrand}
  interpretation for a vocabulary $\Sigma$ is an interpretation, which has as its domain the set $HU(\Sigma)$ of all ground
terms that can be constructed from $\Sigma$ and which interprets each
constants as itself and each function symbol $f/n$ as the function
mapping a tuple $(t_1,\dots,t_n)$ to $f(t_1,\dots,t_n)$.  We can
identify a Herbrand interpretation with a set of ground atoms.  A {\em
  partial} Herbrand interpretation is a function $\nu$ from the set
$HB(\Sigma)$ of all ground atoms, also called the {\em Herbrand base}, to
the set of truth values $\{\false,\unknown,\true\}$. A (total) Herbrand interpretation
corresponds to a partial Herbrand interpretation that does not include
$\unknown$ in its range. On the set of truth values, one defines the
{\em precision order }:
$$\unknown\leqp \false \text{ and } \unknown \leqp \true$$
and the {\em truth order}: $$\false \leqt \unknown \leqt \true.$$
These orders can be pointwise extended to partial Herbrand
interpretations.  Each totally ordered set $S$ of partial Herbrand
interpretations has a $\leqp$-least upperbound denoted
$lub_{\leqp}(S)$.  The three-valued truth function $\varphi^\nu$ for
sentences $\varphi$ and partial Herbrand interpretations $\nu$ is
defined by induction:
\begin{itemize}
\item $p^\nu= \nu(p)$, for $p\in HB(\Sigma)$;
\item $(\psi\land \phi)^\nu = Min_{\leqt}(\psi^\nu,\phi^\nu)$;
\item $(\forall x\ \phi(x))^\nu = Min_{\leqt}(\{\phi(t)^\nu |  t \in HU(\Sigma)\})$. 
\item $(\neg \varphi)^\nu = (\varphi^\nu)^{-1}$ where $\false^{-1}=\true, \true^{-1}=\false, \unknown^{-1}=\unknown$. 
\end{itemize} 
A crucial monotonicity property of three-valued logic is that $\nu
\leqp \nu'$ implies $\varphi^\nu\leqp \varphi^{\nu'}$.

The well-founded semantics of logic programs was originally defined in
\cite{Vangelder91}. We present an equivalent definition that was developed in
\cite{Denecker:LPNMR07}.  Formally, a logic program $P$ is a set of
rules of the form $p \leftarrow \phi$, where $p$ is a ground atom and
$\phi$ is a first-order sentence.  

\begin{definition}[well-founded induction] We define a {\em well-founded induction} of
$P$ as a sequence of partial Herbrand interpretations $(\nu^\alpha)_{0
  \leq \alpha\leq \beta}$ satisfying the following conditions:
\begin{itemize}
\item $\nu^0 = \bot_{\leqp}$, the mapping of all atoms to $\unknown$;
\item $\nu^\lambda = lub_{\leq_p}(\{\nu^\beta | \beta < \lambda\})$, for
  each limit ordinal $\lambda$;
\item $\nu^{\alpha+1}$ relates to $\nu^{\alpha}$ in one of the
following ways:
\begin{itemize}
\item $\nu^{\alpha+1} = \nu^{\alpha}[p:\true]$ such that for some rule $p \rul
  \varphi$ in $P$,  $\varphi^{\nu^{\alpha}} = \true$; 
\item $\nu^{\alpha+1} = \nu^{\alpha}[U:\false]$ where $U$ is an {\em
    unfounded set}, i.e., a set of ground atoms such that for each
  $p$ in $U$, $\nu^\alpha(p)=\unknown$ and for each rule $p \leftarrow
  \varphi$ in $P$, $\varphi^{\nu^{\alpha+1}} = \false$. 
\end{itemize}
\end{itemize}
\end{definition}
A well-founded induction is a sequence of increasing precision.  We call
a well-founded induction $(\nu^\alpha)_{\alpha\leq\beta}$ {\em
  terminal} if it cannot be extended with a strictly more precise
interpretation. Each well-founded induction whose limit is a total
interpretation is terminal.  
We now define the {\em well-founded model} of $P$ as the limit of any such terminal well-founded induction.
As the following result shows, this definition coincides with the standard one.

\begin{theorem}\cite{Denecker:LPNMR07} Each terminal well-found induction of $P$
  converges to the well-founded model of $P$, as it was defined in \cite{Vangelder91}.
\end{theorem}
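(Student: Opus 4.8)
The plan is to match Denecker's well-founded induction against the original definition of \cite{Vangelder91}, which constructs the well-founded model $W$ as the $\leqp$-least fixpoint of the monotone operator $\Phi_P(\nu) = T_P(\nu)\cup\neg U_P(\nu)$, where $T_P(\nu)$ is the set of atoms having a rule whose body is $\true$ in $\nu$ and $U_P(\nu)$ is the greatest unfounded set of $P$ with respect to $\nu$. The first thing I would record is that the two kinds of successor step correspond exactly to the two components of $\Phi_P$: a $\true$-step makes some $p$ true precisely when $p\in T_P(\nu^\alpha)$, and a $\false$-step sets a set $U$ to $\false$ precisely when $U$ is unfounded with respect to $\nu^\alpha$. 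Indeed, the requirement $\varphi^{\nu^{\alpha+1}}=\false$ for every rule $p\rul\varphi$ with $p\in U$ is just the semantic reformulation of unfoundedness, since overriding $U$ to $\false$ can only falsify a body through a positive occurrence of an atom of $U$ or through a literal already $\false$ in $\nu^\alpha$. Hence every $U$ used in a $\false$-step satisfies $U\subseteq U_P(\nu^\alpha)$.

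Next I would prove a soundness lemma by transfinite induction: every well-founded induction satisfies $\nu^\alpha\leqp W$ for all $\alpha$. The base and limit cases are immediate, since a $\leqp$-least upper bound of interpretations below $W$ is again below $W$. For a $\true$-step, $\nu^\alpha\leqp W$ and monotonicity of $T_P$ give $p\in T_P(\nu^\alpha)\subseteq T_P(W)$; as $W$ is a fixpoint its true atoms are exactly $T_P(W)$, so $p$ is $\true$ in $W$. For a $\false$-step the crucial observation is that $\false$ is $\leqp$-maximal, so $\varphi^{\nu^\alpha[U:\false]}=\false$ together with $\nu^\alpha[U:\false]\leqp W[U:\false]$ forces $\varphi^{W[U:\false]}=\false$; thus every unfounded set with respect to $\nu^\alpha$ is unfounded with respect to the more precise $W$, whence $U\subseteq U_P(W)$, i.e.\ $U$ is $\false$ in $W$. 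Each step therefore stays below $W$.

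The heart of the argument is to show that a \emph{terminal} induction with limit $\nu^\beta$ is a fixpoint of $\Phi_P$. The correspondence above first yields that $\nu^\beta$ is a \emph{post}fixpoint, $\nu^\beta\leqp\Phi_P(\nu^\beta)$: any atom made $\true$ along the way lies in $T_P(\nu^\beta)$, because a body that was once $\true$ stays $\true$ by monotonicity, and the union of all unfounded sets employed is itself unfounded with respect to $\nu^\beta$, hence contained in $U_P(\nu^\beta)$. If $\nu^\beta$ were not already a fixpoint, then $\Phi_P(\nu^\beta)$ would make some $\unknown$ atom known: either a new atom of $T_P(\nu^\beta)$, exhibiting a rule with a $\true$ body and so enabling a $\true$-step, or a new atom of $U_P(\nu^\beta)$, in which case the $\unknown$ part of $U_P(\nu^\beta)$ is a non-empty unfounded set enabling a $\false$-step. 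Either case contradicts terminality, so $\nu^\beta=\Phi_P(\nu^\beta)$. Since $W$ is the $\leqp$-least fixpoint of $\Phi_P$ while soundness gives $\nu^\beta\leqp W$, minimality yields $W\leqp\nu^\beta$, and therefore $\nu^\beta=W$.

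I expect the main obstacle to be the bookkeeping in this last step rather than the fixpoint-theoretic conclusion. One must verify carefully that Denecker's three-valued, first-order reformulation of unfoundedness really does coincide with Van Gelder's greatest-unfounded-set operator on arbitrary bodies $\varphi$, and that the two terminality conditions---no applicable $\true$-step and no applicable $\false$-step---are \emph{exactly} the two ways in which $\Phi_P$ could strictly increase $\nu^\beta$. Establishing these correspondences precisely is the real work; once they are in place, monotonicity and least-fixpoint minimality close the argument.
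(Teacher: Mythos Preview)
The paper does not contain a proof of this theorem. It is stated in the preliminaries (Section~\ref{prel:lp}) explicitly as a result imported from \cite{Denecker:LPNMR07}; the appendix only proves the theorems original to this paper (Theorems~\ref{th:samelim}, \ref{th:uniquepos}, \ref{th:uniquegen}, \ref{semcorr}, \ref{posstrat}, \ref{th:strat}), and Theorem~1 is not among them. So there is nothing here to compare your proposal against.

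That said, your outline is a sensible reconstruction of the kind of argument one would expect in \cite{Denecker:LPNMR07}: a soundness invariant $\nu^\alpha\leqp W$ by transfinite induction, followed by showing that terminality forces the limit to be a fixpoint of the Van~Gelder operator, and then closing via least-fixpoint minimality. The one place I would urge extra care is your handling of the $\false$-step in the soundness lemma. You write that ``every unfounded set with respect to $\nu^\alpha$ is unfounded with respect to the more precise $W$'' and conclude $U\subseteq U_P(W)$. But note that the paper's definition of an unfounded set requires each $p\in U$ to be \emph{unknown} in the current interpretation; when you pass to $W$ those atoms may no longer be unknown, so $U$ need not literally be an unfounded set \emph{with respect to} $W$ in this paper's sense. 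What you actually need (and essentially argue) is the weaker fact that the body-falsity condition transfers, which suffices to place $U$ inside Van~Gelder's greatest unfounded set and hence among the $\false$ atoms of $W$. Making that distinction explicit would tighten the argument; otherwise the structure is sound.
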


In certain logic programming variants, such as abductive logic
programs \cite{Kakas93a} and ID-logic
\cite{Denecker/Ternovska:2007:TOCL}, a distinction is made between
predicates that are {\em defined} by the program and predicates that
are left {\em open}.  The set of defined predicates must contain at
least those predicates that appear in the heads of rules of the
program.  This distinction is similar to that between endogenous and
exogenous random variables, which is common in probabilistic modeling.  It
is straightforward to generalize the well-founded semantics to this
case.  Given an interpretation $O$ of the open predicates, we define a
{\em well-founded induction of $P$ in $O$} by the same inductive definition as for ordinary well-founded inductions, only we now have as a base case that $\nu^0$ should be the least precise partial Herbrand
interpretation {\em that extends $O$}.  It is easy to see that each $\nu^i$ in such a well-founded induction in $O$ in fact extends $O$ and also that if there are no open predicates, this definition simply coincides with the original one.
The {\em well-founded model of $P$ in $O$} is then the limit of any
terminal well-founded induction of $P$ in $O$.

\section{A logic of probabilistic causal laws}
\label{sec:CP-logic}

Our goal in this section is to define a language for representing probabilistic causal laws.  Before going into the mathematical details, we first outline the general picture.  

To represent knowledge in a logical language, the first thing that is needed is a suitable vocabulary.  Usually in logical modeling, this vocabulary is assumed to be such that each possible state of the domain of discourse corresponds to an interpretation for it.  In Shafer's probability trees, possible states of the domain are represented by nodes of the tree.  To link these two formal settings, our semantics will therefore consider probability trees in which each node corresponds to an interpretation for a given vocabulary.  

As we introduced the concept in Section \ref{sec:intro}, a probabilistic causal law states the {\em cause} and {\em possible effects} of a particular event or class of events.  The cause specifies in which nodes of the tree the event might happen, i.e, it is some property of the domain of discourse such that the event can happen in precisely those states of the domain in which this property holds.  The natural thing, therefore, is to represent such a cause by a first-order formula $\phi$, whose meaning is that the event might happen in those states $s$ of a probability tree for which the associated interpretation $\ci(s)$ is such that $\ci(s) \models \phi$.  

Each event that happens makes a transition from a node $s$ of a probability tree to one of the children $s'$ of $s$.  The description of the effects of such an event should specify how it will affect the state of the domain, i.e., what the interpretations $\ci(s')$ associated to the children $s'$ of $s$ should be.  There are many conceivable ways of representing such knowledge, but in this paper we stick to a very simply one: we assume that each possible effect of an event corresponds to a single ground atom $P(\vec{t})$ of our vocubalary, such that the interpretation $\ci(s')$ corresponding to the new state $s'$ is indentical to the interpretation $\ci(s)$, apart from the fact that $P(\vec{t})$ is now true.  We choose this simple representation for two reasons.  First, the aim of our exercise is to come up with a semantics that formalises probabilistic causal laws in a way that clearly coincides with our intuitions about the meaning of such laws.  Keeping the representation of effects simple helps to achieve the desired clarity.  Second, we are not just interested in this language for its own sake, but also because we want to use it to explain the meaning of certain probabilistic logic programming statements.  Our simple representation of effects will also serve to elucidate this link to logic programming.

\subsection{Syntax}

In this section, we formally define the language of CP-logic.  Let us fix a finite relational vocabulary, consisting of a set of predicate symbols and a set of constants.   We assume that the predicates of our vocabulary are split into a set of {\em endogenous} predicates and a set of {\em exogenous} ones.  The idea behind this distinction is of course that the endogenous predicates should describe things that are internal to the causal process being modeled, while the exogenous predicates describe things external to it.



A {\em causal probabilistic law}, or {\em CP-law} for short, is a statement of the form:
\begin{equation} \forall\vec{x}\ (A_1:\alpha_1)\lor\cdots\lor (A_n:\alpha_n) \leftarrow \phi,\label{cplaw}
\end{equation}
where the $\alpha_i$ are non-zero probabilities with $\sum\alpha_i \leq 1$, $\phi$ is a first-order formula and the $A_i$ are atoms, such that  the universally quantified tuple of variables $\vec{x}$ contains all free variables in $\phi$ and the $A_i$.  Moreover, the predicate symbol of each of the atoms $A_i$ should be an endogenous predicate.

Such a CP-law is read as: 
\begin{quote}``For each $\vec{x}$, $\phi$ causes an event whose effect is that at most one of the $A_i$ becomes true; for each $i$, the probability of $A_i$ being the effect of this event is $\alpha_i$.''
\end{quote}

If the causal law has a deterministic effect, i.e., it causes some atom $A$ with probability 1, we also write $A \leftarrow \phi$ instead of $(A:1) \leftarrow \phi$.  We allow the precondition $\phi$ to be absent, meaning that the event is vacuously caused.  In this case, the causal law is called {\em unconditional} and we omit the `$\leftarrow$'-symbol as well.  If the tuple $\vec{x}$ of variables is empty, we call the causal law {\em ground}.  We remark that the precondition $\phi$ of such a ground causal law may still contain variables, as long as they are all bound by some quantifier in $\phi$.

A {\em CP-theory} is a finite multiset\footnote{Example \ref{multiset} explains why we consider multisets instead of sets.} of CP-laws.  For now, we will restrict attention to CP-theories in which all formulas  $\phi$ are  positive, i.e., they do not contain negation.  Afterwards, Section \ref{sec:negation} will examine how negation can be added.

\begin{example} In about 25\% of the cases, {\em syphilis} causes a neuropsychiatric disorder called {\em general paresis}, and in fact, syphilis is the only cause for paresis. This can be modeled as follows: 
\begin{align}
(Paresis : 0.25) & \leftarrow Syphilis. \label{paresis}
\end{align}
where $Syphilis$ is an exogeneous predicate. This example illustrates the difference between  causation and material implication.  Indeed, because syphilis is the only cause for paresis, observing that a patient has paresis implies that he must also have syphilis, i.e., the material implication $Paresis \supset Syphilis$ holds.  So, in this example, the directions of causation and material implication are precisely opposite.
\end{example}

\begin{example} Our running example for this section will also be a medical example.  Pneumonia might cause
  angina with probability $0.2$.  Vice versa, angina might cause
  pneumonia with probability $0.3$.  A bacterial infection can cause
  either pneumonia (with probability $0.4$) or angina (with
  probability $0.1$). We consider bacterial infection as exogeneous.
  \label{ex:pneumonia}
\begin{align}
(Angina : 0.2) & \leftarrow Pneumonia. \label{pa}\\
(Pneumonia : 0.3) & \leftarrow Angina. \label{ap}\\
(Pneumonia : 0.4) \lor (Angina: 0.1) & \leftarrow Infection. \label{infpa}
\end{align}
\end{example}

\begin{example} \label{multiset} A CP-theory is a {\em multiset} of CP-laws, which means that it may contain several instances of the same event.  To illustrate this, consider a
  variant of the above problem in which the patient comes into contact with two different sources of infection, each of which might cause him to become infected with a probability of  $0.1$. To model this, we can add the following multiset of two unconditional events
  to the theory of Example \ref{ex:pneumonia}:
\begin{align}
(Infection : 0.1).\\
(Infection : 0.1).
\end{align}
\end{example}

We now define some notation to refer to different components of a ground CP-law.  The {head} $head(r)$ of a rule $r$ of form \eqref{cplaw} is the set of all pairs $(A_i,\alpha_i)$ appearing in the description of the effects of the event; the {body} of $r$, $body(r)$, is its precondition $\phi$. By $head_{At}(r)$  we denote the set of all atoms $A_i$ for which there exists an $\alpha_i$ such that $(A_i,\alpha_i) \in head(r)$.   Similarly, by $body_{At}(r)$ we will denote the set of all atoms $A$ which ``appear''\footnote{More formally, we use $body_{At}(r)$ to denote $At(body(r))$, where $At$ is the mapping from sentences to sets of ground atoms, that is inductively defined by:
\begin{itemize}
\item For $Q(\vec{t})$ a ground atom, $At(Q(\vec{t})) =  \{Q(\vec{t})\}$;
\item For $\phi \circ \psi$, with $\circ$ either $\lor$ or $\land$, $At(\phi \circ \psi)$ is defined as $At(\phi) \cup At(\psi)$;
\item For $\lnot \phi$, $At(\lnot \phi) = At(\phi)$;
\item For $\Theta x\ \phi$, with $\Theta$ either $\forall$ or $\exists$, $At(\Theta x\ \phi) = \cup_{t\in H_U(\Sigma)} At(\phi[x/t])$, where $H_U(\Sigma)$ denotes the Herbrand universe for the vocabulary $\Sigma$.
\end{itemize}
} in $body(r)$.  For the above Example \ref{ex:pneumonia}, if $r$ is the CP-law \eqref{infpa}, then $head(r) = \{ (Pneumonia, 0.4), (Angina, 0.1)\}$, $head_{At} = \{Pneumonia, Angina\}$, $body(r) = Infection$ and $body_{At}(r) = \{Infection\}$.

We will call a CP-law $E \leftarrow \phi$ a {\em rule} if we want to emphasize that we are referring to a syntactical construct.

\subsection{Semantics}

\label{cpsem}

This section defines the formal semantics of CP-logic.  We will restrict attention to Herbrand interpretations, i.e., we consider only interpretations whose domain is the set of constants of the theory and which interpret each constant as itself.  This restriction is made for two reasons: it simplifies the presentation, and it is also what is usually done in (probabilistic) logic programming.  However, it is easy to extend all our definitions and results to arbitrary domains.

We view a non-ground CP-law $\forall \vec{x}\ r$ as an abbreviation for the set of all ground CP-laws $r[\vec{x}/\vec{t}]$ that result from replacing the variables $\vec{x}$ by a tuple $\vec{t}$ of ground terms in vocabulary $\Sigma$.  For instance, if we wanted to consider multiple people in Example \ref{ex:pneumonia}, we might include constants $\{John, Mary\}$ in our vocabulary $\Sigma$ and write the non-ground rule
 \[\forall x\ (Angina(x): 0.2) \leftarrow Pneumonia(x),\] 
to abbreviate the two CP-laws
\begin{align*} 
(Angina(John): 0.2) & \leftarrow Pneumonia(John).\\ 
(Angina(Mary): 0.2) & \leftarrow Pneumonia(Mary).
\end{align*}
Because CP-theories are finite, the use of such abbreviations only makes sense in the context of a finite domain, i.e., when the vocabulary does not generate an infinite number of terms.  By restricting attention to finite relational vocabularies, we ensure that this is the case.
  
In our formal treatment of CP-logic, we will never consider non-ground rules, but always assume that these have already been expanded into a finite set of ground CP-laws.  When using such non-ground rules in examples, we will implicitly assume that predicates and constants have been appropriately typed, in such a way as to avoid instantiations that are obviously not intended.  We will also allow ourselves to use arithmetic function symbols, such as +/2 and -/2, and assume that the grounding replaces terms made from these symbols by numerical constants in the appropriate way.

As already explained, our basic semantical object will be that of a probability tree in which the nodes correspond to interpretations.

\begin{definition}[probabilistic $\Sigma$-process]
Let $\Sigma$ be a vocabulary.    A {\em probabilistic $\Sigma$-process} $\pts$ is a pair $\langle T; \ci\rangle$, where:
\begin{itemize}
\item $T$ is a tree structure, in which each edge is labeled with a probability, such that for every non-leaf node $s$, the probabilities of all edges leaving $s$ sum up to precisely $1$;
\item $\ci$ is a mapping from nodes of $T$ to Herbrand interpretations for $\Sigma$.
\end{itemize} 
\end{definition}

In a probability tree, we can associate to each node $s$ the probability $\cp(s)$ of a random walk in the tree, starting from its root,  passing through $s$.  Indeed, for the root $\bot$ of the tree, $\cp(\bot) = 1$ and for each other node $s$, $\cp(s) = \prod_i \alpha_i$ where the $\alpha_i$ are all the probabilities associated to edges on the path from the $\bot$ to $s$.  Essentially, the mapping $\cp$ contains all the information that is present in the labeling of the edges and vice versa.  To ease notation, we will sometimes take the liberty of identifying a probabilistic $\Sigma$-process $\langle T; \ci\rangle$ with the triple $\langle T; \ci; \cp\rangle$ and ignoring the labels on the edges of $T$. 

Each probabilistic $\Sigma$-process now induces an obvious probability distribution over the states in which the domain described by $\Sigma$ might end up.  

\begin{definition}[$\pi_\pts$]  Let $\Sigma$ be a vocabulary and $\pts = \langle T; \ci; \cp\rangle$ a probabilistic $\Sigma$-process.  By $\pi_\pts$ we denote the probability distribution that assigns to each Herbrand interpretation $I$ of $\Sigma$ the probability $\sum_{s \in L_\pts(I)} \cp(s)$, where $L_\pts(I)$ is the set of all leaves $s$ of $T$ for which $\ci(s) = I$.
\end{definition}

Like any probability distribution over interpretations, such a $\pi_\pts$ also defines a set of {\em possible worlds}, namely that consisting of all $I$ for which $\pi_\pts(I) > 0$.   If all the probabilities $\cp(s)$ are non-zero, then this is simply the set of all $\ci(l)$ for which $l$ is a leaf of $\pts$.

We now want to relate the transitions in such a probabilistic $\Sigma$-process to the events described by a CP-theory.

\begin{definition}[rules firing]
Let $\Sigma$ be a vocabulary, $C$ a CP-theory in this vocabulary and $\pts$ a  probabilistic $\Sigma$-process.  Let $r \in C$ be a CP-law of the form:
\[  (A_1:\alpha_1)\lor\cdots \lor (A_n:\alpha_n) \leftarrow \phi.\]
We say that $r$ {\em fires} in a node $s$ of $\pts$ if $s$ has $n+1$ children $s_1,\ldots,s_{n+1}$, such that:
\begin{itemize}
\item For all $1 \leq i \leq n$, $\ci(s_i) = \ci(s) \cup \{A_i\}$ and the probability of edge $(s,s_i)$ is $\alpha_i$;
\item For $s_{n+1}$, $\ci(s_{n+1}) = \ci(s)$ and the probability of the edge $(s,s_{n+1})$ is $1 -\sum_i \alpha_i$.
\end{itemize}
\end{definition}

For simplicity, we will omit edges labeled with a probability of zero; this does not affect any of the following material.

\newcommand{\rulesleft}{\curly{R}}

\newcommand{\legende}{{\def\objectstyle{\scriptstyle}\def\labelstyle{\scriptscriptstyle}

\xymatrix@C=1pt@R=0.3cm{
&\ci(s) \ar[ld]_{\alpha_1} \ar[rd]^{\alpha_n} \ar[d]|(.3){\ce(s)} \\
\ci(s_1) & \cdots & \ci(s_n)
}}}
\newlength{\llength}
\newlength{\lllength}
\settoheight{\llength}{\legende}
\settowidth{\lllength}{\legende}

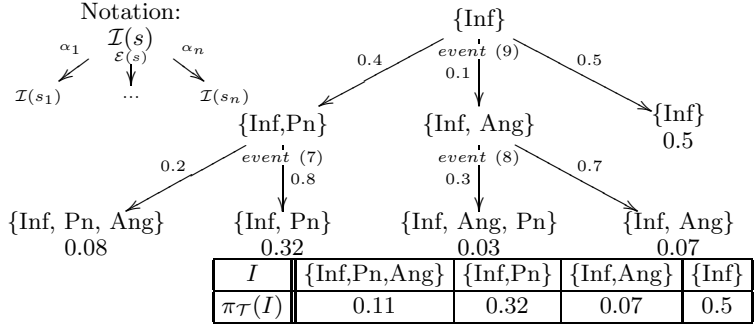
\begin{figure}
\begin{center}
{\def\objectstyle{\scriptstyle}\def\labelstyle{\scriptscriptstyle}
\xymatrix@C=1pt@R=0.3cm{
 & \txt{Notation:\\ $\ci(s)$}
\ar[ld]_{\alpha_1} \ar[rd]^{\alpha_n} \ar[d]|(.45){\ce(s)} \\
\ci(s_1) & \cdots & \ci(s_n)}}
\vspace{-\the\llength}
\vspace{-1.3cm}
\xymatrix{
&&\{\text{Inf}\} \ar[ld]_{0.4} \ar[rd]^{0.5} \ar[d]_{0.1}|(.3){event~\eqref{infpa}} \\
&\ar[ld]_{0.2} \ar[d]^{0.8}|(.3){event~\eqref{pa}} \{\text{Inf,Pn}\}  &\{\text{Inf, Ang}\} \ar[d]_{0.3}|(.3){event~\eqref{ap}} \ar[rd]^{0.7} & *\txt{\{\text{Inf}\}\\0.5}\\
*\txt{$\{\text{Inf, Pn, Ang}\}$ \\ 0.08 }& *\txt{$\{\text{Inf, Pn}\}$\\ 0.32}& *\txt{$\{\text{Inf, Ang, Pn}\}$\\ 0.03}& *\txt{$\{\text{Inf, Ang}\}$ \\0.07 }\\
}
\begin{oldtabular}{|c||c|c|c|c|}
\myhline
$I$ & $\{\text{Inf,Pn,Ang}\}$ & $\{\text{Inf,Pn}\}$ & $\{\text{Inf,Ang}\}$ & $\{\text{Inf}\}$\\
\myhline
$\pi_\pts(I)$ & 0.11 & 0.32 & 0.07 & 0.5\\
\myhline
\end{oldtabular}
\end{center}
\caption{A process $\pts$ for Example \ref{ex:pneumonia} and its distribution $\pi_\pts$. \label{fig:pneumonia}}
\end{figure}

This definition now allows us to link the transitions in a probabilistic $\Sigma$-process $\pts$ to the events of a CP-theory $C$.  Formally, we will consider a mapping $\ce$ from each non-leaf node $s$ of $\pts$ to an associated CP-law $r \in C$.  Because the same ground CP-law should fire at most once in each branch, the following definition will also consider, for a node $s$, the set of all CP-laws that have not yet fired in $s$, i.e., the set of all $r \in C$ for which there does not exist an ancestor $s'$ of $s$ such that $\ce(s') = r$.  We will denote this set as $\rulesleft_\ce(s)$.

\begin{definition}[execution model--positive case] \label{execpos}
Let $C$ be a positive CP-theory and $X$ an interpretation of the exogenous predicates.  A probabilistic $\Sigma$-process $\pts = \langle T;  \ci\rangle$ is an {\em execution model} of $C$ in context $X$, written $\pts \models_X C$, iff there exists a mapping $\ce$ from the non-leaf nodes of $\pts$ to $C$, such that:

\begin{itemize}
\item For the root $\bot$ of $\pts$, $\ci(\bot) = X$;
\item In each non-leaf node $s$, a CP-law $\ce(s) \in \rulesleft_\ce(s)$ fires, such that its precondition is satisfied in $s$, i.e., $\ci(s) \models body(\ce(s))$;
\item For each leaf $l$ of $\pts$, there are no CP-laws $r \in \rulesleft_\ce(l)$ for which $\ci(l) \models body(r)$.
\end{itemize} 
 If there are no exogenous predicates, we simply write $\pts \models C$.
\end{definition}

Example \ref{ex:pneumonia} has one execution model for every specific context $X$; the process for $X =\{ Infected\}$ is depicted in Figure \ref{fig:pneumonia}.  As we showed with the window breaking example in Section \ref{sec:intro}, there also exist thoeries which allow multiple execution models for a given context.  However, all of these execution models must then generate the same probability distribution over their final states.

\begin{theorem}[Uniqueness---positive case] \label{th:uniquepos} Let $C$ be a positive CP-theory.  If $\pts_1$ and $\pts_2$ are both execution models of $C$, then $\pi_{\pts_1} = \pi_{\pts_2}$.
\end{theorem}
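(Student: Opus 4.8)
The plan is to prove, by induction on the number of CP-laws in the theory, a uniform statement that is a little stronger than the theorem: the distribution induced by an execution model depends only on the theory and on the interpretation labelling the root, not on the particular model. To state this I first generalise Definition~\ref{execpos} so that the root may carry an \emph{arbitrary} Herbrand interpretation $J$ rather than only an interpretation $X$ of the exogenous predicates: the root condition becomes $\ci(\bot)=J$ while the firing and termination conditions are unchanged. Writing $\pi_{C,J}$ for the distribution of such an execution model of $C$ from $J$, the claim is that $\pi_{C,J}$ is well defined (i.e.\ the same for all such models), and the theorem is the special case $J=X$.

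Two preliminary facts drive the argument. First, since $C$ is positive, each $body(r)$ is negation-free and hence monotone under enlarging the interpretation; as the interpretations $\ci(s)$ only grow along any branch (each event adds at most one atom), a law whose precondition holds at a node has it holding at every descendant. Thus ``enabled'' is a stable property, and the termination condition says precisely that a node is a leaf iff no unfired enabled law remains; in particular the root is a leaf iff no law of $C$ is enabled at $J$, the degenerate case where $\pi_{C,J}$ is concentrated on $J$. Second, every branch has length at most $|C|$ (each ground law fires at most once per branch), so the trees are finite, the induction on $|C|$ is well-founded, and a model firing any prescribed enabled law first can be constructed greedily.

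For the inductive step I would peel off the root. If $\pts$ fires $r:(A_1:\alpha_1)\lor\cdots\lor(A_n:\alpha_n)\leftarrow\phi$ at the root, then the subtree below each child $s_i$ is a generalised execution model of the smaller theory $C\setminus\{r\}$ from $\ci(s_i)$ --- the bookkeeping being that $r$, having fired on the path, is absent from $\rulesleft_\ce$ throughout the subtree, so the firing and termination conditions hold verbatim for $C\setminus\{r\}$. By the induction hypothesis each subtree distribution is the well-defined object $\pi_{C\setminus\{r\},\,\ci(s_i)}$, and summing over the children with their edge probabilities yields
\[\pi_\pts \;=\; \sum_{i=1}^{n}\alpha_i\,\pi_{C\setminus\{r\},\,J\cup\{A_i\}}\;+\;\Bigl(1-\sum_i\alpha_i\Bigr)\,\pi_{C\setminus\{r\},\,J}.\]
Call this right-hand side $F(r)$. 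It already shows that $\pi_\pts$ depends only on $C$, $J$ and the first law fired, so all that remains is to prove $F(r)=F(r')$ for any two laws $r,r'$ enabled at $J$.

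This commutation is the crux, and I expect it to be the main obstacle. The idea is to expand $F(r)$ and $F(r')$ one further step. By monotonicity, $r'$ (say with head atoms $B_k$ of probability $\alpha'_k$) remains enabled at every $J\cup\{A_i\}$, and $r'\in C\setminus\{r\}$; so by the induction hypothesis I may compute each $\pi_{C\setminus\{r\},\,\cdot}$ appearing in $F(r)$ using a model that fires $r'$ first, turning $F(r)$ into a double sum over the theory $C\setminus\{r,r'\}$ and starting interpretations $J\cup\{A_i\}\cup\{B_k\}$ (together with the terms where one or both laws take their ``no effect'' alternative). The symmetric expansion of $F(r')$ by firing $r$ produces a double sum that matches term by term: the probabilities factor as products $\alpha_i\alpha'_k$, and the starting interpretations coincide because set union is commutative, $J\cup\{A_i\}\cup\{B_k\}=J\cup\{B_k\}\cup\{A_i\}$. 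Hence $F(r)=F(r')$, closing the induction; taking $J=X$ gives $\pi_{\pts_1}=\pi_{\pts_2}$. The points needing care are the generalisation of the root label from an exogenous context to an arbitrary interpretation, the verification that the subtrees really are execution models of $C\setminus\{r\}$, and making the two-step expansions line up exactly, including the ``no effect'' branches.
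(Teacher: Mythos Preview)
Your argument is correct, and it follows a genuinely different route from the paper.

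The paper does not prove Theorem~\ref{th:uniquepos} directly. Instead, it establishes the equivalence with the LPAD semantics (Theorem~\ref{semcorr}): for every execution model $\pts$ of $C$ and every interpretation $J$, $\pi_\pts(J)=\mu_C(J)$, where $\mu_C$ is defined via $C$-selections and well-founded models of the instances $C^\sigma$. Since $\mu_C$ is defined without reference to any particular execution model, uniqueness of $\pi_\pts$ drops out as a corollary. The technical core is Proposition~\ref{prop:wfm:leaf}, which shows that the interpretation at each leaf $l$ is the well-founded model of every instance $C^\sigma$ with $\sigma$ extending the partial selection $\sigma(l)$ determined by the path to $l$; the leaves then partition the selections, and summing probabilities gives $\pi_\pts=\mu_C$.

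Your approach, by contrast, is a self-contained inductive commutation argument: peel off the root event, invoke the induction hypothesis on $C\setminus\{r\}$, and show that the resulting expression $F(r)$ is independent of which enabled rule $r$ was fired first by expanding one level further and using commutativity of set union. This is more elementary and needs none of the well-founded-model machinery. What you lose is the characterisation of $\pi_C$ itself: the paper's route simultaneously identifies $\pi_C$ with the instance-based LPAD semantics $\mu_C$, which is one of the paper's main results and the bridge to probabilistic logic programming. Your proof shows the distribution is unique but does not say what it equals in closed form. Both are valid; the paper's detour is longer but buys the logic-programming connection for free.
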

\begin{proof}
Proof of this theorem can be found in Section \ref{prf:lpads}.
\end{proof}

This theorem shows that knowing all the probabilistic causal laws of a domain gives enough information to predict a single probability distribution over the final states that this domain might reach.   This result is important for two reasons.  

First, it suggests an appealing explanation for why causality is such a useful and important concept:  causal information tells you just enough about the behaviour of a probabilistic process to be able to predict its final outcome in every possible context, while allowing irrelevant details to be ignored.  As such, it offers a compact and robust representation of the class of probability distributions that can result from such a process.  Second, in our construction of CP-logic, we have started from Shafer's dynamic analysis of causality, using the probability tree as a basic semantic object.  In this respect, our approach differs from that of causal Bayesian networks \cite{pearl:book}, in which causal information is viewed more statically, with probability distributions as basic semantical objects.  The above theorem relates these two views, because it allows us to not only view a CP-theory as describing a class of processes, but also as defining a unique probability distribution.

\begin{definition}[$\pi_C^X$] Let $C$ be a CP-theory and $X$ an interpretation for the exogenous predicates of $C$.  By $\pi_C^X$, we denote the unique probability distribution $\pi_\pts$, for which $\pts \models_X C$.  If there are no exogenous predicates, we simply write $\pi_C$.
\end{definition}

A  CP-theory can be viewed as mapping each interpretation for the exogenous predicates to a probability distribution over interpretations of the endogenous predicates or, to put it another way, as a conditional distribution over interpretations of the endogenous predicates, given an interpretation for the exogenous predicates.

\begin{definition}[models of a CP-theory] Let $C$ be a CP-theory and $\pi$ a probability distribution over interpretations of all the predicates of $C$.  $\pi$ is a {\em model} of $C$, denoted $\pi \models C$ iff for each interpretation $X$ of the exogenous predicates with $\pi(X) > 0$ and each interpretation $J$ of the endogenous predicates, $\pi(J \mid X) = \pi_C^X(J)$.  
\end{definition}

If a CP-theory $C$ has no exogenous predicates, then there is a unique $\pi$ for which $\pi \models C$ and this is, of course, simply the distribution $\pi_C$.

Having defined this formal semantics for CP-logic, it is natural to ask how the causal interpretation that we have informally attached to expressions in our language is reflected in it.  We see that our semantics essentially consists of the following two causal principles:
\begin{itemize}

\item The principle of {\em universal causation} states that all changes to the state of the domain must be triggered by a causal law whose precondition is satisfied.
\item The principle of {\em sufficient causation} states that if the precondition to a causal law is satisfied, then the event that it triggers must eventually happen.
\end{itemize}

Together with our decision to use Shafer's probability trees as our basic semantical objects and the particular  representation  that we have chosen for probabilistic causal laws, these two principles essentially determine our logic completely---at least, in the positive case.  In the following sections, we turn our attention to the question of how to extend our definitions to the case where negation can appear in the precondition of a CP-law.  However, this requires us to first discuss in more detail a particular modeling methodology for CP-logic.

\section{Modeling more complex processes in CP-logic} \label{sec:complex}

In the formal semantics of CP-logic, the interpretations $\ci(s)$ associated to nodes $s$ of the probability trees play an important role.  Indeed, if we forget for a moment the restriction that each causal law can fire at most once in each branch, then the interpretation $\ci(s)$ completely determines which of the causal laws can fire in $s$.   In our account of CP-logic so far, we have suggested that the logical vocabulary $\Sigma$ of a CP-theory be chosen in such a way that possible states  of the domain of discourse correspond to (Herbrand) interpretations for $\Sigma$.
However, this assumption restricts the kind of causal laws that can be represented in at least two different, but related, ways.  First, it means that we can only describe events that are caused by some property of the {\em current} state $s$.   In particular, it is not possible to say that an event is caused by something which happened {\em previously}, but no longer has any visible effect on the current state.  Second,  since the interpretations $\ci(s)$ grow monotonically throughout a branch, i.e., no atoms are ever removed  from such an interpretation, it also means that we actually cannot even describe events whose effects manifest themselves in some state, but then disappear again in a future state.  The following example illustrates these limitations of the view that an interpretation $\ci(s)$ represents precisely the state of the domain at node $s$.

\begin{example}  In 10\% of the cases, pneumonia causes permanent lung
damage, which persists after the pneumonia itself has disappeared.  Let us also assume that the probability of getting pneumonia
 is $0.3$. One attempt to model this is as
follows:
\begin{align} (LungDamage: 0.1) &\leftarrow Pneumonia. \label{lung}\\
(Pneumonia : 0.3)&.
\end{align}
The problem with this theory is that, under the natural interpretation
of the predicates, it violates the assumptions made by CP-logic: after pneumonia has been caused and has in turn lead to
permanent lung damage, it might go away again.  As such, it is no surprise that, according to
the formal semantics of this theory, the probability of a
patient having permanent lung damage and no pneumonia is zero, while in
reality, this situation is perfectly possible.

 There is however a simple solution to this problem---at least, if we
are prepared to refine our informal interpretation of the atom
$Pneumonia$.  Instead of interpreting this atom as representing the real-world property that ``the patient has pneumonia'', we can also interpret
it as representing the property that ``at some point in time, the patient has had pneumonia''.  It is
obvious that this now {\em is} a property that, once initiated, will
forever persist.  The CP-law \eqref{lung} now reads as: ``if the
patient has, at some point, had pneumonia, then this causes him to
have lung damage with probability 0.1.''  According to this reading, it is now only the case that it is impossible for a patient to have lung damage if he has not {\em at some point in time} had pneumonia, which is of course a
conclusion that should follow from our problem statement.
\end{example}

To fix this example, we had to subtly change the correspondence between the states
of the formal execution model and the states of the real world: whereas previously, each of our formal
states precisely matched one state of the real world, it is now the
case that a formal state actually represents both the state of the world at some particular time {\em and} also certain information about the history of the world up to that time.  Taking this idea further actually allows
us to describe processes in considerable temporal detail, as the
following example illustrates.


\begin{example} \label{ex:pneumtime} A patient is admitted to hospital
  with pneumonia and stays there for a number of days.  Each day,
  the pneumonia might cause him to suffer chest pain on that
  particular day with probability $0.6$.  With probability $0.8$, a
  patient who has pneumonia on one day still has pneumonia the next
  day.  
\end{example}

On the one hand, this example describes a progression through a
sequence of days.  On the other hand, it also describes
events that takes place entirely during one particular day.  In
general, a process of this kind will look something like Figure
\ref{global}: the global structure of the process is a succession
between different time points and, at each particular time point, a
local process might take place. 
\begin{figure}
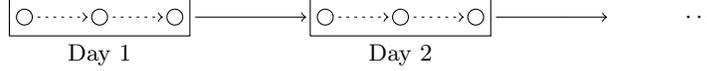

\begin{center}
\begin{pgfpicture}{0cm}{-0.4cm}{9cm}{0.25cm}
\pgfsetendarrow{\pgfarrowto}
\pgfnodecircle{n}[stroke]{\pgfxy(0,0)}{3pt}
\pgfnodecircle{m}[stroke]{\pgfxy(1,0)}{3pt}
\pgfnodecircle{o}[stroke]{\pgfxy(2,0)}{3pt}
\pgfnoderect{d1}[stroke]{\pgfnodecenter{m}}{\pgfxy(2.4,0.5)}
\pgfnodecircle{p}[stroke]{\pgfxy(4,0)}{3pt}
\pgfnodecircle{q}[stroke]{\pgfxy(5,0)}{3pt}
\pgfnodecircle{r}[stroke]{\pgfxy(6,0)}{3pt}
\pgfsetdash{{1pt}{2pt}}{0cm}
\pgfnodesetsepstart{2pt}
\pgfnodesetsepend{2pt}
\pgfnodeconnline{n}{m}
\pgfnodeconnline{m}{o}
\pgfnodeconnline{p}{q}
\pgfnodeconnline{q}{r}
\pgfsetdash{{1pt}{0pt}}{0cm}
\pgfnoderect{d2}[stroke]{\pgfnodecenter{q}}{\pgfxy(2.4,0.5)}
\pgfnodeconnline{d1}{d2}
\pgfnoderect{d3}[virtual]{\pgfxy(9,0)}{\pgfxy(2.4,0.5)}
\pgfnodeconnline{d2}{d3}
\pgfputat{\pgfnodecenter{d3}}{\pgfbox[center,center]{$\cdots$}}
\pgfputat{\pgfrelative{\pgfnodecenter{d1}}{\pgfxy(0,-0.5)}}{\pgfbox[center,center]{Day 1}}
\pgfputat{\pgfrelative{\pgfnodecenter{d2}}{\pgfxy(0,-0.5)}}{\pgfbox[center,center]{Day 2}}
\end{pgfpicture}
\end{center}
\caption{A global process as a sequence of local processes. \label{global}}
\end{figure}

The  question now is how to model such a succession of states in  CP-logic.  A first important observation is that we now need to distinguish between the values of properties at different time points, i.e., we can no longer represent every relevant property by a single ground atom, but instead we need a ground atom for every pair of such a property and a time point. Typically, one would construct  a vocabulary by adding time as an argument to predicates, as is done in, e.g., the event calculus or situation calculus.  For instance, to describe Example \ref{ex:pneumtime}, we could construct a vocabulary which has the following ground atoms:
\begin{itemize}
\item Referring to day $1$: $\{Pneumonia(1), Chestpain(1)\}$;
\item Referring to day $2$: $\{Pneumonia(2), Chestpain(2)\}$;
\item \ldots
\item Referring to day $n$: $\{Pneumonia(n), Chestpain(n)\}$;
\end{itemize}
Of course, it might be equally possible to use some other representation, such as $Pneumonia(SecondDay)$ or $Pneumonia2$ instead of $Pneumonia(2)$.   With the above vocabulary, we can now model Example \ref{ex:pneumtime}.  We assume a fixed range $1..n$ of days, to ensure finiteness of the grounded theory.
\begin{align}
Pneumonia(1)&. \label{pn1}\\
\forall d\ (Pneumonia(d+1):0.8) &\leftarrow Pneumonia(d). \label{pnd}\\
\forall d\ (Chestpain(d): 0.6) & \leftarrow Pneumonia(d). \label{pncpd}
\end{align}
Here, the CP-laws described by \eqref{pnd} are of the kind that propagate from one time point to a later time point, whereas \eqref{pncpd} describes a class of ``instantaneous'' events, taking place entirely inside of a single time point.  Of course, whether a particular event is instantaneous depends greatly on which unit of time is being used: one can imagine that it makes a difference whether we measure time in seconds or in days.

According to the informal description of Example \ref{ex:pneumtime}, the intended model is the process shown in Figure \ref{fig:pneumtime}.  It can easily be seen that this is indeed an execution model of the above CP-theory.  We remark that this theory also has other execution models, which do not respect the proper ordering of time points, such as, e.g., the process in which all events caused by \eqref{pnd} happen before those caused by \eqref{pncpd}.  However, since these ``wrong'' processes all generate the same probability distribution as the intended process anyway, this is harmless.  

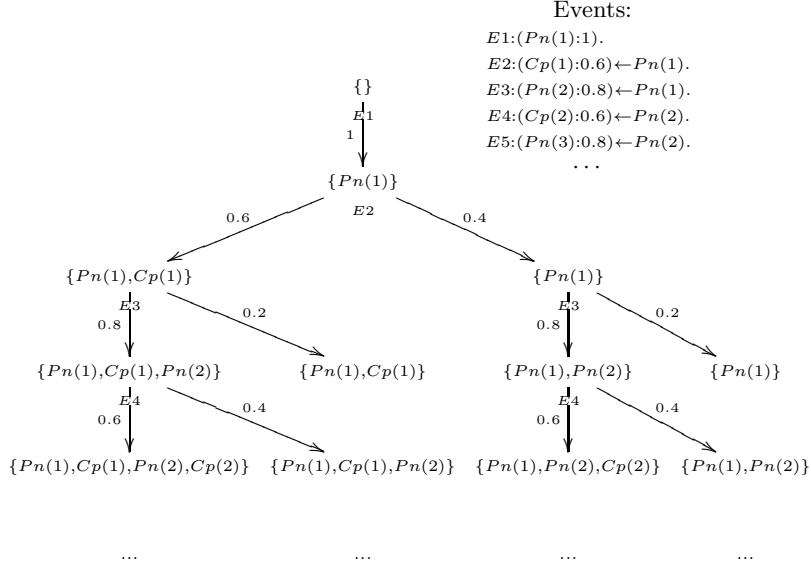
\begin{figure}
\begin{center}
{\def\objectstyle{\scriptstyle}\def\labelstyle{\scriptscriptstyle}
\xymatrix@C=1pt{
&\{\} \ar[d]_{1}|(.3){E1} \save[]+<3cm,0cm>*\txt{
Events:\\
$\scriptstyle E1: (Pn(1):1).\phantom{\leftarrow Pn(1.2)}$\\ 
$\scriptstyle E2: (Cp(1):0.6) \leftarrow Pn(1).$\\
$\scriptstyle E3: (Pn(2):0.8) \leftarrow Pn(1).$\\ 
$\scriptstyle E4: (Cp(2):0.6) \leftarrow Pn(2).$\\
$\scriptstyle E5: (Pn(3):0.8) \leftarrow Pn(2).$\\ 
$\cdots$
}\restore
\\
&\{Pn(1)\} \ar[ld]_{0.6} \ar[rd]^{0.4} \ar@{}[d]|(.3){E2} \\
\{Pn(1), Cp(1)\}\ar[d]_{0.8}|(.3){E3} \ar[rd]^{0.2}  && \{Pn(1)\} \ar[d]_{0.8}|(.3){E3} \ar[rd]^{0.2} \\
\{Pn(1), Cp(1), Pn(2)\}\ar[d]_{0.6}|(.3){E4} \ar[rd]^{0.4} & \{Pn(1), Cp(1)\} & \{Pn(1),Pn(2)\} \ar[d]_{0.6}|(.3){E4} \ar[rd]^{0.4} & \{Pn(1)\} \\
\{Pn(1), Cp(1), Pn(2), Cp(2)\} & \{Pn(1), Cp(1), Pn(2)\} & \{Pn(1), Pn(2), Cp(2)\}& \{Pn(1), Pn(2)\}\\
\cdots & \cdots & \cdots & \cdots
}}
\end{center}
\caption{Initial segment of the intended model of Example \ref{ex:pneumtime}. \label{fig:pneumtime}}
\end{figure}

We also observe that, again, the correspondence between the states of
the execution model and the states of the real world is less direct
than it was in the examples of Section \ref{cpsem}.  Indeed, now, a
state of an execution model contains a
trace of the entire evolution of the real world until a certain point in time. As
such, a leaf of the execution model now represents a complete
history of the world, whereas in the examples of Section \ref{cpsem}, it
only represented the final state of the process.


Let us now make the above discussion more formal.  We assume that,
when constructing the vocabulary $\Sigma$, we had in mind some function
$\lambda$ from the Herbrand base of $\Sigma$ to an interval $[0..n]
\subseteq \nat$, such that, in our desired interpretation of this
vocabulary, each atom $p$ refers to the state of some property at time
point $\lambda(p)$.  We call such a function a {\em timing} and
$\lambda(p)$ the {\em time} of atom $p$.  In the typical case of
predicates containing an explicit temporal argument, such a timing would simply map atoms onto this argument; for instance, in the case of the above example, we had in mind
the following timing $\lambda$:
\begin{itemize}
\item For each ground atom $Pneumonia(i)$, $\lambda(Pneumonia(i)) = i$;
\item For each ground atom $Chestpain(i)$, $\lambda(Chestpain(i)) = i$.
\end{itemize}

If we now look again at the CP-laws we wrote for this example, we
observe that, whenever there is an atom in the head of a CP-law $r$
that refers to the truth of some property at time $i$ and an atom in
the body of $r$ that refers to the truth of some property at time $j$,
it is the case that $i \geq j$.  This is of course not a coincidence.
Indeed, because, in the real world, causes precede effects, it should
be impossible that the cause-effect propagation described by a
CP-law goes backwards in time.  Note that it is also possible that $i = j$; in this case, the CP-law is instantaneous w.r.t.\ the granularity of time that is being used, i.e., it describes one of those events (such as (15) in Example 5) that takes place entirely within a single time point.  Another, perhaps more illustrative, example of an instantaneous CP-law is the statement that an increase in the current flowing through a resistor causes an increase in the voltage drop across it.  Here, the increased current {\em conceptually} precedes the increased voltage drop, but we would never expect to actually observe a temporal delay.

\begin{definition}[respecting a timing] \label{DefRespects} Let $\Sigma$ be a vocabulary.   A CP-theory $C$ respects  a timing  $\lambda$ iff, for every $r \in C$, if $h \in head_{At}(r)$ and $b \in body_{At}(r)$, then $\lambda(h) \geq \lambda(b)$. 
\end{definition}

Such a timing $\lambda$ also contains information about when events
might happen.  To be more concrete, if a CP-law $r$ fires at time
point $i$, then we would expect $i$ to lie somewhere between the
maximum of all $\lambda(b)$ for which $b \in body_{At}(r)$, and the
minimum of all $\lambda(h)$ for which $h \in head_{At}(r)$.  For a
rule $r$, we write $t_\lambda(r)$ to denote this interval, i.e., \[
t_\lambda(r) = [\max_{b \in body_{At}(r)}\lambda(b), \min_{h \in
  head_{At}(r)} \lambda(h)].\] Now, if we are constructing a CP-theory
with a particular timing $\lambda$ in mind, then the process we are
trying to model should be such that every CP-law $r$ that actually
fires does so at some time point $\kappa(r) \in t_\lambda(r)$.  We
will call such a mapping $\kappa$ from rule $r\in C$ to time points
$\kappa(r) \in t_\lambda(r)$ an {\em event-timing} of $\lambda$.  We
remark that if a CP-law $r$ is instantaneous, then the interval
$t_\lambda(r)$ will consist of a single time point and it is indeed
clearly at this time point that the CP-law should fire.

A timing $\lambda$ therefore imposes the following constraint on which
processes can be considered reasonable.

\begin{definition}[following a timing]\label{def:following} Let $\Sigma$ be a vocabulary with timing $\lambda$
  and let $C$ be a CP-theory that respects $\lambda$.  A probabilistic
  $\Sigma$-process $\pts$ {\em follows} $\lambda$ if there exists an
  event-timing $\kappa$ of $\lambda$ such that the CP-laws of $\pts$ fire in the order imposed by $\kappa$, i.e., if for all successors
  $s'$ of a node $s$, $\kappa(\ce(s')) \geq \kappa(\ce(s))$.  
  \label{lambdaproc}
\end{definition}

It can now be shown that for any timing $\lambda$ and any CP-theory
$C$ respecting $\lambda$, $C$ will have an execution model that
follows $\lambda$.

\begin{theorem} Let $C$ be a CP-theory respecting a timing $\lambda$.
  There exists an execution model $\pts$ of $C$, such that $\pts$
  follows $\lambda$. \label{posstrat}
\end{theorem}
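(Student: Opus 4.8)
The plan is to build the desired execution model explicitly, by firing rules in ``time layers'' dictated by a fixed event-timing. First I would choose the event-timing. Since $C$ respects $\lambda$, for every $r \in C$ we have $\max_{b \in body_{At}(r)} \lambda(b) \le \min_{h \in head_{At}(r)} \lambda(h)$, so the interval $t_\lambda(r)$ is non-empty and we may define a concrete event-timing by $\kappa(r) = \min_{h \in head_{At}(r)} \lambda(h)$, which lies in $t_\lambda(r)$ and handles the empty-body (unconditional) case uniformly. The whole construction will then process time points $i = 0, 1, \ldots, n$ in increasing order, and at layer $i$ it will only ever fire rules $r$ with $\kappa(r) = i$.

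Concretely, I would construct $\pts = \langle T; \ci \rangle$ together with its labelling $\ce$ by recursion over a growing frontier of leaves, starting from a root $\bot$ with $\ci(\bot) = X$. To process layer $i$ at a current frontier node $s$: as long as there exists a rule $r \in \rulesleft_\ce(s)$ with $\kappa(r) = i$ and $\ci(s) \models body(r)$, pick one such $r$, let it fire at $s$ (creating the children prescribed by the firing definition and setting $\ce(s) = r$), and recurse into each child; when no such $r$ remains, declare the branch ready to advance to layer $i+1$. Termination of each branch is immediate: $C$ is finite and each rule fires at most once per branch, so every branch has length at most $|C|$. The resulting $T$ is a finite tree whose edge probabilities sum to $1$ at every internal node, so $\pts$ is a genuine probabilistic $\Sigma$-process, and by construction $\kappa(\ce(\cdot))$ is non-decreasing along every branch, i.e.\ $\pts$ follows $\lambda$.

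The main obstacle, and the crux of the argument, is a \emph{freezing lemma} showing that processing layers in increasing order never needs to be revisited. Because $C$ respects $\lambda$, every head atom of a rule $r$ has time $\ge \kappa(r)$ and every body atom has time $\le \kappa(r)$. Hence firing a layer-$i$ rule adds only atoms of time $\ge i$, and more strongly, any rule with $\kappa(r) > i$ adds only atoms of time $> i$. I would prove by induction on $i$ that, once layer $i$ has been saturated along a branch, (a) the set of true atoms of time $\le i$ is fixed in all descendants, and (b) every rule $r$ with $\kappa(r) \le i$ either has already fired on the branch or has $\ci(\cdot) \not\models body(r)$, and this persists. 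The inductive step uses exactly the time bounds above: a later rule ($\kappa > i$) touches only atoms of time $> i$, which cannot appear in the body of a rule with $\kappa \le i$ (all of whose body atoms have time $\le i$), so earlier layers stay saturated.

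With the freezing lemma in hand, verifying the execution-model conditions of Definition~\ref{execpos} is routine. The root condition holds since $\ci(\bot) = X$; each non-leaf node fires a rule of $\rulesleft_\ce(s)$ whose body is satisfied, directly by the construction. For the leaf condition, a leaf arises only after all layers $0, \ldots, n$ have been saturated; if some rule $r$ with $\kappa(r) = j$ had $\ci(l) \models body(r)$ and had not fired, then since $body_{At}(r)$ consists of atoms of time $\le j$ whose truth was already frozen at the end of layer $j$, part (b) of the freezing lemma would have forced $r$ to fire during layer $j$ (or its body to be false), a contradiction. Thus $\pts \models_X C$ and $\pts$ follows $\lambda$, as required. (Here we use that we are in the positive case with total Herbrand interpretations at each node, so each $body(r)$ is evaluated classically and its truth depends only on the atoms it mentions.)
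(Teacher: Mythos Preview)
Your proposal is correct and follows essentially the same approach as the paper: fix an event-timing $\kappa$, build the tree by firing applicable rules in non-decreasing order of $\kappa$, and use the observation that firing a rule with $\kappa(r)\geq i$ can only add atoms of time $\geq i$, hence cannot affect bodies of rules with smaller $\kappa$. The paper does not prove Theorem~\ref{posstrat} directly; it instead proves the more general Theorem~\ref{th:strat} (for theories with negation that \emph{strictly} respect a timing) and observes that in the positive case $body_{At}^-(r)=\emptyset$, so ``respecting'' and ``strictly respecting'' coincide, making Theorem~\ref{posstrat} a corollary. Your explicit layer-by-layer construction with the freezing lemma is a slightly more concrete packaging of the same idea; the paper's version phrases it as ``always fire a rule of minimal $\kappa$ among the currently applicable ones'' and argues that the minimum applicable $\kappa$ can only increase along a branch, which is your freezing lemma in disguise.
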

\begin{proof}
Proof of this theorem can be found in Section \ref{prf:strat}.
\end{proof}

This result shows that if we construct a CP-theory $C$ with a
particular timing in mind, then $C$ will have an execution model in
which the events happen in precisely the order dictated by this
timing.  Therefore, the modeling methodology that we have suggested in
this section is indeed valid.  In the case of Example
\ref{ex:pneumtime}, the process shown in Figure \ref{fig:pneumtime} is
an execution model that follows the timing $\lambda$ specified above.

In the sequel, we will refer to CP-theories, for whose vocabulary
we have some intended timing in mind, as {\em temporal CP-theories};
other CP-theories will be called {\em atemporal}.

\section{CP-logic with negation} \label{sec:negation}

So far, we have only allowed positive formulas as preconditions of
CP-laws.  In this section, we examine whether it is possible to relax
this requirement.  We first look at a small example.

\begin{example} \label{ex:negation} Having pneumonia causes a patient to receive treatment with probability $0.95$.  Untreated pneumonia causes fever with probability $0.7$.
\begin{align} (Fever : 0.7)  &\leftarrow Pneumonia \land \lnot Treatment. \label{untreated} \\ 
(Treatment :0.95) &\leftarrow Pneumonia. \label{treatment}
\end{align}
\end{example}

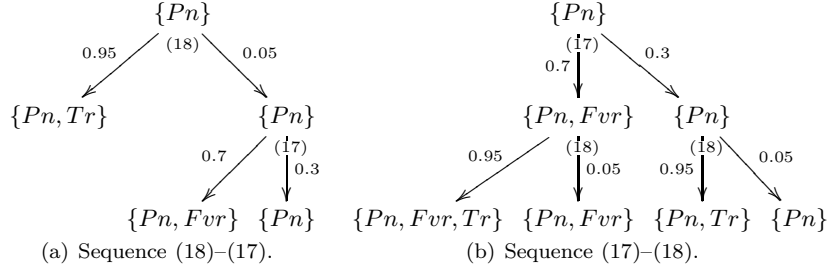
\begin{figure}
\subfigure[Sequence \eqref{treatment}--\eqref{untreated}. \label{fig:neggood}]{
\xymatrix@C=1pt{
&\{Pn\} \ar[ld]_{0.95} \ar[rd]^{0.05} \ar@{}[d]|(.3){\eqref{treatment}} \\
\{Pn,Tr\} && \{Pn\} \ar[ld]_{0.7} \ar[d]^{0.3}|(.3){~\eqref{untreated}} \\
& \{Pn, Fvr\} & \{Pn\}
}}
\subfigure[Sequence \eqref{untreated}--\eqref{treatment}. \label{fig:negbad}]{
\xymatrix@C=1pt{
& \{Pn\} \ar_{0.7}[d]|(.3){~\eqref{untreated}} \ar^{0.3}[dr] \\
&\{Pn, Fvr\} \ar[ld]_{0.95} \ar[d]^{0.05}|(.3){~\eqref{treatment}} & \{Pn\}\ar[d]_{0.95}|(.3){~\eqref{treatment}} \ar[rd]^{0.05} \\
\{Pn,Fvr,Tr\} & \{Pn,Fvr\} & \{Pn,Tr\} & \{Pn\}
}}
\caption{Two processes for Example \ref{ex:negation}.\label{fig:negation}}
\end{figure}

Figure \ref{fig:negation} shows two processes for this example that
satisfy all the requirements that we previously imposed for positive
theories.  It is obvious, however, that in this case the final outcome
is affected by the order in which events occur.  So, simply
including negation in this naive way would give rise to ambiguities,
causing our desirable uniqueness property (Theorem \ref{th:uniquepos})
to be lost.

Giving up the uniqueness property would have grave consequences for
the logic and its practical use. One radical solution to the problem
might be to force the user to not only specify causal probabilistic events, but also information about the order in which these events can happen.  However, such information is
difficult to obtain and represent; moreover, in many
cases, it would just be useless overhead---indeed, as we have already seen, one of the most interesting features of CP-logic without negation is precisely the fact that we can obtain a complete probability distribution {\em without} requiring such information.  The solution that we will
 adopt instead is to restrict the class of processes associated to a
CP-theory in such a way that the uniqueness property is preserved, i.e., all processes from this restricted class generate the same
distribution over the final states.

To introduce the additional constraint that will be imposed on
execution models, let us take a closer look at the above example.
We observe that, in process \ref{fig:negbad}, event
\eqref{untreated} is caused at a moment when its precondition is not yet in its {\em final} state. In particular, when \eqref{untreated} happens in the initial state, its precondtion $\neg Treatment$ holds, but later on,  for instance in the leftmost branch, event \eqref{treatment} causes $Treatment$, thereby falsifying this precondition.  So, in the final state of this branch,
we see that $Fever$ holds, while the precondition of the CP-law that
caused it no longer holds. 

In light of this discussion, we can now explain the additional
assumption that CP-logic makes about the causal processes. This
assumption, called the {\em temporal precedence assumption}, is that
a CP-law $r$ will not fire until its precondition is in its final
state. More precisely, it cannot fire {\em until the part of
  the process that determines whether its precondition holds is fully
  finished}. For Example \ref{ex:negation}, it is clear that only the
process  in Figure \ref{fig:neggood} satisfies this assumption, and so, in this
case, the ambiguity has been resolved.

We stress here that temporal precedence is nothing more than an {\em
  assumption}: inherently, there is nothing wrong with the causal
process in Figure \ref{fig:negbad}, and we could in fact easily imagine that, because fever is one of the earliest
symptoms of pneumonia, this process is actually a better model of the real world than that in Figure \ref{fig:neggood}.  So why do we choose to eliminate precisely these processes, in order to regain our uniqueness result?

To explain our motivation for this, we need to go back to the analysis of Section
\ref{sec:complex}. There, we considered {\em timed} vocabularies,
in which ground atoms are intended to represent properties at some particular point in time. 
We then proved that each temporal theory without negation has an execution model that follows its timing, i.e., in which events happen in the right order.  As we remarked, such a theory may also have other execution models, in which events happen in the wrong order, but this is not a problem, because all execution models of a positive theory generate the same probability distribution anyway.  For theories with negation, however, the situation is more complicated.  In that case, we can have three different kinds of execution models: those which follow the timing; those which do not follow the timing, but nevertheless generate the same probability distribution as the ones that do; and those which do not follow the timing and also generate a different probability distribution.  The right way to resolve the ambiguity for these theories is obviously to reject this last kind of execution model. 

 As we will prove in Section \ref{sec:tempth}, temporal precedence will do precisely this---at least, if the CP-laws containing negation are not instantaneous.  Intuitively, this can be explained as follows.  For such a non-instantaneous CP-law,  the timings of the atoms in its precondition are strictly earlier than those of its effects.  Therefore, in a process which follows this timing, all events which cause one of these atoms must happen before the CP-law itself fires.  This is now precisely what temporal precedence assumes.  The following example is a variant---or rather, a refinement---of Example \ref{ex:negation}, which illustrates this.

\begin{example} \label{ex:negative-var}
  A patient enters the hospital, possibly suffering from pneumonia.
  At this time, he will be examined by a physician, who will decide to
  treat the patient with probability $0.95$ if he actually has
  pneumonia.  If the patient has pneumonia but the doctor does not
  treat him, there is a probability of $0.7$ that the patient will
  exhibit a fever by the next morning. We introduce the following
  propositions:
\begin{itemize}
\item $Pneumonia$: ``the patient has pneumonia when entering the hospital'';
\item $Treatment$: ``the patient is treated upon admission'';
\item $Fever$: ``the patient has a fever the next morning''.
\end{itemize}
Under this interpretation of our vocabulary, the CP-theory of Example
\ref{ex:negation} respects the timing and correctly models the
example. Clearly, the intended model of the theory is now that of
Figure \ref{fig:neggood}: in this model, CP-laws fire in the right
temporal order.   The process of Figure \ref{fig:negbad}, on the other hand, goes against the flow of time (``treatment upon admission'' is only caused {\em after} ``fever the next morning''), which should be impossible.
\end{example}



So, as this example illustrates, if our CP-theory respects some
intended timing such that the CP-laws containing negation
are non-instantaneous, the  temporal precedence assumption will
resolve the ambiguity in the right way, i.e., by selecting precisely
those processes that follow the intended timing.  We will now formally
define temporal precedence and prove afterwards, in Section
\ref{sec:tempth} that this property holds in general.

We start by introducing  some mathematical machinery. The basic idea
is that a CP-law should only fire after all events that might still
affect the truth of its precondition have already happened, i.e., this
precondition should not merely be {\em currently} true, but should in
fact already be guaranteed to also remain true in all potential {\em
  future} states.  This naturally leads to a three-valued logic, where
we have truth values ${\bf t}$ (guaranteed to remain true), ${\bf f}$
(guaranteed to remain false), and ${\bf u}$ (still subject to change).
Recall that a three-valued interpretation $\nu$ is a mapping from the
ground atoms of our vocabulary to the set of truth values
$\{\true,\false,\unknown\}$, which induces for each formula $\phi$ a
truth value $\phi^\nu$.

Now, if our probabilistic process is in a state $s$, then the atoms of
which we are already sure that they are true are precisely those in
$\ci(s)$.  To figure out which atoms are still unknown, we need to
look at which CP-laws might still fire, i.e., at those rules $r$, for
which $body(r)^\nu \neq {\bf f}$.  Whenever we find such a rule, we
know that the atoms in $head(r)$ might all still be caused and, as
such, they must be at least unknown.  We will now look at a derivation
sequence, in which we start by assuming that everything that is
currently not ${\bf t}$ is ${\bf f}$ and then gradually build up the
set of unknown atoms by applying this principle. 

\begin{definition}[hypothetical
derivation sequence]
A {\em hypothetical
derivation sequence} in a node $s$  is a sequence $(\nu_i)_{0 \leq i \leq n}$ of three-valued
interpretations that satisfied the following properties.  Initially, $\nu_0$ assigns ${\bf f}$ to all atoms
not in $\ci(s)$.  For each $i > 0$, there must be a rule $r$ with
$body(r)^{\nu_i} \neq {\bf f}$, such that, for all $p\in head_{At}(r)$ with
$\nu_i(p) = {\bf f}$, it is the case that $\nu_{i+1}(p) = {\bf u}$,
while for all other atoms $p$, $\nu_{i+1}(p) = \nu_i(p)$.  
\end{definition}

Such a sequence is {\em terminal} if it
cannot be extended.  A crucial property is now that all such sequences
reach the same limit.

\begin{theorem} \label{th:samelim} Every terminal hypothetical
  derivation sequence reaches the same limit, i.e., if $(\nu_i)_{0
    \leq i \leq n}$ and $(\nu_i')_{0\leq i\leq m}$ are such sequences,
  then $\nu_n = \nu'_m$.
\end{theorem}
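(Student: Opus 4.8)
The plan is to identify the common limit of all terminal sequences with the least fixpoint of a single monotone operator, thereby making the limit manifestly independent of the order in which rules are fired. Throughout, write $T = \ci(s)$ for the atoms that are true in $\nu_0$; these stay true in every $\nu_i$, since a step only turns $\false$-atoms into $\unknown$ and never touches true atoms. For a set $V \subseteq HB(\Sigma)\setminus T$ of ``unknown'' atoms, let $\nu[V]$ be the three-valued interpretation that is $\true$ on $T$, $\unknown$ on $V$, and $\false$ elsewhere, and define
\[ \Gamma(V) = \{\, p \in HB(\Sigma)\setminus T : \exists r \in C,\ p \in head_{At}(r) \text{ and } body(r)^{\nu[V]} \neq \false \,\}. \]
The first step is to check that $\Gamma$ is monotone on the complete lattice $\langle 2^{HB(\Sigma)\setminus T}, \subseteq\rangle$. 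This rests on a single observation: enlarging $V$ makes $\nu[V]$ \emph{less} precise, so by the precision-monotonicity of three-valued logic recalled in Section~\ref{prel:lp} (namely $\nu \leqp \nu'$ implies $\varphi^\nu \leqp \varphi^{\nu'}$), we get $body(r)^{\nu[V']} \leqp body(r)^{\nu[V]}$ whenever $V \subseteq V'$. Since the only truth values that are $\leqp \true$ or $\leqp \unknown$ are $\true$ and $\unknown$, a body that is $\neq\false$ at $\nu[V]$ remains $\neq\false$ at $\nu[V']$; hence an applicable rule can never become inapplicable as the interpretation loses precision, and $\Gamma(V) \subseteq \Gamma(V')$. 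By the fixpoint theory of Section~\ref{prel:lat}, $\lfp(\Gamma)$ then exists and is computable by iterating $\Gamma$ from $\emptyset$.

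Next I would translate a terminal derivation sequence into the language of $\Gamma$. Let $V_i$ be the set of atoms mapped to $\unknown$ by $\nu_i$, so that $\nu_i = \nu[V_i]$ and $V_0 = \emptyset$. I claim the limit $V_n$ of a terminal sequence is a fixpoint of $\Gamma$. On one hand, every atom ever added was a $\false$-head-atom of some rule $r$ that was applicable ($body(r)^{\nu[V_i]} \neq \false$) at the moment it was added; by the preservation of applicability just noted, $r$ is still applicable at $V_n \supseteq V_i$, so that atom lies in $\Gamma(V_n)$, giving $V_n \subseteq \Gamma(V_n)$. On the other hand, terminality means no further atom can be turned $\unknown$, i.e.\ every head atom of every rule applicable at $\nu[V_n]$ is already non-$\false$, which is precisely $\Gamma(V_n) \subseteq V_n$. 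Hence $V_n = \Gamma(V_n)$, and in particular $\lfp(\Gamma) \subseteq V_n$.

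The remaining, and crucial, step is the reverse inclusion $V_n \subseteq \lfp(\Gamma)$, which I would prove by induction on $i$, establishing $V_i \subseteq \lfp(\Gamma)$. The base case $V_0 = \emptyset$ is immediate. For the step, the atoms added in passing from $V_i$ to $V_{i+1}$ are head atoms of a rule $r$ with $body(r)^{\nu[V_i]} \neq \false$; since $V_i \subseteq \lfp(\Gamma)$ makes $\nu[\lfp(\Gamma)]$ less precise than $\nu[V_i]$, the same applicability argument yields $body(r)^{\nu[\lfp(\Gamma)]} \neq \false$, so these atoms lie in $\Gamma(\lfp(\Gamma)) = \lfp(\Gamma)$. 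Thus $V_n = \lfp(\Gamma)$ for every terminal sequence, and as $\lfp(\Gamma)$ depends only on $C$ and $s$, any two terminal sequences have identical unknown-sets and hence reach the same interpretation $\nu[\lfp(\Gamma)]$. The main obstacle is really just the bookkeeping of this dynamic-to-static translation; the mathematical heart is the one-line monotonicity fact that an applicable rule stays applicable as precision is lost. An alternative, more combinatorial route would be to prove a local ``diamond'' exchange lemma (any two one-step extensions can be reconciled) and invoke Newman's lemma, but the least-fixpoint argument is cleaner and sidesteps the separate termination bookkeeping.
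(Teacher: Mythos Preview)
Your proof is correct and is essentially the same argument as the paper's, recast in explicit monotone-operator language. The paper introduces the set $O_s$ of partial interpretations $\nu$ (with the same true atoms as $\ci(s)$) satisfying the closure condition ``if $body(r)^\nu\neq\false$ then every head atom of $r$ is non-$\false$''; it then shows (i) every $\nu\in O_s$ is $\leqp \nu_i$ for all $i$, and (ii) the terminal $\nu_n$ lies in $O_s$, so $\nu_n$ is the $\leqp$-maximum of $O_s$. Your $\Gamma$ is the operator whose prefixpoints are exactly the unknown-sets $V$ with $\nu[V]\in O_s$, so the paper's ``$\leqp$-maximum of $O_s$'' is precisely your $\nu[\lfp(\Gamma)]$; your inductive bound $V_i\subseteq\lfp(\Gamma)$ is the dual of the paper's bullet (i), and your fixpoint argument for $V_n$ is bullet (ii). The only cosmetic difference is that the paper quantifies over rules in $\rl(s)$ while you quantify over all of $C$; either choice works for the uniqueness statement, and your version matches the literal definition of hypothetical derivation sequence given in the text.
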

\begin{proof}
Proof of this theorem is given in Section \ref{prf:well-def}.
\end{proof}

For a state $s$ in a probabilistic process, we will denote this unique
limit as $\nu_s$ and refer to it as the {\em potential} in $s$. Such a
$\nu_s$ now provides us with an estimate of which atoms might still be
caused, given that we are already in state $s$.
%
%
%
We can now tell whether the part of the process that determines the
truth of a formula $\phi$ has already finished by looking at $\nu_s$;
indeed, we can consider this process to be finished iff $\phi^{\nu_s}
\neq {\bf u}$.  We now extend the concept of an execution model to
arbitrary CP-theories as follows.

\begin{definition}[execution model---general case]
\label{def:exmodelgen}
Let $C$ be a CP-theory in vocabulary $\Sigma$, $\pts$ a probabilistic $\Sigma$-process, and $X$ an interpretation of the exogenous predicates of $C$.  $\pts$ is an {\em execution model} of $C$ in context $X$ iff
\begin{itemize}
\item $\pts$ satisfies the conditions of Definition \ref{execpos} (execution model--positive case);
\item For every node $s$, $body(\ce(s))^{\nu_s} \neq {\bf u}$, with $\nu_s$ the potential in $s$.
\end{itemize}
\end{definition}

From now on, we will refer to a probabilistic $\Sigma$-process that satisfies the original conditions of Definition \ref{execpos}, but not necessarily the additional condition imposed above, as a {\em weak execution model}.

In the case of Example \ref{ex:negation}, this definition indeed gives us the result described above, i.e., the process in Figure \ref{fig:neggood} is an execution model of the example, while the one in Figure \ref{fig:negbad} is not.  Indeed, if we look at the root $\bot$ of this tree, with $\ci(\bot) = \{Pneumonia\}$, we see that we can construct the following terminal hypothetical derivation sequence:
\begin{itemize}
\item $\nu_0$ assigns {\bf f} to $Treatment$ and $Fever$;
\item $\nu_1$ assigns {\bf u} to $Treatment$;
\item $\nu_2$ assigns {\bf u} to $Fever$, because $(\lnot Treatment \land Pneumonia)^{\nu_1} = {\bf u}$;
\end{itemize}
As such, the only CP-law that can initially fire is the one by which
the patient might receive treatment.  Afterwards, in every descendant
$s$ of $\bot$, $\nu_s(Treatment)$ will be either ${\bf t}$ or ${\bf
f}$.  In the branch where it is ${\bf f}$, the event by which the
patient gets fever because of untreated pneumonia will subsequently
happen. 


The  temporal precedence assumption imposes a constraint on the order
in which CP-laws can fire and hence, on the order in which atoms can
be caused to become true. In the case of Example \ref{ex:negation},
this order is fixed and can easily be derived from the syntactical
structure of the CP-theory. This is not always the case. As the
following example illustrates, the order of events may depend on the context in which they happen.

\begin{example} \label{ex:dynamicstratification} A software system
  consists of two servers that provide identical services. One server
  acts as  master and the other as slave, and these roles are assigned randomly.
  Clients can request services.  The master makes a selection among
  these request and the slave fulfills the requests that are not
  accepted by the master.
\begin{align}
(Master(S1): 0.5)  \lor (Slave(S1):0.5)&. \label{MS1} \\ 
 Master(S2)  &\leftarrow  Slave(S1). \label{MS2} \\ 
 Slave(S2)  &\leftarrow  Master(S1). \label{MS2a} \\ 
\forall x \forall s (Accepts(x,s):0.6)  &\leftarrow  Application(s) \land Master(x).  \label{MS3} \\
\forall x \forall s \ Accepts(x,s)  &\leftarrow  \begin{aligned}[t] &Application(s) \land Slave(x) \land\\  &Master(y) \land \neg Accepts(y,s).
\end{aligned}
\label{MS4} 
\end{align}
In all causal processes that satisfy the temporal precedence
assumption, the master accepts services before the slave does.
However, because who is slave and who is master depends on the result
of event \eqref{MS1}, this means that we cannot say upfront which of
the atoms $Accepts(S1,s)$ and $Accepts(S2,s)$ will be caused
first. This shows that the temporal precedence assumption induces a
{\em context dependent stratification} on both events and atoms.
\end{example}

The temporal precedence assumption is correct for many
theories---including, as we will prove
later, all those temporal theories in which CP-laws
containing negation are not instantaneous---but not for all.

\begin{example}\label{ex:variant:dynstrat} 
  We consider a variant of the problem of Example
  \ref{ex:dynamicstratification} in which the slave does not have to
  wait for the decision of the master, but is allowed to accept any
  request provided it has not  yet been accepted by the master. It is
  then possible that first the slave and later the master accept the
  same request, in which case the service is provided two times. 

  Unlike Example \ref{ex:dynamicstratification}, this specification is
  an {\em incomplete} description of a probability distribution.
  Indeed, the probability of a request being handled by the slave now
  also depends on the probability of the slave reaching a decision
  before the master does, which is not specified.  If we 
  try to model this example with the same CP-theory as Example
  \ref{ex:dynamicstratification}, the  temporal precedence assumption
  would make one particular assumption about the relative speed of the
  two servers, namely, that the slave is always slower than the
  master.  If we want to model some other distribution, where the
  slave is sometimes faster than the master, we have to use a different representation style, which allows such information to be
  incorporated.  This will be discussed later in Example
  \ref{ex:slowslave}.
\end{example}

What this discussion illustrates is that, ultimately, it is the
responsibility of the user to design his CP-theory in such a way that 
the intended causal processes satisfy the  temporal
precedence assumption.

\section{Discussion}
\label{sec:discuss}

We now check whether the way in which the previous section has extended the concept of an execution model to cope with negation indeed satisfies the goals that we originally stated.

\subsection{The case of positive theories}

First of all, we remark that, for positive CP-theories, the new
definition (Def.\ \ref{def:exmodelgen}) simply coincides with the
original one (Def.\ \ref{execpos}), i.e., for positive theories, there
is no difference between execution models and weak execution models.
Indeed, because, according to our original definition, it must be the
case that $\ci(s) \models \ce(s)$ for each non-leaf node $s$, this is
an immediate consequence of the following theorem, which follows
trivially from the fact that throughout a hypothetical derivation
sequence, the truth of an atom $p$ can only increase.

\begin{theorem}
  Let $s$ be a node in a probabilistic $\Sigma$-process. For any
  positive formula $\phi$, if $\ci(s) \models \phi$, then $\nu_s(\phi)
  = \true$.
\end{theorem}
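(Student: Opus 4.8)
The plan is to reduce the claim to the $\leqt$-monotonicity of negation-free formulas, by comparing $\nu_s$ with the two-valued interpretation from which every hypothetical derivation sequence starts. First I would observe that $\nu_0$ assigns $\true$ to each atom of $\ci(s)$ and $\false$ to every other atom, so it is in fact total and coincides with the Herbrand interpretation $\ci(s)$. Hence $\phi^{\nu_0}$ reduces to the ordinary two-valued truth value of $\phi$ under $\ci(s)$, and the hypothesis $\ci(s) \models \phi$ immediately gives $\phi^{\nu_0} = \true$.

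Next I would fix any terminal hypothetical derivation sequence $(\nu_i)_{0 \leq i \leq n}$, whose limit is $\nu_s = \nu_n$ (well-defined by Theorem \ref{th:samelim}). By inspecting the single allowed step, each $\nu_{i+1}$ is obtained from $\nu_i$ by rewriting some atoms whose value is currently $\false$ to $\unknown$ and leaving all other atoms unchanged. Since $\false \leqt \unknown$, this shows $\nu_i \leqt \nu_{i+1}$ pointwise at every atom, and therefore $\nu_0 \leqt \nu_s$.

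It then remains to lift this pointwise inequality from atoms to the formula $\phi$. For this I would prove, by a routine induction on the structure of a negation-free formula, that $\nu \leqt \nu'$ implies $\varphi^\nu \leqt \varphi^{\nu'}$: the atomic case is the pointwise hypothesis, and each connective occurring in such a $\varphi$ --- namely $\land, \lor, \forall, \exists$ --- is evaluated by a $\leqt$-monotone operation ($Min_{\leqt}$ or $Max_{\leqt}$). Applying this to $\nu_0 \leqt \nu_s$ yields $\true = \phi^{\nu_0} \leqt \phi^{\nu_s}$, and since $\true$ is the $\leqt$-greatest truth value, we conclude $\phi^{\nu_s} = \true$.

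The point to be careful about is that the monotonicity recalled in the preliminaries is with respect to the precision order, and is the wrong tool here: as $\nu_s \leqp \nu_0$, it would only deliver $\phi^{\nu_s} \leqp \phi^{\nu_0} = \true$, which rules out $\false$ but still allows $\unknown$. The genuine content of the theorem is thus the $\leqt$-monotonicity of positive formulas, and this is exactly the step where the absence of negation is essential: negation is $\leqt$-antitone, so the induction would fail for it, whereas truth only ever increases along the derivation sequence.
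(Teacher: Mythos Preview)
Your proof is correct and follows essentially the same approach as the paper: the paper simply asserts that the result ``follows trivially from the fact that throughout a hypothetical derivation sequence, the truth of an atom $p$ can only increase,'' and your argument is a careful unpacking of exactly this, establishing $\nu_0 \leqt \nu_s$ and then invoking the $\leqt$-monotonicity of positive formulas. Your closing remark about why $\leqp$-monotonicity is the wrong tool is a nice addition that the paper does not make explicit.
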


We conclude that, for positive CP-theories, the new definition is
simply equivalent to the old one.

\subsection{Uniqueness theorem regained}

Second, the uniqueness theorem now indeed extends beyond positive theories.

\begin{theorem}[Uniqueness---general case] Let $C$ be a CP-theory and
  $X$ an interpretation of the exogenous predicates of $C$.  If $\pts$
  and $\pts'$ are execution models of $C$ in context $X$, i.e., $\pts
  \models_X C$ and $\pts' \models_X C$, then $\pi_\pts = \pi_{\pts'}$.
  \label{th:uniquegen}
\end{theorem}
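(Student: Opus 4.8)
The plan is to reduce the statement to the order-independence of the well-founded semantics, thereby lifting the strategy used for the positive case (Theorem~\ref{th:uniquepos}) to theories with negation. The central device is the notion of a \emph{selection}: a function $\sigma$ assigning to every ground CP-law $r \in C$ one element of $head_{At}(r) \cup \{\neither\}$, i.e.\ one of its possible effects or the null effect. To such a $\sigma$ I associate the normal logic program $C_\sigma$ containing the rule $\sigma(r) \rul body(r)$ for every $r$ with $\sigma(r) \neq \neither$, and I give it the weight $\mu(\sigma) = \prod_{r} p_r(\sigma(r))$, where $p_r(A_i) = \alpha_i$ and $p_r(\neither) = 1 - \sum_i \alpha_i$. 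Since each factor ranges over a full probability distribution, $\sum_\sigma \mu(\sigma) = 1$, and, more usefully, summing $\mu$ over all selections that agree on a fixed subset of the rules marginalises out the remaining choices to~$1$.

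First I would prove the key lemma: for any execution model $\pts$ of $C$ in context $X$ and any leaf $l$ of $\pts$, the interpretation $\ci(l)$ coincides with the well-founded model of $C_\sigma$ in context $X$, where $\sigma$ is any selection that agrees, on the rules firing along the branch to $l$, with the effect actually chosen there. The branch to $l$ is finite (the grounded theory is finite and each ground rule fires at most once per branch), so I can read off from it a concrete well-founded induction of $C_\sigma$: each firing of a rule $\ce(s)$ that produces a true atom supplies a step $\nu^{\alpha+1} = \nu^\alpha[p:\true]$, while the atoms that the potential $\nu_s$ marks $\false$ supply the unfounded-set steps. The condition $body(\ce(s))^{\nu_s} \neq \unknown$ of Definition~\ref{def:exmodelgen} is exactly what guarantees that, at the moment a rule fires, the truth value of its body has already stabilised, so that a positive body literal is genuinely derived and a negative body literal $\lnot A$ is genuinely founded-false; this is what makes the firing sequence a \emph{legitimate} well-founded induction rather than a premature derivation. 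Conversely, I would verify, using the principle of \emph{universal causation}, that no atom outside $\ci(l)$ can be founded true, so that $\ci(l)$ is the complete well-founded model.

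Granting the lemma, I would express the distribution induced by $\pts$ in purely selection-theoretic terms. Grouping the leaves of $\pts$ by the selection they realise, and using that the product of the edge probabilities along a branch equals the product of the chosen $p_r(\cdot)$ while the choices for rules that never fire (their bodies being false in $\ci(l)$ by \emph{sufficient causation}) marginalise away to~$1$, I obtain
\[ \pi_\pts(I) \;=\; \sum_{\sigma \,:\, \mathit{wfm}(C_\sigma, X) = I} \mu(\sigma) \]
for every Herbrand interpretation $I$. The right-hand side makes no reference to $\pts$; it depends only on $C$ and $X$. Since the identical argument applies verbatim to $\pts'$, both $\pi_\pts$ and $\pi_{\pts'}$ equal this common selection-based distribution, and therefore $\pi_\pts = \pi_{\pts'}$.

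The main obstacle is the key lemma, and within it the matching of the execution model's firing order against a valid well-founded induction of $C_\sigma$. The delicate points are verifying that the potential $\nu_s$ is a faithful running approximation of the well-founded computation (so that the unfounded sets are correctly identified), and handling the negative body literals: here everything hinges on the temporal precedence condition $body(\ce(s))^{\nu_s}\neq\unknown$ together with the monotone growth of $\ci(s)$ along a branch, which prevents a body that holds when a rule fires from later being falsified. This is precisely the step where the positive-case argument breaks down and where the additional condition of Definition~\ref{def:exmodelgen} earns its keep.
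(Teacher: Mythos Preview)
Your proposal is correct and follows essentially the same route as the paper: you introduce selections and their associated logic programs, show that the interpretation at each leaf is the well-founded model of every compatible instance (by exhibiting a well-founded induction alternating between positive derivation steps from rule firings and unfounded-set steps supplied by the potentials $\nu_s$), and then conclude that $\pi_\pts$ equals the selection-weighted sum over well-founded models, which is independent of $\pts$. The paper's Propositions~\ref{prop:wf:ind} and~\ref{prop:wfm:leaf} and the proof of Theorem~\ref{semcorr} carry out exactly this plan, with the marginalisation over non-fired rules appearing as the identity $\cp(l) = P(\sigma(l)) = \sum_{\sigma' \in S(\sigma(l))} P(\sigma')$.
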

\begin{proof}
Proof of this theorem is given in Section \ref{prf:well-def}.
\end{proof}

\subsection{Correctness of temporal precedence in temporal CP-theories}

\label{sec:tempth}

In the previous section, we introduced the temporal precedence
assumption as a way of solving an ambiguity problem, namely the fact
that different weak execution models of a CP-theory with negation
might produce different probability distributions. We showed that this
assumption was satisfied in the temporal CP-theory of Example
\ref{ex:negative-var}. We now prove that the temporal
precedence assumption is satisfied in a broad class of CP-theories,
 namely, in all temporal CP-theories in which events
containing negation are non-instantaneous.  To be more concrete, we will show that if a weak execution model follows the timing of the
vocabulary, it also satisfies the temporal precedence assumption.

Our first step is to refine our notion of a theory respecting a timing
(Definition \ref{DefRespects}), to make a distinction between those
atoms from some $body_{At}(r)$ that appear only in a {\em positive}
context and those which occur at least once in a {\em negative}
context.  The set of all the latter atoms will be denoted as
$body_{At}^-(r)$, whereas that of all the former ones is
$body_{At}^+(r)$\footnote{Formally, we define, for all sentences
  $\phi$, the sets $At^+(\phi)$ and $At^-(\phi)$ by simultaneous
  induction as:
\begin{itemize}
\item For $p(\vec{t})$ a ground atom, $At^-(p(\vec{t})) =\{\}$ and $At^+(p(\vec{t})) =  \{p(\vec{t})\}$;
\item For $\phi \circ \psi$, with $\circ$ either $\lor$ or $\land$, $At^+(\phi \circ \psi)  = At^+(\phi) \cup At^+(\psi)$ and $At^-(\phi \circ \psi) = At^-(\phi) \cup At^-(\psi)$;
\item For $\lnot \phi$,  $At^+(\lnot \phi) = At^-(\phi)$ and $At^-(\lnot \phi) = At^+(\phi)$;
\item For $\Theta x\ \phi$, with $\Theta$ either $\forall$ or $\exists$, $At^+(\Theta x\ \phi) = \cup_{t\in H_U(\Sigma)} At^+(\phi[x/t])$ and $At^-(\Theta x\ \phi)= \cup_{t\in H_U(\Sigma)} At^-(\phi[x/t])$, where $H_U(\Sigma)$ is the Herbrand universe.
\end{itemize}
We can then define $body_{At}^-(r) = At^-(body(r))$ and $body_{At}^+(r) = body_{At}(r) \setminus body_{At}^-(r)$.
}.

\begin{definition}[strictly respecting a timing]
  A CP-theory $C$ (with negation) {\em strictly} respects a timing $\lambda$
  if, for all ground atoms $h$ and $b$:
\begin{itemize}
\item If there is CP-law $r$ with $h \in head_{At}(r)$ and $b \in body_{At}^+(r)$, then $\lambda(h) \geq \lambda(b)$;
\item If there is CP-law $r$ with $h \in head_{At}(r)$ and $b \in body_{At}^-(r)$, then $\lambda(h) > \lambda(b)$;
\end{itemize}
\end{definition}
Notice that we impose a stronger condition on negative conditions than
on positive ones: the times of negative conditions should be {\em strictly}
less than any caused atom. This condition entails that CP-laws with negation are not instantaneous.

\begin{theorem} Let $C$ be a CP-theory which strictly respects a timing
  $\lambda$.  Every weak execution model of $C$ that follows
  $\lambda$ also satisfies temporal precedence and is, therefore, an execution
  model of $C$.  Moreover, such a process always exists.
\label{th:strat}
\end{theorem}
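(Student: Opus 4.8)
The plan is to prove the two assertions separately: that a weak execution model following $\lambda$ exists, and that any such model automatically satisfies temporal precedence. For existence I would construct the process greedily. Starting from the root with $\ci(\bot)=X$, at each node $s$ I fire, among the CP-laws in $\rulesleft_\ce(s)$ whose body is classically satisfied in $\ci(s)$, one with minimal event-time $\kappa$. Since the grounded theory is finite and each rule fires at most once per branch, every branch is finite and ends in a leaf where no applicable rule remains, so $\pts$ is a weak execution model. To see that it follows $\lambda$, note that firing $r$ only adds the atoms of $head(r)$; hence the only rules that can become \emph{newly} applicable are those enabled through a positive occurrence of some $a\in head_{At}(r)$ in their body, while a previously true negative literal can never become true again. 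For such a rule $r''$ we have $a\in body_{At}^+(r'')$, and strict respecting gives $\kappa(r'')\geq\lambda(a)\geq\kappa(r)$. Combined with the fact that every other currently applicable rule already had $\kappa$ at least that of the last firing, the minimal-$\kappa$ choice keeps the fired times non-decreasing along each branch, so $\pts$ follows $\lambda$.

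For correctness, fix a weak execution model $\pts$ following $\lambda$ via $\kappa$, a node $s$, and write $r=\ce(s)$; the goal is $body(r)^{\nu_s}\neq\unknown$. Because the hypothetical derivation sequence leaves the atoms of $\ci(s)$ true and only turns atoms that are false in $\ci(s)$ into $\unknown$, we have $\nu_s\leqp\ci(s)$, so the precision-monotonicity of three-valued evaluation together with $\ci(s)\models body(r)$ yields $body(r)^{\nu_s}\in\{\true,\unknown\}$. Moreover the atoms of $body_{At}^+(r)$ occur only positively, so replacing their $\ci(s)$-values by the possibly less precise $\nu_s$-values can only raise the body in the truth order $\leqt$; the only way the body can drop to $\unknown$ is therefore through some $b\in body_{At}^-(r)$ with $b\notin\ci(s)$ but $\nu_s(b)=\unknown$. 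It thus suffices to prove $\nu_s(b)=\false$ for every $b\in body_{At}^-(r)$.

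The crux is to relate the potential, which is defined from $\ci(s)$ and the rule set alone, to the dynamic timing. I would prove, by induction on the time stratum $t=0,1,\dots,n$, that $\nu_s$ decides (is two-valued and agrees with $\ci(s)$ on) every atom $p$ all of whose causing rules are forced to fire strictly above $s$. For $b\in body_{At}^-(r)$, strict respecting gives $\lambda(b)<\lambda(h)$ for every $h\in head_{At}(r)$, and any rule $r'$ that can cause $b$ satisfies $\kappa(r')\leq\min_{h'\in head_{At}(r')}\lambda(h')\leq\lambda(b)$; using that $\pts$ follows $\lambda$, this forces every such $r'$ to have already fired on the path to $s$, so no rule causing $b$ is applicable at or below $s$. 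The induction hypothesis on the lower strata makes the bodies of all these $r'$ evaluate to $\false$ in $\nu_s$, so the hypothetical derivation sequence never raises $b$ to $\unknown$, giving $\nu_s(b)=\false$; Theorem~\ref{th:samelim} guarantees this value is independent of the sequence chosen. Finally, combining with the existence part, the greedily constructed process is a weak execution model following $\lambda$, hence by this argument an execution model, establishing the ``moreover'' clause.

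I expect the main obstacle to be the boundary of the stratified induction. A priori a rule $r'$ causing a negative-context atom $b$ could have firing time $\kappa(r')$ equal to $\kappa(r)$, in which case ``following $\lambda$'' alone would not place $r'$ strictly before $r$, and $\nu_s(b)$ could legitimately be $\unknown$. This is precisely where the \emph{strict} inequality on negative conditions is indispensable: it makes the head times of $r'$ strictly smaller than those of $r$, so that a non-instantaneous $r$ genuinely fires only after every rule that can still affect its negative preconditions. Verifying that the time-stratified induction is well founded exactly when the CP-laws containing negation are non-instantaneous is therefore the delicate step, and it is also what explains why the non-instantaneity hypothesis cannot be dropped.
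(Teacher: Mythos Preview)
Your existence argument and the final monotonicity step for positive body atoms match the paper's approach. The gap is in how you establish $\nu_s(b)=\false$ for $b\in body_{At}^-(r)$.

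First, the sentence ``using that $\pts$ follows $\lambda$, this forces every such $r'$ to have already fired on the path to $s$'' is not correct. Following $\lambda$ only guarantees that $\kappa$ is non-decreasing along each branch, so a rule $r'$ with $\kappa(r')<\kappa(r)$ cannot fire \emph{at $s$ or later}; nothing forces it to have fired at all. Such an $r'$ may perfectly well sit in $\rulesleft_\ce(s)$ with its body currently false in $\ci(s)$, and it is precisely these dormant rules that the hypothetical derivation sequence might activate.

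Second, the stratum induction you propose does not close. A rule $r'$ with $b\in head_{At}(r')$ satisfies $\kappa(r')\leq\lambda(b)$, but its \emph{positive} body atoms may have timing equal to $\kappa(r')$ and hence equal to $\lambda(b)$. The induction hypothesis on strictly lower strata therefore does not decide them, and you are left with a circular dependency inside stratum $\lambda(b)$. If you try to break this circle you will find you need to know that no rule of timing $<\kappa(r)$ in $\rulesleft_\ce(s)$ has its body true in $\ci(s)$---which is exactly the lemma you have not proved.

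The paper supplies precisely this missing lemma: in \emph{any} weak execution model that follows $\lambda$, the rule $\ce(s)$ is of minimal $\kappa$ among all rules in $\rulesleft_\ce(s)$ whose body is satisfied in $\ci(s)$. This follows from the leaf condition: a rule of strictly smaller $\kappa$ with true body could never fire later (by the non-decreasing constraint) yet its body would remain true down to every leaf, contradicting termination. With minimality in hand, one argues directly along the hypothetical derivation sequence: initially only rules of timing $\geq i:=\kappa(\ce(s))$ have non-false body, applying such rules alters only atoms of timing $\geq i$, so inductively no rule of timing $<i$ ever becomes applicable and $\nu_s$ coincides with $\ci(s)$ on all atoms of timing $<i$---in particular on every $b\in body_{At}^-(r)$. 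Your stratum induction can be salvaged, but only by invoking this minimality fact at the point where you need $body(r')^{\nu_s}=\false$; without it the argument does not go through.
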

\begin{proof}
Proof of this theorem is given in Section \ref{prf:strat}.
\end{proof}

Intuitively, the theorem states that any causal process of $C$ that is
physically possible (i.e., in which no event is caused by conditions
that arise only in the future), automatically satisfies temporal
precedence. Hence, in the context of CP-theories that strictly respect some intended timing, the temporal precedence assumption applies naturally.

\begin{example} We consider a time line divided into a number of different time slots, as illustrated in Figure \ref{slots}.  In the first time slot, a client sends a request to a server.  If the server receives a request, then with probability $0.5$, he accepts it and sends a reply, all within the same time slot as that in which he received the request.  If the client has sent a request and has not received a reply at the end of the time slot, he will repeat his request. A message that is sent has a probability of $0.8$ of reaching the recipient in the same time slot as it was sent; with probability $0.1$, it reaches the recipient only in the next slot; with the remaining probability of $0.1$, it will be lost.  \label{ex:messages}
\begin{align}
(Send(Client,Req,Server,1): 0.7)&. \label{s1}\\
\forall t\ (Accept(t) : 0.5) \lor (Reject(t): 0.5)&  \leftarrow Recvs(Server,Req,t). \label{s2} \\
\forall t\  Send(Server, Answer, Client,t) & \leftarrow Accept(t). \label{s3} \\
\begin{split}
\forall t,s,r,m\  (Recvs(r,m,t):0.8) \lor& (Recvs(r,m,t+1):0.1)\\& \leftarrow Send(s, m, r,t).
\end{split}
\label{s4}
\\
\begin{split}
\forall t\ Send(Client,Req,Server,t+1) &\leftarrow Send(Client,Req,Server, t) \\ &\phantom{\leftarrow} \land \lnot Recvs(Client, Answer,t). \label{s5}
\end{split}
\end{align}

\begin{figure}
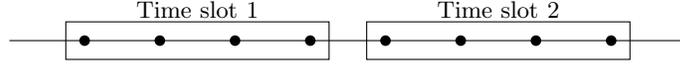

\begin{center}
\begin{pgfpicture}{0cm}{-0.5cm}{9cm}{0.75cm}
\pgfsetendarrow{\pgfarrowto}
\pgfxyline(0,0)(9,0)
\pgfcircle[fill]{\pgfxy(1,0)}{2pt}
\pgfcircle[fill]{\pgfxy(2,0)}{2pt}
\pgfcircle[fill]{\pgfxy(3,0)}{2pt}
\pgfcircle[fill]{\pgfxy(4,0)}{2pt}
\pgfcircle[fill]{\pgfxy(5,0)}{2pt}
\pgfcircle[fill]{\pgfxy(6,0)}{2pt}
\pgfcircle[fill]{\pgfxy(7,0)}{2pt}
\pgfcircle[fill]{\pgfxy(8,0)}{2pt}
\pgfrect[stroke]{\pgfxy(0.75,-0.25)}{\pgfxy(3.50,0.5)}
\pgfrect[stroke]{\pgfxy(4.75,-0.25)}{\pgfxy(3.50,0.5)}
\pgfputat{\pgfxy(2.5,0.3)}{\pgfbox[center,bottom]{Time slot 1}}
\pgfputat{\pgfxy(6.5,0.3)}{\pgfbox[center,bottom]{Time slot 2}}
\end{pgfpicture}
\end{center}
\caption{A division into time slots. \label{slots}}
\end{figure}

\end{example}

In this CP-theory, \eqref{s1}, \eqref{s2} and
\eqref{s3} are all instantaneous; the events
described by \eqref{s4} might either take place within one time slot
or constitute a propagation to a later time slot, depending on which
of the possible effects actually occurs; finally, the events described
by \eqref{s5} all propagate to a later time slot.  Because these last
events are the only ones in which negation occurs, this theory strictly
respects its intended timing and the theorem shows that the semantics gives the
intended result.

In summary, a sensible temporal CP-theory should respect its timing.
If it strictly respect this timing---that is, the timing is fine-grained enough to make CP-laws with negation non-instantaneous---then all of its weak execution models will automatically satisfy temporal precedence as well. Otherwise, there may be weak execution models that do not satisfy temporal precedence and, hence, will be ruled out as well.

\subsection{Validity of a CP-theory}

Not all CP-theories have an execution model.
Let us illustrate this by the following example.

\begin{example} 
\label{ex:badgame}
A game is being played between two players, called $White$ and
$Black$.  If $White$ does not win, this causes $Black$ to win and if
$Black$ does not win, this causes $White$ to win.
\begin{align}
Win(White) &\leftarrow \lnot Win(Black). \label{eq:blwh}\\
Win(Black) &\leftarrow \lnot Win(White). \label{eq:whbl}
\end{align}
This theory has two weak execution models: one in which \eqref{eq:blwh} fires first and white wins with probability $1$, and one in which \eqref{eq:whbl} fires first and black wins with probability $1$.  However, both of these weak execution models are rejected by the temporal precedence assumption.  Indeed, in each of these weak execution models, it is the case that, for the root $\bot$, $(\lnot
Win(White))^{\nu_\bot}= {\bf u} = (\lnot Win(Black))^{\nu_\bot}$, so
neither of the two events can happen.  So, this is an example of an ambiguity that cannot be resolved by assuming temporal precedence.  In order to make a sensible CP-theory out of this example,  we would have to  add additional information  about the probability that one CP-law fires before the
other. As will be illustrated in Example \ref{ex:slowslave}, such
information can be modeled in CP-logic, but requires a different
representation style in which temporal arguments are added to the
predicates. 
\end{example}

Theories which have no execution models are obviously not of interest.
This motivates the following definition.

\begin{definition}[valid CP-theories] A CP-theory $C$ is {\em valid} in an interpretation
  $X$ for its exogenous predicates if it has at least one execution
  model in context $X$.  If $C$ is valid in all contexts $X$, we
  simply say that $C$ is valid.
\end{definition}

Clearly, it is only if $C$ is a valid CP-theory that we can associate
a probability distribution $\pi_C$ to it. The theories of Example
\ref{ex:negation} and Example \ref{ex:dynamicstratification} are
valid.

The above discussion raises the question how to recognize whether a
theory is valid. We now propose a simple syntactic criterion that
guarantees this. 

\begin{definition}[stratified CP-theories]
  A CP-theory $C$ is {\em stratified} if there exists a function
  $\lambda$ from the set of its atoms to an interval $[0..n]$ such
  that $C$ strictly respects $\lambda$.
\end{definition}

Here, it is possible that the function $\lambda$ is a timing such as in Section \ref{sec:tempth}, but this is not necessary; e.g., it might be the case that $\lambda$ assigns different natural numbers to atoms that conceptually, in their intended interpretation, are supposed to refer to the same time points.  The following corollary of Theorem \ref{th:strat} is of relevance both to temporal and atemporal CP-theories.

\begin{corollary} 
Each stratified theory $C$ has an execution model.
\end{corollary}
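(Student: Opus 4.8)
The plan is to obtain the corollary as an essentially immediate consequence of Theorem~\ref{th:strat}. That theorem already guarantees, for any CP-theory that strictly respects a timing, the existence of a weak execution model following the timing, together with the fact that such a process automatically satisfies temporal precedence and is therefore a genuine execution model. So all that really needs checking is that the notion of a \emph{stratified} theory feeds correctly into the hypothesis of Theorem~\ref{th:strat}.

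The only discrepancy between the two hypotheses is one of domain. The definition of strictly respecting a timing speaks of a \emph{timing} $\lambda$, which is a function on the entire Herbrand base $HB(\Sigma)$, whereas stratification only supplies a function $\lambda$ on the set of atoms occurring in $C$. First I would therefore extend the stratifying function to a total timing $\lambda'$ on $HB(\Sigma)$, assigning an arbitrary value (say $0$) to every ground atom that does not appear in $C$. I would then observe that the two clauses defining ``strictly respects'' only constrain pairs $(h,b)$ with $h \in head_{At}(r)$ and $b \in body_{At}^+(r)$ or $b \in body_{At}^-(r)$ for some $r \in C$; every such atom occurs in $C$, so $\lambda'$ coincides with $\lambda$ on all of them. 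Hence $C$ strictly respects the timing $\lambda'$.

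Second, I would apply Theorem~\ref{th:strat} to $C$ and $\lambda'$. The theorem yields a weak execution model of $C$ that follows $\lambda'$ and, by the same theorem, this process satisfies temporal precedence and is an execution model of $C$. This is exactly the object the corollary claims to exist. If ``has an execution model'' is to be read in the stronger sense of validity (i.e.\ an execution model for every context $X$ of the exogenous predicates), I would note that the construction underlying Theorem~\ref{th:strat} can be carried out for each fixed $X$ independently, since the ordering argument driven by $\lambda'$ does not depend on the interpretation chosen at the root.

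I do not anticipate a real obstacle here: the mathematical content lies entirely in Theorem~\ref{th:strat}, and the corollary merely repackages it. The single point requiring (routine) care is the extension of the stratifying function to a full timing, which is harmless precisely because the strict-respect conditions never refer to atoms outside $C$.
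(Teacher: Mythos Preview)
Your proposal is correct and matches the paper's approach: the paper simply states this as a corollary of Theorem~\ref{th:strat} without further argument, since a stratified theory by definition strictly respects some $\lambda$, and Theorem~\ref{th:strat} then directly supplies the required execution model. Your extra care about extending $\lambda$ to the full Herbrand base is harmless (and arguably unnecessary, since the paper's phrase ``the set of its atoms'' appears to mean the Herbrand base, cf.\ the remark that positive theories are stratified by assigning $0$ to \emph{all} ground atoms), but it does not deviate from the paper's reasoning.
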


We remark that, in particular, all positive theories are stratified,
because, for such a theory, we can simply assign 0 to all ground
atoms.  An example of a stratified theory containing negation is given in Example
\ref{ex:messages}. The theory of Example
\ref{ex:dynamicstratification} is not stratified because the atoms
$Accepts(S1,x)$ and $Accepts(S2,x)$ cannot be
ordered in time, since the times at which they are made true depends
on who is the master. This is an example of a valid but unstratified
CP-theory.
We therefore conclude that the existence of a stratification is a
sufficient condition for the existence of an execution model---and
hence of the theory actually defining a probability distribution---but
not a necessary one.

\subsection{The representation of time in CP-logic}

In the preceding sections, we have encountered two quite different
styles of know\-ledge representation: temporal theories explicitly include time,
while atemporal theories make abstraction of it.

There may be several reasons for making time explicit. One obvious
reason is if we are  actually interested in the intermediate states of the process. Other reasons might be that the causal processes in
a domain are simply too complex to model without explicit time.
Below, we illustrate two such cases.

In CP-logic, each atom starts out as false and might become true during the process; moreover, if at some point an atom becomes true, it will remain true.  In applications where the obvious relevant properties of the domain of interest do not behave like this, we cannot simply represent them by atoms in our CP-theory.  As already mentioned in Section \ref{sec:complex}, this problem can typically be solved by explicitly including time in the representation.  The following example illustrates how this methodology can be used to handle domains in which there are causes for both a property and its negation.

\begin{example} Consider the following variant of Example \ref{ex:pneumonia}, in
  which a doctor can now administer a medicine to suppress chest pain with probability $0.9$.
  \label{ex:chestpainmedicine}
\begin{align}
Pneumonia(1)&. \label{Vpn1}\\
\forall d\ (Pneumonia(d+1):0.8) &\leftarrow Pneumonia(d). \label{cp1}\\
\forall d\ (Chestpain(d): 0.6) & \leftarrow Pneumonia(d) \land \neg Suppressed(d). \label{cp2}\\
\forall d\ Medicine(d) & \leftarrow Chestpain(d). \label{cp3}\\
\forall d\ (Suppressed(d+1): 0.9) & \leftarrow Medicine(d). \label{cp4Vpncpd}
\end{align}
In this representation, the use of negation allows the predicate $Suppressed$ to act as a cause for not having chestpain. 
\end{example}

We now discuss another type of application that requires time to be made explicit. As mentioned before, temporal precedence might give unintended results for theories which are not temporal or whose granularity of time is such that negation occurs in instantaneous events.  In such cases, the obvious solution is to make time explicit and ensure it is fine-grained enough to make all events with negation non-instantaneous\footnote{For most real world events, there exists, at least in principle, some time scale that would make them non-instantaneous.  For instance, even an event such as the temperature of a gas increasing when the space in which it is contained decreases, only manifests itself after the molecules of the gas have travelled a certain microscopic distance, which does take a---small, but in principle non-zero---amount of time.  Examples of truly instantaneous events can be found in quantum mechanics (if the state of one object collapses, this instantaneously causes the collapse of the state of each entangled object) and abstract properties defined by social convention (e.g., signing a purchase deed instantaneously makes one the owner of a house). }.  To illustrate, we consider the following refinement of Example \ref{ex:variant:dynstrat}. 

\begin{example} \label{ex:slowslave} We again consider the setting of Example \ref{ex:variant:dynstrat}, where the slave does not necessarily wait for the decision of the master, before deciding whether to accept the request himself. This might be the case, for instance, in a system where the two servers have not been properly synchronized.  As we explained, for such a model to be a complete description of a probability distribution, we then also need to include information about the probability that the slave decides before the master.  We will assume that, at each time point where the master has not decided yet, there is a probability of $0.2$ that he will decide; for the slave, we assume that this probability is $0.8$.
\begin{align*}
&(Master(S1): 0.5)  \lor (Slave(S1):0.5)  \leftarrow. \\
& Master(S2)  \leftarrow  Slave(S1).  \\ 
& Slave(S2)  \leftarrow  Master(S1).  \\
&\forall x \forall s \forall t (Decides(x,s,t):0.2)  \leftarrow \begin{array}[t]{l} Master(x)\ \land Application(s) \land  \\ \neg \exists t'\ (t' < t \land Decides(x,s,t')).
  \end{array} \\
&\forall x \forall s \forall t (Accepts(x,s,t):0.6)  \leftarrow Master(x) \land Decides(x,s,t).  \\
&\forall x \forall s \forall t (Decides(x,s,t):0.8)  \leftarrow  \begin{array}[t]{l}Slave(x) \land Application (s) \land \\  \neg \exists t'\ (t' < t \land Decides(x,s,t')).
  \end{array}
\\
&\forall x \forall s \forall t\ Accepts(x,s,t) \leftarrow \begin{array}[t]{l} Slave(x) \land Decides(x,s,t) \land \\ \neg \exists y \exists t' (Master(y) \land \ t' < t \land Accepts(y,s,t')).
  \end{array} 
\end{align*}

In this CP-theory, we have introduced the predicate  $Decides(x,s,t)$ as a  {\em reification} of the events by which the servers reach their decision (i.e., the events that were described by \eqref{MS3} and
\eqref{MS4} in our original theory from Example \ref{ex:dynamicstratification}). The meaning of this
predicate is that server $x$ makes his decision on application $s$ at
time $t$. The above CP-theory models the situation of an eager slave
that decides on applications much faster than the master, which causes many
services to be provided twice.
\end{example}

\section{The relation to Bayesian networks}
\label{sec:BNs}

In this section, we investigate the relation between CP-logic and Bayesian networks.  
Before we begin, let us briefly recall the definition of a Bayesian network.  Such a network consists of a directed acyclic graph and a number of probability tables.  Every node $n$ in the graph represents a random variable, which has some domain $dom(n)$ of possible values.  
A network $B$ defines a unique probability distribution $\pi_B$ over the set of all possible assignments $n_1 = v_1, \ldots, n_m = v_m$ of values to all of these random variables, with all $v_i \in dom(n_i)$.   
First, this $\pi_B$ must obey a probabilistic independence assumption expressed by the graph, namely, that every node $n$ is probabilistically independent of all of its non-descendants, given its parents.  This allows the probability  $\pi_B(n_1 = v_1, \ldots, n_m = v_m)$ of such an assignment of values to all random variables to be rewritten as a product of conditional probabilities $\prod_i \pi_B(n_i = v_i \mid pa(n_i) = \vec{v})$, where each $pa(n_i)$ is the tuple of all parents of $n_i$ in the graph. The probability tables associated to the network now specify precisely all of these conditional probabilities $\pi_B(n_i = v_i \mid pa(n_i) = \vec{v})$.  The second condition imposed on $\pi_B$ is then simply that all of these conditional probabilities must match the corresponding entries in these tables.  It can be shown that this indeed suffices to uniquely characterize a single distribution.


Most commonly, Bayesian networks are constructed without any explicit references to time, since this tends to produce the simplest models.  However, in some cases such a representation does not suffice; then, one typically uses a so-called {\em dynamic Bayesian network} \cite{Ghahramani98} which makes time explicit in much the same way as can be done in CP-logic.

Like CP-logic, Bayesian networks are a formal language that can be used to represent causal relations in a domain.  This is done by chosing as the parents of a node $x$ all nodes $y$ for which it is the case that the value of $y$ has a direct effect on the value of $x$.  The values in the conditional probability table for $x$ then quantifiy the joint effect that all of these parents together have on $x$.   Bayesian networks constructed in  this way are usually called {\em causal} networks, to dinstinguish them from ``non-causal'' networks which do not necessarily follow the direction of causal relations.  Causal Bayesian network are more informative than non-causal ones:  not only do they define a probability distribution, but they also specify what will happen when external action intervenes with the normal operation of the causal mechanisms it describes \cite{pearl:book}.

In this section, we will compare causal Bayesian networks to CP-logic.    We first show that, because Bayesian networks can easily be unfolded into probability trees \cite{shafer:book}, they can be mapped to CP-logic in a straightforward way.  We then discuss how CP-logic differs from Bayesian networks.  There are essentially three main differences.  Our representation is more fine-grained and modular in the sense that a single probabilistic causal law can express the effect that {\em some} of the ``parents'' of an atom have on it, regardless of the effect of others.  It is also more qualitative, since we can use first-order formulas to specify in which circumstances the ``parents'' will have a certain effect on the child, while Bayesian networks encode such information in probability tables.  Finally, it is also more general, in the sense that it can directly represent cyclic causal relations, which a Bayesian network cannot.  We remark that these comparisons consider only the ``vanilla'' way of writing down Bayesian networks, i.e., as a drawing of a directed acyclic graph accompanied by tables of numbers.  A large number of alternative notations exist in the literature, e.g., \cite{Comley:2003}.  These provide more elegant ways of handling all but one of the ``shortcomings'' of Bayesian networks that we will mention---the exception being, to the best of our knowledge, their inability to directly represent cyclic causal relations.

After this discussion of representation issues, Section \ref{sec:inter} will discuss interventions in causal Bayesian networks and show that the semantics of CP-logic induces a natural counterpart to this notion.

\subsection{Bayesian networks in CP-logic}

\label{indep}
\label{bayesian}

As also mentioned in Shafer's book, a Bayesian network can be seen as a description of a class of probability trees.  We first make this more precise.  To make it easier to compare to CP-logic later on, we will start by introducing a logical vocabulary for describing a Bayesian network.

\begin{definition}  Let $B$ be a Bayesian network. The vocabulary $\Sigma_B$ consists of a predicate symbol $P_n$ for each node $n$ of $B$ and a constant $C_v$ for each value $v$ in the domain of  $n$. 
\end{definition}

Now, we want to relate a Bayesian network $B$ to a class of $\Sigma_B$-processes.  Intuitively, we are interested in those processes, where the flow of events follows the structure of the graph and every event propagates the values of the parents of a node to this node itself.  We illustrate this by the following famous example.

\begin{example}[Sprinkler]
The grass can be wet because it has rained or because the sprinkler was on.  The probability of the sprinkler causing the grass to be wet is $0.8$; the probability of rain causing the grass to be wet is $0.9$; and the probability of the grass being wet if both the sprinkler is on and it is raining is $0.99$. The {\em a priori} probability of rain is $0.4$ and that of the sprinkler having been on is $0.2$.
\label{ex:sprinkler}
\end{example}

\begin{figure}
\[
\xymatrix@R=1pt@C=10pt{
*+[F:<3pt>]{sprinkler}\ar[rd] \\
& *+[F:<3pt>]{wet\_grass}\\
*+[F:<3pt>]{\phantom{ink}rain\phantom{ler}} \ar[ru]
}\]
\begin{center}
\begin{oldtabular}{l}
\begin{oldtabular}{|c|c|}
\myhline
sprinkler & 0.2\\
\myhline
\end{oldtabular} \\\\
\begin{oldtabular}{|c|c|}
\myhline
rain & 0.4\\
\myhline
\end{oldtabular}
\end{oldtabular} \hspace{1cm}
\begin{oldtabular}{|c||c|c|}
\myhline
wet&  rain & $\lnot$rain \\
\myhline \myhline
sprinkler &  0.99 & 0.8 \\
$\lnot$sprinkler  & 0.9 & 0\\
\myhline
\end{oldtabular}
\end{center}
\caption{Bayesian network for the sprinkler example. \label{fig:sprinkler}}
\end{figure}
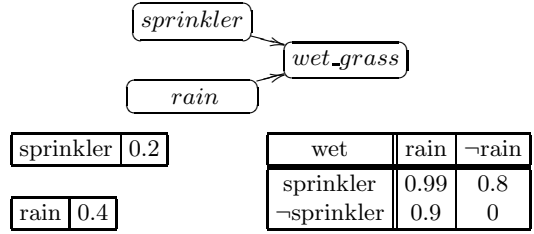

\begin{figure}
\xymatrix@C=0pt{
&&&\{\} \ar[ld]_{0.4} \ar[rd]^{0.6}\\
&&\{Rain\} \ar[ld]_{0.2} \ar[d]^{0.8}&& \{\}  \ar[d]_{0.2} \ar[rrd]^{0.8}\\
&\{Sp,Rain\} \ar[ld]_{0.99} \ar[d]^{0.01}& \{Rain\}\ar[d]_{0.9} \ar[rd]^{0.1} && \{Sp\} \ar[d]_{0.8} \ar[dr]^{0.2} && \{\} \ar[d]_{0} \ar[dr]^{1} \\ 
\{Sp,Rain,Wet\} & \{Sp,Rain\} & \{Rain, Wet\} & \{Wet\} & \{Sp,Wet\} &\{Sp\} & \{Wet\} & \{\} 
}
\caption{Process corresponding to the sprinkler Bayesian network.\label{fig:sprinklerprocess}}
\end{figure}
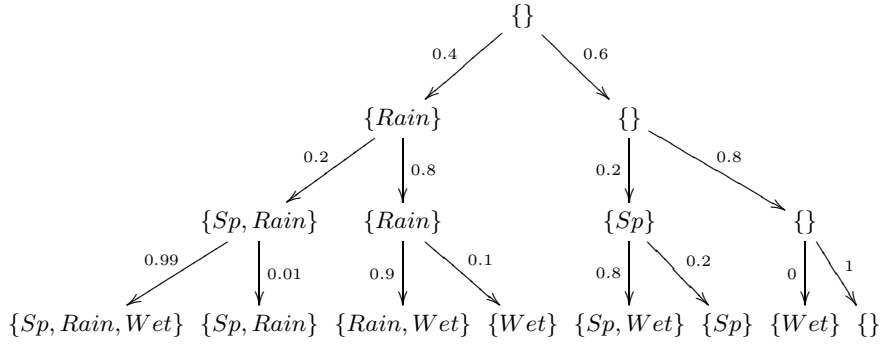

The Bayesian network formalization of this example can be seen in Figure \ref{fig:sprinkler}.  Figure \ref{fig:sprinklerprocess} shows a process that corresponds to this network.  Here, we have exploited the fact that all random variables of the Bayesian network are boolean, by representing every random variable by a single atom, i.e., writing for instance   $Wet$ and $\lnot Wet$ instead of  $Wet(True)$ and $Wet(False)$.
Formally, we define the following class of processes for a Bayesian network.

\begin{definition}  Let $B$ be a Bayesian network.  A {\em $B$-process} is a probabilistic $\Sigma_B$-process $\pts$ for which there exists a mapping $\cn$ from nodes of $\pts$ to nodes of $B$, such that the following conditions are satisfied.  For every branch of $\pts$, $\cn$ is a one-to-one mapping between  the  nodes on this branch and  the nodes of $B$, which is order preserving, in the sense that,  for all $s, s'$ on this branch, if $\cn(s)$ is an ancestor of $\cn(s')$ in $B$, then $s$ must be an ancestor of $s'$ in $\pts$.  If $\cn(s)$ is a node $n$ with domain $\{v_1,\ldots,v_k\}$ and parents $p_1\,\ldots,p_m$ in $B$, then the children of $s$ in $\pts$ are nodes $s_1,\ldots,s_k$, for which:
\begin{itemize}
\item $\ci(s_i) = \ci(s) \cup \{ P_{n}(C_{v_i}) \}$;
\item The edge from $s$ to $s_i$ is labeled with the entry in the table for $n$, that gives the conditional probability of $n = v_i$ given $p_1 = w_1, \ldots, p_m = w_m$, where each $w_i$ is the unique value from the domain of $p_i$ for which $P_{p_i}(C_{w_i}) \in \ci(s)$.
\end{itemize}
\end{definition}

It should be clear that every leaf $s$ of such a $B$-process $\pts$ describes an assignment of values to all nodes of $B$, i.e., every node $n$ is assigned the unique value $v$ for which $P_n(c_v) \in \ci(s)$.  Moreover, the probability $\cp(s)$ of such a leaf is precisely the product of all the appropriate entries in the various conditional probability distributions.  Therefore, the distribution $\pi_\pts$ coincides with the distribution defined by the network $B$. 

We now construct a CP-theory $\bntocp_B$, such that the execution models of $\bntocp_B$ will be precisely all $B$-processes.  We first illustrate this process by showing how the Bayesian network in Figure \ref{fig:sprinkler} can be transformed into a CP-theory.

\addtocounter{example}{-1}
\begin{example}[Sprinkler---cont'd] \label{sprinklerCP}
We can derive the following CP-theory from the Bayesian network in Figure \ref{fig:sprinkler}.
\begin{align*}
(Wet: 0.99)& \leftarrow \phantom{\lnot}Sprinkler \land \phantom{\lnot}Rain\\
(Wet: 0.8) &\leftarrow \phantom{\lnot}Sprinkler \land \lnot Rain.\\
(Wet: 0.9) &\leftarrow \lnot Sprinkler \land \phantom{\lnot}Rain.\\
(Wet: 0.0) &\leftarrow \lnot Sprinkler \land \lnot Rain.\\
(Sprinkler : 0.2)&.\\
(Rain : 0.4)&.\\
\end{align*}
Again, this example exploits the fact that the random variables are
all boolean, by using the more readable representation of $Wet$ and
$\lnot Wet$ instead of $Wet(True)$ and $Wet(False)$.  It should be
obvious that the process in Figure \ref{fig:sprinklerprocess} is an
execution model of this theory and, therefore, that this theory
defines the same probability distribution as the Bayesian network.
\end{example}


It is now easy to see that the encoding used in the above example generalizes.  Concretely, for every node $n$ with parents $p_1,\ldots,p_m$ and domain $\{v_1,\ldots,v_k\}$, we should construct the set of all rules of the form:
\[ (P_n(C_{v_1}) : \alpha_1) \lor \cdots \lor (P_n(C_{v_k}) : \alpha_k) \leftarrow P_{p_1}(C_{w_1}) \land \cdots  P_{p_m}(C_{w_m}),\]
where each $w_i$ belongs to the domain of $p_i$ and each $\alpha_j$ is the entry for $n = v_j$, given $p_1 = w_1, \ldots, p_m = w_m$ in the CPT for $n$.
Let us denote the CP-theory thus constructed by $\bntocp_B$.  The following result is then obvious.

\begin{theorem}
Let $B$ be a Bayesian network.  Every $B$-process $\pts$ is an execution model of the CP-theory $\bntocp_B$, i.e., $\pts\models \bntocp_B$.  Therefore, the semantics of $B$ coincides with the distribution $\pi_C$.
\end{theorem}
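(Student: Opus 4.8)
The plan is to establish two claims: first, that every $B$-process is an execution model of $\bntocp_B$, and second, that this coincidence of processes forces the distributions to agree. The key observation is that the two conditions defining a $B$-process and the conditions defining an execution model (Definition \ref{execpos}) are essentially transcriptions of one another, once we supply the right event-mapping $\ce$.

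First I would define the required mapping $\ce$ from the non-leaf nodes of a given $B$-process $\pts$ to the rules of $\bntocp_B$. Given a non-leaf node $s$, the accompanying mapping $\cn$ sends $s$ to some node $n$ of $B$ with parents $p_1,\ldots,p_m$ and domain $\{v_1,\ldots,v_k\}$. Because $\cn$ restricted to the branch through $s$ is order-preserving and one-to-one onto the nodes of $B$, and because $n$ is being ``processed'' at $s$, all ancestors of $n$ in $B$ correspond to ancestors of $s$ in $\pts$; hence each parent $p_i$ has already been assigned a value, i.e., there is a unique $w_i$ with $P_{p_i}(C_{w_i}) \in \ci(s)$. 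I would let $\ce(s)$ be the unique rule of $\bntocp_B$ for node $n$ whose body is $P_{p_1}(C_{w_1}) \land \cdots \land P_{p_m}(C_{w_m})$. By construction this body is satisfied in $\ci(s)$, and the $k$ children of $s$ together with their edge-labels match exactly the firing of $\ce(s)$ as in Definition \ref{execpos} (here $\sum_j \alpha_j = 1$, so there is no extra ``null-effect'' child). I would then verify the three bullet points of Definition \ref{execpos}: the root condition holds since there are no exogenous predicates and $\ci(\bot)=\emptyset$ matches the empty assignment before any node of $B$ is processed; the firing condition is exactly what was just checked; and the ``each rule fires at most once'' book-keeping via $\rulesleft_\ce$ follows because $\cn$ is one-to-one along each branch, so no node $n$ of $B$ is processed twice. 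The leaf condition---that no rule in $\rulesleft_\ce(l)$ has a satisfied body---follows because along a complete branch every node of $B$ has been processed, so every rule that could possibly fire already has.

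Second, for the distributional conclusion I would invoke the discussion immediately preceding the theorem, which already observes that for a $B$-process the probability $\cp(s)$ of each leaf is the product of the appropriate conditional-probability entries, so that $\pi_\pts$ equals the distribution $\pi_B$ defined by the network. Since we have just shown $\pts \models \bntocp_B$, the uniqueness result (Theorem \ref{th:uniquegen}) tells us that every execution model of $\bntocp_B$ induces this same $\pi_\pts$, which is by definition $\pi_C$ for $C = \bntocp_B$. Therefore $\pi_C = \pi_B$, giving the stated coincidence of semantics.

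The main obstacle, and the only step requiring genuine care rather than unwinding definitions, is matching the partial-order structure: I must confirm that at the node $s$ processed for $n$, \emph{all} parents of $n$ have already received a value in $\ci(s)$, so that the body of the chosen rule is well-defined and the conditional-probability entry is unambiguous. This rests on the order-preservation clause of the $B$-process definition together with the acyclicity of the Bayesian network's graph, and I would make sure to argue it explicitly rather than take it for granted. The remaining verifications are routine transcriptions between the two sets of conditions.
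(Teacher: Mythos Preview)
Your proposal is correct and is precisely the direct verification of definitions that the paper has in mind; the paper itself deems the result ``obvious'' and gives no proof, so your write-up simply spells out what is implicit there. The one place where you could tighten the argument is the leaf condition: rather than saying ``every rule that could possibly fire already has,'' it is cleaner to note that for each node $n$ of $B$ exactly one rule for $n$ fired (the one matching the actual parent values), and every other rule for $n$ remains in $\rulesleft_\ce(l)$ but has an unsatisfied body because each parent $p_i$ contributes exactly one atom $P_{p_i}(C_{w_i})$ to $\ci(l)$.
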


This result shows that CP-logic offers a straightforward way of representing Bayesian networks.  We now discuss three ways in which it offers more expressivity.

\subsection{Multiple causes for the same effect}

In a process corresponding to a Bayesian network, the value of each random variable is determined by a single event.  CP-logic, on the other hand, allows multiple events to affect the same property.  This leads to better representations for effects that have a number of independent causes.  Let us illustrate this by the following example.

\begin{example} We consider a game of Russian roulette that is being played with two guns, one in the player's left hand and one in his right, each of which has a bullet in one of its six chambers. 
\begin{align*} (Death:1/6) &\leftarrow Pull\_trigger(Left\_gun).\\ (Death:1/6)& \leftarrow Pull\_trigger(Right\_gun).
\end{align*}
\label{ex:roulette}
\end{example}

In this example, there are two ``causal mechanisms'' that might lead to $Death$: one is the fact that pulling the trigger of the left gun might cause a bullet to hit the person's left temple, and the other is the fact that  pulling the trigger of the right gun might cause a bullet to hit the person's right temple.    They are {\em independent} in the following sense: once we know how many and which of these mechanisms are actually activated (i.e., which of the two triggers are pulled), then observing whether one of these possible causes actually results in the effect (i.e., whether one of the bullets is actually fired and kills the persons) provides no information about whether one of the other causes will cause the effect (i.e., whether one of the other bullets is also fired).  Mathematically, this is of course saying nothing more than the probability of the effect occurring should be equal to the result of applying a {\em noisy-or}\footnote{The {\em noisy-or} maps a multiset of probabilities $\alpha_i$ to $1 - \prod_i (1 - \alpha_i)$.} to the multiset of the probabilities with which each of the causal mechanisms that are actually activated causes the effect, i.e., if both guns are fired, the probability of death should be $1 - (1 - 1/6)^2$. This independence is precisely the condition that is required in order to be able to represent each of these two causal mechanism by a separate CP-law, as in the above example.  To succinctly describe this situation, we say that $Pull\_trigger(Left\_gun)$ and $Pull\_trigger(Right\_gun)$ are {\em independent causes} for $Death$. 

For instance, in Example 14, $Rain$ and $Sprinkler$ were not independent  causes for $Wet$, since the probability of $Wet$ given both $Rain$ and $Sprinkler$ is $0.99$, which is not equal to $1 - (1 - 0.8)(1 -0.9) = 0.98$.  In this case, the causal mechanisms by which $Rain$ and $Sprinkler$ cause $Wet$ therefore appear to reenforce each other: it might be that a light drizzle only causes the grass to get slightly moist, and that sometimes the pressure on the water main is so low that the sprinkler by itself cannot get the grass really wet, but that a light drizzle {\em and} a lightly spraying sprinkler  {\em together} would be enough to cause $Wet$, even though neither of them separately would do the trick.  Because $Sprinkler$ and $Rain$ are not independent causes in this example, we cannot use a representation of the form:
\begin{align*}
(Wet:\alpha) &\leftarrow Sprinkler.\\
(Wet:\beta) &\leftarrow Rain.
\end{align*}
and instead have no choice but to use the representation shown on page \pageref{ex:sprinkler}.

Figure \ref{fig:roulette} shows a Bayesian network for the Russian roulette example. The most obvious difference between this representation and ours concerns the independence between the two different causes for death.  In the CP-theory, this independence is expressed by the {\em structure} of the theory, whereas in the Bayesian network, it is a {\em numerical} property of the probabilities in the conditional probability table for $Death$.  Because of this, the CP-theory is more elaboration tolerant, since adding or removing an additional cause for $Death$ simply corresponds to adding or removing a single CP-law.  Moreover, its representation is also more compact, requiring, in general, only $n$ probabilities for $n$ independent causes, instead of the $2^n$ entries that are needed in a Bayesian network table.  Of course, these entries are nothing more than the result of applying a $\noisyor$\footnote{The $\noisyor$ maps a multiset of probabilities $\alpha_i$ to $1 - \prod_i (1 - \alpha_i)$.} to the multiset of the probabilities with which each of the causes that are present actually causes the effect.

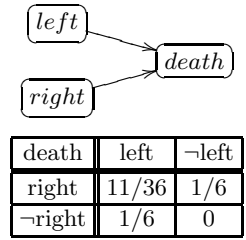
\begin{figure}

\[\xymatrix@R=1pt{
*+[F:<3pt>]{left} \ar[rd]\\
& *+[F:<3pt>]{death} \\
*+[F:<3pt>]{right} \ar[ru]
}\]
\begin{center}
\begin{oldtabular}{|c||c|c|}
\myhline
death& left  & $\lnot$left\\
\myhline
\myhline
right & 11/36 & 1/6 \\
\myhline
$\lnot$right  & 1/6 & 0 \\
\myhline
\end{oldtabular}
\end{center}
\caption{A Bayesian network for Example \ref{ex:roulette} \label{fig:roulette}.}
\end{figure}

In graphical modeling, it is common to consider variants of Bayesian networks, that use more sophisticated representations of the required conditional probability distributions than a simple table.  Including the {\noisyor} as a structural element in such a representation achieves the same effect as CP-logic when it comes to representing independent causes.

\subsection{First-order logic representation of causes}

In CP-logic, the cause of an event can be represented in a qualitative way, by means of a first-order formula.  Bayesian networks, on the other hand, encode such information in the probability tables.

\begin{example}
In the so-called {\em Wumpus world}, an agent moves through a grid, which contains, among other things, a number of bottomless pits.  One aspect of this world is that if a position $x$ is next to such a pit, then with a certain probability $\alpha$, a breeze can be felt there (often, $\alpha$ is simply taken to be $1$).  In CP-logic, we could write the following CP-law:
\[
\forall x\ (Breeze(x):\alpha) \leftarrow \exists y\ NextTo(x,y) \land Pit(y).
\]

For a grid in which square $A$ is surrounded by squares $B,C,D$ and $E$, a Bayesian network could represent the effect of $Pit(B), Pit(C), Pit(D), Pit(E)$ on $Breeze(A)$ by the following table:
\begin{center}
\begin{oldtabular}{|rrrr||c|}
\myhline
&&&& $Breeze(A)$ \\
\myhline
\myhline
$Pit(B)$ & $Pit(C)$ & $Pit(D)$ & $Pit(E)$ & $\alpha$\\
\myhline
$Pit(B)$ & $Pit(C)$ & $Pit(D)$ & $\lnot Pit(E)$ & $\alpha$\\ \myhline
\multicolumn{4}{|c||}{$\vdots$}  & $\vdots$ \\
\myhline
$\lnot Pit(B)$ & $\lnot Pit(C)$ & $\lnot Pit(D)$ & $ Pit(E)$ & $\alpha$\\
\myhline
$\lnot Pit(B)$ & $\lnot Pit(C)$ & $\lnot Pit(D)$ & $\lnot Pit(E)$ & $0$\\
\myhline
\end{oldtabular}
\end{center}
\end{example}

In this example, CP-logic offers a representation which is considerably more concise than that of the Bayesian network.   This manifests itself in two ways: first, our first-order representation succeeds in defining the probability of $Breeze(x)$ for {\em all} squares $x$ simultaneously by a single CP-law, while each square would need its own (identical) probability table in the Bayesian network; second, it can also summarize the $2^4$ entries that make up the probability table  for each $Breeze(x)$ by the single first-order precondition of this CP-law.    Again, these shortcomings  have  already been recognized by the Bayesian network community, leading to, for instance, the use of decision trees to represent probability tables \cite{Comley:2003}, various forms of {\em parameter tying} and first-order versions of Bayesian networks such as Bayesian Logic Programs \cite{kersting00bayesian} (see also Section \ref{sec:fobns}).

We remark that this feature of CP-logic cannot really be seen separately from that discussed in the previous section:  it is precisly because we split up the effect that ``parents'' have on their ``child'' into a number of independent causal laws, that we get more opportunity to exploit the expressivity of our first-order representation.

\subsection{Cyclic causal relations}

In real life,  probabilistic processes may consist of events that might propagate values in opposite directions.  We already saw this in Example \ref{ex:pneumonia}, where angina could cause pneumonia, but, vice versa, pneumonia could also cause angina.  In CP-logic, such causal loops do not require any special treatment.  For instance,  the loop formed by the two CP-laws 
\begin{align*}
(Angina: 0.2) &\leftarrow Pneumonia.\\
(Pneumonia: 0.3) &\leftarrow Angina.
\end{align*}
correctly behaves as follows: 
\begin{itemize}
\item If the patient has neither angina nor pneumonia by an external cause (`external' here does not mean exogenous, but simply that this cause is not part of the causal loop), then he will have neither;
\item If the patient has angina by an external cause, then with probability $0.3$ he will also have pneumonia;
\item If the patient has pneumonia by an external cause, then with probability $0.2$ he will also have angina;
\item If the patient has both pneumonia and angina by an external cause, then he will obviously have both.
\end{itemize}

 In order to get the same behaviour in a Bayesian network, this would have to be explicitly encoded.   For instance, one could introduce new, artificial random variables $external(angina)$ and $external(pneumonia)$ to represent the possibility that $angina$ and $pneumonia$ result from an external cause and construct the Bayesian network that is shown in Figure \ref{fig:pneumbn}.  In general, to encode a causal loop formed by $n$ properties, one would introduce $n$ additional nodes, i.e., all of the $n$ original properties would have the same $n$ artificial nodes as parents.

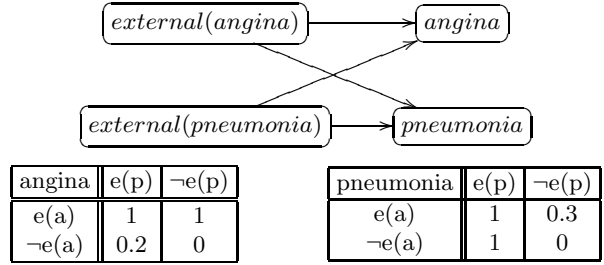
\begin{figure}
 \begin{center}

\[
\xymatrix{
*+[F-:<3pt>]{external(angina)} \ar[r] \ar[dr] & *+[F-:<3pt>]{angina} \\
*+[F-:<3pt>]{external(pneumonia)} \ar[r] \ar[ur] & *+[F-:<3pt>]{pneumonia}
}
\]


\begin{oldtabular}{|c||c|c|}
 \myhline
angina &e(p) & $\lnot $e(p)  \\
 \myhline 
 \myhline
 e(a) & 1 & 1\\
$\lnot$e(a) & 0.2 & 0\\
 \myhline
 \end{oldtabular}
\hspace{1cm} 
 \begin{oldtabular}{|c||c|c|}
 \myhline
pneumonia & e(p) & $\lnot $e(p) \\
 \myhline
e(a) & 1 & 0.3 \\
$\lnot$e(a)  & 1 & 0\\
 \myhline
 \end{oldtabular}


 \end{center}
\caption{Bayesian network for the $angina$-$pneumonia$ causal loop.  \label{fig:pneumbn}}
\end{figure}

\subsection{Inverventions in CP-logic}
\label{sec:inter}

Pearl's work investigates the behaviour of causal models in the presence of {\em interventions}, i.e., outside manipulations that preempt the normal behaviour of the system.  His key observation here is that causal relations are robust, in the sense that, even when {\em some} causal relations are intervened with, the {\em other} causal relations will still hold as before.  Formally, an intervention for Pearl is something of the form $do(X = x)$ that sets the value of a random variable $X$ to $x$.  In doing this, all edges leading into $X$ are removed, because even though they represent the causal mechanism that would {\em normally} determine the value of $X$, they become irrelevant when the value of $X$ is determined by an external intervention instead.

It is also possible to consider such interventions in the context of CP-logic.
Our representation of a causal system is a modular one, in which the atomic unit is a single CP-law.  Because of this, our language comes with a specific kind of interventions ``built-in'': if we want to know what the result is of intervening with a single causal law $r$, we can simply consider the theory from which this one law is removed (and possibly replaced by some other law).   So, to judge the effect of doing an intervention that prevents $r$, we simply have to look at $\pi_{C \setminus \{r\}}$ instead of $\pi_C$, in (roughly) the same way that Pearl would look at something of the form\ $P(\cdot \mid do(X = x))$ instead of $P(\cdot)$.  The opposite is of course also possible: if we want to know the effect of doing an intervention that instead establishes an additional causal law $r$, we could look at $\pi_{C\cup\{r\}}$.  

To illustrate, let us consider another medical example.  A tumor in a patient's kidney might cause kidney failure, which might cause the death of the patient; however, to make matters even worse, the tumor can also metastasize to the brain, which might also, independently,  kill the patient.  We can represent this as:

\begin{align*}
(KidneyFailure: 0.1) &\leftarrow KidneyTumor.\\
(BrainTumor: 0.1) &\leftarrow KidneyTumor.\\
(Death: 0.5) &\leftarrow BrainTumor.\\
(Death: 0.9) &\leftarrow KidneyFailure.
\end{align*}

Now, let us suppose that we want to know what the effect will be of putting the patient on a dialysis machine, which allows him to survive kidney failure.  To anwer this question, we simply remove the last of these causal laws (since the dialysis is precisely meant to prevent this particular causality from taking effect) and look at the semantics of the resulting theory.  If, say, the dialysis also carries some small risk, we can also add new causal laws such as:

\begin{align*}(Death: 0.01) &\leftarrow Dialysis.
\end{align*}

In this way, the semantics of CP-logic already carries within it a notion of intervention, which is also slightly more finegrained than that of Pearl, since we can consider interventions that prohibit a single causal law (as in the above example), whereas Pearl only considers interventions that prohibit {\em all} causal laws that affect the value of a certain random variables.  In the case of the above example, therefore, we would have to either intervene to prevent {\em all} possible causes for death, including the brain tumor, or none at all.  Admittedly, it is of course easy enough to solve this problem by introducing some intermediate variable, say ``high levels of toxins in the blood'', between kidney failure and death.  

Among other things, Pearl uses interventions to make sense of the statistical phenomenon known as {\em Simpson's paradox}.  Because this is somewhat of a benchmark for causal formalisms, we will briefly discuss how CP-logic can deal with it.

Simpson's paradox refers to the phenomenon that a population can sometimes be partitioned in such a way that a certain outcome has a low probability in each of the partitioning sets, yet has a high probability in the population over all (or vice versa).  For instance, a certain drug might be harmful to both men and women, but appear beneficial for persons of unknown sex.  More precisely, taking the drug and recovering might be positively correlated in the population at large, but become negatively correlated when conditioning on sex.  This can happen, for instance, if men are both more likely to take the drug {\em and} to spontaneously recover from the disease.  (Because then observing that a patient takes the drug increases the probability that he is male, which in turn increases the probability that he will recover on his own, thus erroneously suggesting that the drug had some benefical effects on this recovery.)

The crux of Simpson's paradox is that the same probability distribution can be generated by different sets of causal laws, and that in order to figure out whether, e.g., some drug has a positive effect on a patient's condition, it are really these causal laws that matter.  Pearl's book shows that the paradox can therefore be resolved by considering the causal models behind the probability distribution.  In CP-logic, we can do the same.  Let us illustrate this by a famous real-world example: it was found that women had a significantly lower acceptance rate than men for the graduate school of the University of California at Berkeley, which led to a discrimination law-suit against the university.  However, it turned out that none of the individual departments of the university had a lower acceptance rate for women than for men; instead, it was simply the case that women were significantly more likely to apply to departments with a low acceptance rate. 

A highly simplified model of the real situation might therefore have looked something like this:

\begin{gather*}
\begin{aligned}
\forall x\ (Apply(x, Engineering): 0.7) \lor (Apply(x, Literature): 0.3) &\leftarrow Man(x).\\
\forall x\ (Apply(x, Engineering): 0.2) \lor (Apply(x, Literature): 0.8) &\leftarrow Woman(x).
\end{aligned}
\\
\begin{aligned}
\forall x\ (Accepted(x): 0.6) &\leftarrow Apply(x, Engineering).\\
\forall x\ (Accepted(x): 0.3) &\leftarrow Apply(x, Literature).
\end{aligned}
\end{gather*}

Here, there clearly is no gender discrimination: the gender of the applicant plays no role in the CP-laws that describe how the university decides whether to accept an application.  The reason for the law suit is that, if we only look at the acceptance rates of men and women for the university as a whole, we cannot distinguish this CP-theory from, e.g., the following one:

\begin{gather*}
\begin{aligned}
\forall x\ (Apply(x, Engineering): 0.6) \lor (Apply(x, Literature): 0.5) &\leftarrow Man(x).\\
\forall x\ (Apply(x, Engineering): 0.4) \lor (Apply(x, Literature): 0.6) &\leftarrow Woman(x).
\end{aligned}
\\
\begin{aligned}
\forall x\ (Accepted(x): 0.3) &\leftarrow Apply(x, Engineering) \land Woman(x).\\
\forall x\ (Accepted(x): 0.6) &\leftarrow Apply(x, Engineering) \land Man(x).\\
\forall x\ (Accepted(x): 0.4) &\leftarrow Apply(x, Literature) \land Woman(x).\\
\forall x\ (Accepted(x): 0.5) &\leftarrow Apply(x, Literature) \land Man(x).
\end{aligned}
\end{gather*}

Indeed, both theories yield an acceptance rate of $0.36$ for women and an acceptance rate of $0.51$ for men.  In the law suit, the acceptance rates of individual departments were used to argue that the first model was, in fact, the correct one.  Purely theoretically, another option could have been to conduct a randomized experiment to eliminate selection bias: instead of allowing students to choose their department, we would assign it at random.  In CP-logic terms, this corresponds to the intervention of  removing the first two CP-laws from both theories and replacing them by:

\[\forall x\  (Apply(x, Engineering): 0.5) \lor (Apply(x, Literature): 0.5).\]

If we were to perform this intervention, the first theory would predict a new acceptance rate of $0.45$ for men and women alike, whereas the second would predict an acceptance rate of $0.35$ for women and $0.55$ for men.  So, we see here that the kind of interventions induced by the semantics of CP-logic is able to explain Simpson's paradox, and does so in essentially the same way that Pearl does.

\section{CP-logic and logic programs}

\label{ch:lpads}

There is an obvious similarity between the syntax of CP-logic and that of logic programs.  Moreover, the constructive processes that we used to define the semantics of a CP-theory are also similar to the kind of fixpoint constructions used to define certain semantics for logic programs.  In this section, we will investigate these similarities.
To be more concrete, we will first define a straightforward probabilistic extension of logic programs, called {\em Logic Programs with Annotated Disjunctions}, and then prove that this is essentially equivalent to CP-logic.

The connection between causal reasoning and logic programming has long been implicitly present; we can refer in this respect to, for instance,  formalizations of situation calculus in logic programming \cite{Pinto93,VanBelleghem97a}.  Here, we now make this relation explicit, by showing that the language of CP-logic, that we have constructed directly from causal principles,  corresponds to existing logic  programming concepts.  In this respect, our work is similar to that of \cite{mccainturner}, who defined the language of causal theories, which was then shown to be closely related to logic programming. However, as we will discuss later, McCain and Turner formalise somewhat different causal intuitions, which leads to a correspondence to a different logic programming semantics.  Our results from this section will help to clarify the position of CP-logic among related work in the area of probabilistic logic programming, such as Poole's Independent Choice Logic \cite{Poole97:jrnl}.  Moreover, they provide additional insight into the role that causality plays in such probabilistic logic programming languages, as well as in normal and disjunctive logic programs.

\subsection{Logic Programs with Annotated Disjunctions}

In this section, we define the language of {\em Logic Programs with Annotated Disjunctions}, or {\em LPADs} for short.  This is a probabilistic extension of logic programming, which is based on disjunctive logic programs.  
This is a natural choice, because disjunctions themselves---and therefore also disjunctive logic programs---already represent a kind of uncertainty.  Indeed, to give just one example, we could use these to model indeterminate effects of actions.  Consider, for instance, the following disjunctive rule:
\begin{equation*}
Heads \lor Tails \leftarrow Toss.
\end{equation*}
This  offers a quite intuitive representation of the fact that tossing a coin will result in either heads or tails.  Of course, this is not all we know.  Indeed, a coin also has equal probability of landing on heads or tails.  The idea behind LPADs is now simply to express this by annotating each of the disjuncts in the head with a probability, i.e., we write:
\begin{equation*}
(Heads: 0.5) \lor (Tail: 0.5) \leftarrow Toss. 
\end{equation*}
Formally, an LPAD is a set of rules: 
\begin{equation}
\label{eq:LPADrule}
(h_1 :\alpha_1) \lor \cdots \lor (h_n :\alpha_n) \leftarrow \phi,
\end{equation}
where the $h_i$ are  ground atoms and $\phi$ is a sentence.  As such, LPADs are syntactically identical to CP-logic.  However, we will define their semantics  quite differently.
For instance, the above example will express that precisely one of the following logic programming rules holds:
either $Heads \leftarrow Toss$ holds, i.e., if the coin is tossed this will yield heads, or the rule $Tails \leftarrow Toss$ holds, i.e., tossing the coin gives tails.  Each of these two rules has a probability of $0.5$ of being the actual instantiation of the disjunctive rule.

More generally, every rule of form \eqref{eq:LPADrule} represents  a probability distribution over the following set of  logic programming rules: \[\{ (h_i \leftarrow \phi) \mid 1 \leq i \leq n\}.\]  From these distributions, a probability distribution over logic programs is then derived.  To formally define this distribution, we introduce the following concept of a {\em selection}.  We use the notation $head^*(r)$ to denote the set of pairs $head(r) \cup \{(\neither, 1 - \sum_{(h:\alpha) \in head(r)} \alpha)\}$, where $\neither$ represents the possibility that none of the $h_i$'s are caused by the rule $r$. 

\begin{definition}[$C$-selection] \label{def:selection} Let $C$ be an LPAD.   A {\em $C$-selection} is a function $\sigma$ from $C$ to $\bigcup_{r \in C} head^*(r)$, such that for all $r \in C$, $\sigma(r) \in head^*(r)$.   By $\sigma^h(r)$ and $\sigma^\alpha(r)$ we denote, respectively, the first and second element of the pair $\sigma(r)$.  The set of all $C$-selections is denoted as $\sels_C$. 
\end{definition}

The probability $P(\sigma)$ of a selection $\sigma$ is now defined as $\prod_{r\in C} \sigma^\alpha(r)$.  For a set $S \subseteq \sels_C$ of selections, we define the probability $P(S)$ as $\sum_{\sigma \in S} P(\sigma)$.
By $C^\sigma$ we denote the logic program that consists of all rules $\sigma^h(r) \leftarrow body(r)$ for which $r \in C$ and $\sigma^h(r) \neq \neither$.  Such a $C^\sigma$ is called an {\em instance} of $C$.  We will interpret these instances by the well-founded model semantics.  Recall that, in general, the well-founded model of a program $P$,  $wfm(P)$, is a pair $(I,J)$ of interpretations, where $I$ contains all atoms that are certainly true and $J$ contains all atoms that might possibly be true.  
If $I =J$, then the well-founded model is called exact.  Intuitively, if $wfm(P)$ is exact, then the truth of all atoms can be decided, i.e., everything that is not false can be derived. In the semantics of LPADs, we want to ensure that all uncertainty is expressed by means of the annotated disjunctions.  In other words, given a specific selection, there should no longer be any uncertainty.  We therefore impose the following criterion.

\begin{definition}[soundness] An LPAD $C$ is {\em sound} iff all instances of $C$ have an exact well-founded model.
\end{definition}

For such LPADs, the following semantics can now be defined.

\begin{definition}[instance based semantics $\mu_C$] \label{def:musem}
Let $C$ be a sound LPAD.  For an interpretation $I$, we denote by $W(I)$ the set of all $C$-selections $\sigma$ for which $wfm(C^\sigma) = (I,I)$.  The {\em instance based semantics} $\mu_C$ of $C$ is the probability distribution on interpretations, that assigns to each $I$ the probability $P(W(I))$ of this set of selections $W(I)$. \label{muc}
\end{definition}

It is straightforward to extend this definition to allow for exogenous predicates as well.  Indeed, in Section \ref{prel:lp}, we have already seen how to define the well-founded semantics for rule sets with open predicates, and this is basically all that is needed.  Concretely, given an interpretation $X$ for a set of exogenous predicates, we can define the instance based semantics $\mu_C^X$ given $X$ as the distribution that assigns, to each interpretation $I$ of the endogenous predicates, the probability of the set of all selections $\sigma$ for which $(I,I)$ is the well-founded models of $C^\sigma$ given $X$.   Of course, this semantics is only defined for LPADs that are sound in $X$, meaning that the well-founded model of each $C^\sigma$ given $X$ is two-valued.

\subsection{Equivalence to CP-logic}

\label{sec:lpadseq}

Every CP-theory is syntactically also an LPAD and vice versa.  The key result of this section is that the instance based semantics $\mu_C$ for LPADs coincides with the CP-logic semantics $\pi_C$ defined in Sections \ref{sec:CP-logic} and \ref{sec:negation}. 

\begin{theorem} Let $C$ be a CP-theory that is valid in $X$.  Then $C$ is also an LPAD that is sound in $X$ and, moreover, $\mu_C^X = \pi_C^X$. \label{semcorr}
\end{theorem}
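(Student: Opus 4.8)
The plan is to fix an execution model $\pts = \langle T; \ci; \cp\rangle$ of $C$ in context $X$, which exists because $C$ is valid in $X$, and to set up a correspondence between the leaves of $\pts$ and the selections of $C$. Recall that $\pts$ carries a mapping $\ce$ assigning to every non-leaf node $s$ the CP-law that fires there, and that the children of $s$ correspond exactly to the elements of $head^*(\ce(s))$: one child for each effect atom and one for the outcome $\neither$. Hence every selection $\sigma \in \sels_C$ determines a unique path through $\pts$: starting at the root, whenever we are in a non-leaf node $s$ we follow the edge labelled by $\sigma(\ce(s)) \in head^*(\ce(s))$. Because each ground CP-law fires at most once along a branch, this path is finite and ends in a leaf, which I denote $l(\sigma)$. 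This gives a total map $l(\cdot) : \sels_C \to \text{leaves}(\pts)$, and I write $\sels_l$ for its fibre $\{\sigma \mid l(\sigma) = l\}$; these fibres partition $\sels_C$.

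First I would compute $P(\sels_l) = \cp(l)$ for every leaf $l$. Let $F_l$ be the set of CP-laws that fire along the branch from the root to $l$; this set depends only on $\pts$, not on $\sigma$, and is exactly $C \setminus \rulesleft_\ce(l)$. A selection $\sigma$ lies in $\sels_l$ precisely when its value on each $r \in F_l$ is the outcome dictated by the corresponding edge of the branch, while its values on the remaining rules $C \setminus F_l$ are unconstrained, since those rules are never consulted along the fixed traced path. As $P(\sigma) = \prod_{r\in C}\sigma^\alpha(r)$ factorises over rules, and for every rule $\sum_{o \in head^*(r)} o^\alpha = \sum_i \alpha_i + (1 - \sum_i \alpha_i) = 1$, summing $P(\sigma)$ over $\sigma \in \sels_l$ yields the product of the edge-probabilities along the branch, which is $\cp(l)$, times a factor of $1$ for each rule of $C \setminus F_l$; hence $P(\sels_l) = \cp(l)$.

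The crux of the proof is the lemma that for every leaf $l$ and every $\sigma \in \sels_l$, the well-founded model of the instance $C^\sigma$ in $X$ equals $(\ci(l), \ci(l))$. To prove it I would read the branch to $l$ as a blueprint for a well-founded induction of $C^\sigma$ in $X$. Each firing of a rule $r \in F_l$ that adds an effect atom $A$, i.e.\ with $\sigma^h(r) = A$, contributes the rule $A \rul body(r)$ to $C^\sigma$, and since $\ci$ only grows along the branch its body is satisfied at the moment $r$ fires; these firings are replayed as the true-steps $\nu^{\alpha+1} = \nu^\alpha[A:\true]$, so every atom of $\ci(l)$ is derived as $\true$. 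For the atoms outside $\ci(l)$ I would invoke the leaf condition, which guarantees that no rule in $\rulesleft_\ce(l) = C \setminus F_l$ has its body satisfied in $\ci(l)$, to exhibit the complement of $\ci(l)$ as an unfounded set and set it to $\false$; the rules arising from $C \setminus F_l$, together with the branch firings that selected $\neither$, cannot resurrect these atoms because their bodies are $\false$ in $\ci(l)$. This produces a terminal well-founded induction whose limit is the total interpretation $\ci(l)$, so $wfm(C^\sigma)$ in $X$ is exact and equal to $(\ci(l), \ci(l))$.

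The main obstacle is making the $\false$-steps legitimate in the presence of negation: an unfounded set requires the bodies to evaluate to $\false$ under the more precise interpretation $\nu^{\alpha+1}$, yet a negative literal that is ultimately false may still be $\unknown$ at an intermediate stage, and dually a true-step needs $body(r)^{\nu^\alpha} = \true$ rather than mere classical satisfaction in $\ci(l)$. This is exactly where the temporal precedence assumption (Definition \ref{def:exmodelgen}) and the potential $\nu_s$ of Theorem \ref{th:samelim} enter: because a CP-law fires only once $body(\ce(s))^{\nu_s} \neq \unknown$, the atoms on which its body depends negatively are already decided, which lets me order the true- and false-steps so that each body is evaluated only after those atoms are fixed; I expect to need an induction along the branch (or on the stratification induced by the potentials) to turn this into a rigorous ordering. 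Granting the lemma, the theorem follows quickly: every $\sigma$ lies in some $\sels_l$, so every instance $C^\sigma$ has an exact well-founded model, which is the definition of $C$ being sound in $X$; and for each interpretation $I$ we obtain $W(I) = \bigcup_{l : \ci(l) = I} \sels_l$ as a disjoint union, whence $\mu_C^X(I) = P(W(I)) = \sum_{l : \ci(l) = I} \cp(l) = \pi_\pts(I) = \pi_C^X(I)$, using Theorem \ref{th:uniquegen} to identify $\pi_\pts$ with $\pi_C^X$.
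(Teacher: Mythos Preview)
Your proposal is correct and follows essentially the same route as the paper: partition selections by leaves via the partial selection $\sigma(l)$ determined by the branch, show $P(\sels_l)=\cp(l)$, and prove that $\ci(l)$ is the well-founded model of every instance extending $\sigma(l)$ by exhibiting a well-founded induction built from the branch. The paper's explicit construction of that well-founded induction is precisely the interleaving you anticipate: the sequence $(K_j)$ with $K_{2i+1}=\nu_{s_i}$ (the potential, reached by an unfounded-set step) and $K_{2i+2}=\nu_{s_i}[p:\true]$ (the true-step for the selected head atom), so that temporal precedence guarantees each body is already two-valued when its rule fires.
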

\begin{proof}
Proof of this theorem is given in Section \ref{prf:lpads}.
\end{proof}

 We remark that it is not the case that every sound LPAD is also a valid CP-theory.  In other words, there are some sound LPADs that cannot be seen as a sensible description of a set of probabilistic causal  laws.

\begin{example} It is easy to see that the following CP-theory has no execution models.
\begin{align*}
(P : 0.5) \lor (Q :0.5) & \leftarrow R.\\ 
R & \leftarrow \lnot P.\\
R & \leftarrow \lnot Q.\\
\end{align*}
However, each of its instances has an exact well-founded model: for
$\{ P \leftarrow R; R \leftarrow \lnot P; R \leftarrow \lnot Q\}$ this
is $\{R,P\}$ and for $\{ Q \leftarrow R; R \leftarrow \lnot P; R
\leftarrow \lnot Q\}$ this is $\{R, Q\}$. Clearly, this CP-theory does
not have execution models that satisfy the temporal precedence assumption.
\end{example}

\subsection{Discussion}

The results of this section relate CP-logic to LPADs and, more generally speaking, to the area of logic programming and its probabilistic extensions.  As such, these results  help to position  CP-logic among related work, such as Poole's Independent Choice Logic and McCain and Turner's causal theories, which we will  discuss in Section \ref{mccain}.  Moreover, they also provide a valuable piece of knowledge representation methodology for these languages, by  clarifying how they can represent causal information. To illustrate, we  now discuss the relevance of our theorem for some logic programming variants.   

\label{relatedlp}
\label{lprelated}
\label{lp}

 \paragraph{Disjunctive logic programs.}

In probabilistic modeling, it is often useful to consider the qualitative {structure} of a theory separately from its probabilistic {parameters}.  Indeed, for instance, in machine learning, the problems of structure learning and parameter learning are two very different tasks.  If we consider only the structure of a CP-theory, then, syntactically speaking, we end up with a {\em disjunctive logic program}, i.e., a set of rules:
\begin{equation} h_1 \lor\cdots \lor h_n \leftarrow \phi.\label{dlprule}
\end{equation}
We can also single out the qualitative information contained in the semantics $\pi_C$ of such a CP-theory.  Indeed, as we have already seen, like any probability distribution over interpretations, $\pi_C$ induces a possible world semantics, consisting of those interpretations $I$ for which $\pi_C(I) > 0$. Thus we can define:
\[ I \models C  \text{ if } \pi_C(I) > 0 \]
Now, let us restrict our attention to only those CP-theories in which, for every CP-law $r$, the sum of the probabilities $\alpha_i$ appearing in $head(r)$ is precisely $1$.  This is without loss of generality, since we can simply add an additional disjunct $(P:1-\sum_{i}\alpha_i)$, with $P$ some new atom, to all rules which do not satisfy this property. It is easy to see that the set of possible worlds is then independent of the precise values of the $\alpha_i$, i.e., the qualitative aspects of the semantics of such a theory depend only on the qualitative aspects of its syntactical form. Stated differently, for any pair of CP-theories $C, C'$ which differ only on the $\alpha_i$'s, it holds that, for any interpretation $I$, $I \models C$ iff $I\models C'$.

From the point of view of disjunctive logic programming, this set of possible worlds therefore offers an alternative semantics for such a program.  Under this semantics, the intuitive reading of a rule of form \eqref{dlprule} is: ``$\phi$ causes a non-deterministic event, whose effect is precisely one of the $h_1$,\ldots, $h_n$.''  This is a different informal reading than in the standard stable model semantics for disjunctive programs \cite{przymusinski91stable}.  Indeed, under our reading, a rule corresponds to a causal event, whereas, under the stable model reading, it is supposed to describe an aspect of the reasoning behaviour of a rational agent. Consider, for instance, the disjunctive program
 $\{
 p \lor q.\  p.
\}$.
To us, this program describes a set of two non-deterministic events: One event causes either $p$ or $q$ and another event always causes $p$. Formally, this leads to two possible worlds, namely $\{p\}$ and $\{p,q\}$.  Under the stable model semantics, however, this program states that an agent believes either $p$ or $q$ and the agents believes $p$.  In this case, he has no reason to believe $q$ and the only stable model is $\{p\}$.
So, clearly, the causal view on disjunctive logic programming induced by CP-logic is fundamentally different from the standard view and leads to a different formal semantics. Interestingly, the {\em possible model semantics} \cite{sakama94:journal} for disjunctive programs is quite similar to the LPAD treatment, because it consists of the stable models of instances of a program.  Because, as shown in Section \ref{sec:lpadseq}, the semantics of CP-logic considers the well-founded models of instances, these two semantics are very closely related.  Indeed, for a large class of programs, including all stratified ones, they coincide completely.

 \paragraph{Normal logic programs.}

Let us consider a logic program $P$, consisting of a set of rules $h \leftarrow \phi$, with $h$ a ground atom and $\phi$ a formula.  Syntactically, such a program is also a deterministic CP-theory.  Its semantics $\pi_P$ assigns a probability of 1 to a single interpretation and 0 to all other interpretations.  Moreover, the results from Section \ref{sec:lpadseq} tell us that the interpretation with probability 1 will be precisely the well-founded model of $P$.  As such, these results show that a logic program under the well-founded semantics can be viewed as a description of deterministic causal information.  Concretely, we find that we can read a rule $h \leftarrow \phi$ as: ``$\phi$ causes a deterministic event, whose effect is $h$.''

This observation makes explicit the connection between causal reasoning and logic programming that has long been implicitly present in this field, as is witnessed, e.g., by the work on situation calculus in logic programming.  As such, it enhances the theoretical foundations behind the pragmatic use of logic programs to represent causal events.

\paragraph{FO(ID).}

FO(ID) (also called ID-logic) \cite{Denecker/Ternovska:2007:TOCL}
extends classical logic with inductive definitions.  Similar to the way they appear in mathematical texts, an inductive definition is represented as a set of
definitional rules, which are of the form $\forall \vec{x}\ p(\vec{t})
\leftarrow \phi$, where $\vec{x}$ is a tuple of variables, $\phi$ is a
first-order formula and $p(\vec{t})$ an atom.  Such a definition
defines all predicates in the head of the rules by simultaneous induction
in terms of the other predicates, which are called the {\em
open} predicates of the definition.  This syntax offers a uniform way of expressing the most important forms of inductive definitions found in
mathematics, including monotone, transfinite and iterated inductive
definitions, and inductive definitions over a well-founded
order. Formally, the semantics of such a definition is given by the
well-founded semantics, which has been shown to correctly formalise these
forms of inductive definitions.  To be more concrete, an interpretation $I$ is
a model of a definition $D$ if it interprets the defined predicates as the
well-founded model of $D$ extending the restriction of $I$ to the open
symbols of $D$.

Our results show that finite propositional definitions in FO(ID)
are, both syntactically and semantically, identical to deterministic
CP-theories.  We can therefore view such a set of rules as {\em both} an inductive definition {\em and} a description of a causal process.  This relation between induction and causality may be remarkable, 
but it is not all that surprising. In essence, an inductive definition
defines a concept by describing how to {\em construct} it. As such, an
inductive definition also specifies a construction process, and such
processes are basically causal in nature. Or to put it another way, an
inductive definition is nothing more than a description of a causal
process, that takes place not in the real world, but in the domain of
mathematical objects. This suggests that the ability of mathematicians
and formal scientists in general to understand inductive definitions
is rooted deeply in human {\em common sense}, in particular our ability
to understand and reason about causation in the physical world.

\section{Related work}
\label{ch:related}
\label{sec:related}

In this section, we discuss some research  that is related to our work on CP-logic.  Roughly speaking, we can divide this into two different categories, namely, the related work that focuses mainly on formalizing causality and that which focuses mainly on representing probabilistic knowledge.

\label{related}

\subsection{Pearl's causal models}

Our work on CP-logic studies causality from a knowledge representation perspective.  As such, it is closely related to the work of  \citeN{pearl:book}.   His work uses Bayesian networks and {\em structural models} as formal tools.  In Section \ref{sec:BNs}, we have already compared CP-logic to Bayesian networks and showed that it offers certain representational advantages.  A structural model is a set of equations, each of which defines the value of one random variable in terms of the values of a set of other random variables.  For each endogenous random variable, there is precisely one such defining equation.  As for Bayesian networks, we can say here that a CP-theory is more detailed, since it represents individual causal laws, while a single structural model equation has to take into account {\em all} of the random variables that have a direct influence on the value of the defined random variable.  Another similarity to Bayesian networks is that structural models have to be acyclic as well, which means that, in this sense, they are also less general than CP-logic.  Apart from this, a lot depends of course on the particular form that these equations take, so there is not much more that can be said in general about this.

Pearl's work  mainly focuses on the behaviour of causal models in the presence of interventions.  As we have shown in Section \ref{sec:inter}, it is possible to consider similar interventions in the context of CP-logic.  Pearl uses interventions for a number of interesting purposes, such as handling counterfactuals.  They have also been used to define concepts such as ``actual causes'' \cite{halpernpearl} and ``explanations'' \cite{halpernpearl2}.  The explicitly dynamic processes of CP-logic seem to offer an interesting setting in which to investigate these concepts as well.  Indeed, in any particular branch of an execution model of a CP-theory, every true atom $p$ is caused by at least one CP-law whose precondition $\phi$ was satisfied at the time when this event happened.  It now seems sensible to call $\phi$ an actual cause of $p$.  An interesting question is to what extent such a definition would coincide with the notion of actual causation defined by Halpern.  However, we leave these issues for future work.

\subsection{Causality in logic programs}

\label{mccain}

In the area of logic programming, many languages can be found which express some kind of (non-probabilistic) causal laws.  A typical example is  McCain and Turner's {\em causal theories} \cite{mccainturner}.  A causal theory is a set of rules
 $\phi \Leftarrow \psi,$  where $\phi$ and $\psi$ are propositional formulas.  The semantics of such a theory $T$ is defined by a fixpoint criterion. Concretely, an interpretation $I$ is a model of $T$ if $I$ is the {\em unique} classical model of the theory $T^I$ that consists of all $\phi$, for which there is a rule $\phi \Leftarrow \psi$ in $T$ such that $I \models \psi$.

In CP-logic, we assume that the domain is initially in a certain state, which then changes through a series of events.  This naturally leads to the kind of constructive processes that we have used to define the formal semantics of CP-logic.  By contrast, according to McCain and Turner's fixpoint condition, a proposition can have any truth value, as long as there exists some causal explanation for this truth value.  
This difference mainly manifests itself in two ways.  

First, in CP-logic, every endogenous property has an initial truth value, which can only change as the result of an event.  As such, there is a fundamental asymmetry between falsity and truth, since only one of them represents the ``natural'' state of the property. For McCain \& Turner, however, truth and falsity are completely symmetric and both need to be causally explained.  As such, if the theory is to have any models, then, for every proposition $Q$, there must always be a cause for either $Q$ or $\lnot Q$.

A second difference is that the constructive processes of CP-logic rule out any unfounded causality, i.e., it cannot be the case that properties spontaneously cause themselves.  In McCain \& Turner's theories, this ``spontaneous generation'' of properties can occur.  For instance, the CP-theory $\{Q \leftarrow Q\}$ has $\{\}$ as its (unique) model, whereas the causal theory $\{Q \Leftarrow Q\}$ has $\{Q\}$ as its (unique) model.  As such, the direct representation of cyclic causal relations that is possible in CP-logic (e.g., Example \ref{ex:pneumonia}) cannot be done in causal theories; instead, one has to use an encoding similar to the one needed in Bayesian networks (e.g., Figure \ref{fig:pneumbn}).  In practice, the main advantage of McCain \& Turner's treatment of causal cycles seems to be that it offers a way of introducing exogenous atoms into the language.  Indeed, by including both $Q \Leftarrow Q$ and $\lnot Q \Leftarrow \lnot Q$, one can express that $Q$ can have any truth value, without this requiring any further causal explanation.  Of course, CP-logic has no need for such a mechanism, since we make an explicit distinction between exogenous and endogenous predicates.  It is interesting to observe that, given the relation between logic programming and causal theories proven in \cite{mccain:phd}, this difference actually corresponds to the difference between the well-founded and completion semantics for logic programs.

\subsection{Action languages}

In Knowledge Representation, a significant amount of work has been on the topic of {\em action languages}, such as $\mathcal{A}$, $\mathcal{B}$ and $\mathcal{C}$ \cite{gelfond98:link}.  These languages have in common with CP-logic that causality plays a significant role in their setting, and also that they are closely related to logic programming \cite{lifschitz99:lpnmr}.  Moreover, there also exist probabilistic extensions of these langauges \cite{baral02}.

Action languages conceive the world as consisting of a system of causal laws, together with an external agent who can decide to act upon this system.  Crucially, the agent's decisions are themselves not governed by the system's causal laws.  Therefore, if we consider, e.g., Pearl's framework, the closest thing to this concept of an action would be his notion of an intervention.   Indeed, it has been shown in \cite{tran04} that interventions can be encoded in the probabilistic action language PAL \cite{baral02} precisely as actions.  In CP-logic, the most natural way of encoding an action would be, for the same reason, by means of an exogenous predicate.  While the behaviour of an agent in an action language is not determined by the state of the system, it {\em is} however typically restrained by it: certain actions are only available in certain states.  Neither of these two ways of encoding actions (i.e., in Pearl's framework or in ours) would be able to directly represent such constraints.

Action languages allow the effects of an action to be specified by means of so-called {\em dynamic causal laws}.  In Pearl's framework, the effect of an action would be given as part of the specification of the intervention that is performed.  In CP-logic, such knowledge would take the form of a CP-law whose body contains the exogenous predicate representing the action, and whose head contains some endogenous predicate that is affected by the action. 

Besides these dynamic causal laws, there are also {\em static causal laws}.  These represent the causal laws that are obeyed by the system itself; once we know the direct effects of the agent's actions, the static causal laws tell us how these propagate through the rest of the system.  In Pearl's framework, they would correspond to the functional equation that were not intervened with.  In CP-logic, they would be CP-laws with endogenous predicates in body and head.

As this brief discussion demonstrates, we could conceive of a probabilistic action language in which the (dynamic and static) causal laws are expressed in CP-logic, while the actions and constraints thereon are defined in some other language.  To the best of our knowledge, however, all of the approaches in current literature use essentially a McCain and Turner-style representation for the causal laws (see e.g.\ \cite{giunchiglia98}).  

To illustrate, let us consider the transmission in a car.  This is a system consisting of a number of gear wheels, which can be connected in various ways, such that turning one of the gear wheels causes connected gear wheels to turn also.  This system can be represented by a set of causal laws---and can be done so better in CP-logic than in McCain and Turner's logic.  The reason is that there exist cyclic causal relations between connected gear wheels: if the engine is turning, this can cause the car to move, but vice versa, if the car is moving, this can also cause the engine to turn (``engine braking'').  As explained in Section \ref{mccain}, such cyclic causality is handled better in CP-logic.  Note also that in the context of probabilistic planning, the McCain and Turner style representation poses the problem that loops in the causal laws (e.g., $p \Leftarrow q$ and $q \Rightarrow p$) lead to uncertainty (e.g., $p$ and $q$ can either both be true or both be false) that is not probabilistically quantified.  PAL, for instance, solves this problem by assuming that, in such a case, all possible states are equally likely. 

A CP-logic representation of the transmission would probably use predicates such as {\em Clutch} and {\em ShiftGear} to refer to the actions available to the driver.  These would be exogenous predicates, and we would not say anything more about them.  An action language, however, could do more. It would also allow to express constraints on these actions, such as that shifting gears is only allowed while the clutch is in operation.  In such a setting, we would have enough information to help the driver come up with a plan to, e.g., put the car into fourth gear without stalling, which cannot be done using just CP-logic by itself.

\subsection{Probabilistic languages}

Part of the motivation behind CP-logic was to provide a probabilistic logic programming language in which statements have an intuitive meaning that can easily be explained to domain experts, without having to rely on any prior knowledge of logic programming.   To this end, we have developed, from scratch, a formalization of the concept of a probabilistic causal law; the resulting language was then shown to be equivalent to the probabilistic logic programming construction of LPADs.  This result allows us to  interpret probabilistic logic programs in a new way, namely as sets of probabilistic causal laws.  This does not only hold for LPADs themselves, but also for related languages, which we will now discuss in more detail.



 \subsubsection{Independent Choice Logic}

{\em Independent Choice Logic (ICL)} \cite{Poole97:jrnl} by Poole is a probabilistic extension of abductive logic programming, that extends the earlier formalism of {\em Probabilistic Horn Abduction} \cite{pha1}.  An ICL theory consists of both a logical and a probabilistic part.  The logical part is an acyclic logic program.  The probabilistic part consists of a set of rules of the form (in CP-logic syntax): \[(h_1 : \alpha_1) \lor \cdots \lor (h_n : \alpha_n)\] such that $\sum_{i=1}^n = 1$.  The atoms $h_i$  in such clauses are called {\em abducibles}.  Each abducible may only appear once in the probabilistic part of an ICL program; in the logical part of the program, abducibles may only appear in the bodies of clauses.

Syntactically speaking, each ICL theory is also CP-theory.  Moreover, the ICL semantics of such a theory  (as formulated in, e.g., \cite{Poole97:jrnl}) can easily be seen to coincide with our instance based semantics for LPADs.  As such, an ICL theory can be seen as a CP-theory in which every CP-law is either deterministic or unconditional.  

 We can also translate certain LPADs to ICL in a straightforward way.  Concretely, this can be done for acyclic LPADs without exogenous predicates, for which the bodies of all CP-laws are conjunctions of literals.  Such a CP-law $r$ of the form:
  \[(h_1:\alpha_1) \lor \cdots \lor (h_n:\alpha_n) \leftarrow \phi\]
  is then transformed into the set of rules:
 \[\left\{
 \begin{aligned}
 h_1 &\leftarrow \phi \land Choice_r(1).\\ 
   &\cdots\\
 h_n & \leftarrow  \phi \land Choice_r(n).\\ 
 (Choice_r(1): \alpha_1) \lor \cdots \lor (Choice_r(n):\alpha_n).
\end{aligned}\right\}\]
The idea behind this transformation is that every selection of the original theory $C$ corresponds to precisely one selection of the translation $C'$.  More precisely, if we denote by $ChoiceRule(r)$ the last CP-law in the above translation of a rule $r$, then a $C$-selection $\sigma$ corresponds to the $C'$-selection $\sigma'$, for which for all $r \in C$, $\sigma(r) = (h_i:\alpha_i)$ iff $\sigma'(ChoiceRule(r)) = (Choice_r(i): \alpha_i)$.  It is quite obvious that this one-to-one correspondence preserves both the probabilities of selections and the (restrictions to the original vocabulary of the) well-founded models of the instances of selections.
This suffices to show that the probability distribution defined by $C$ coincides with the (restriction to the original vocabulary of) the probability distribution defined by $C'$.

So, our result on the equivalence between LPADs and CP-logic shows that the two parts of an ICL theory can be understood as, respectively, a set of unconditional probabilistic events and a set of deterministic causal events.  In this sense, our work offers a causal interpretation for ICL.  It is, in this respect, somewhat related to the work of Finzi et al.\ on causality in ICL.
In \cite{finzi03},  these authors present a mapping of ICL into Pearl's structural models and use this to derive a concept of actual causation for this logic, based on the work by Halpern \cite{halpernpearl}.  This approach is, however, somewhat opposite to ours.  Indeed, we view a CP-theory, with its structure based on individual probabilistic causal laws, as a more fine-grained model of causality.  Transforming a CP-theory into a structural model actually loses information, in the sense that it is not possible to recover the original structure of the theory.  From the point-of-view of CP-logic, the approach of Finzi et al.\ would therefore not make much sense, since it would attempt to define the concept of actual causation in a more fine-grained model of causal information by means of a transition to a coarser one.

\subsubsection{P-log}

P-log \cite{Baral04,baral08} is an extension of the language of Answer Set Prolog with new constructs for representing probabilistic information.  It is a sorted logic, which allows for the definition of {\em attributes}, which map tuples (of particular sorts) into a value (of a particular sort).  
Two kinds of probabilistic statements are considered.  The first are called {\em random selection rules} and are of the form:
\[ [r]\ random(A(\vec{t}) : \{x: P(x)\}) \leftarrow \phi.\]
Here, $r$ is a name for the rule, $P$ is an unary boolean attribute, $A$ is an attribute with $\vec{t}$ a vector of arguments of appropriate sorts, and $\phi$ is a collection of so-called extended literals\footnote{An extended literal is either a classical literal or a classical literal preceded by the default negation $not$, where a classical literal is either an atom $A(\vec{t}) = t_0$ or the classical negation $\lnot A(\vec{t}) = t_0$ thereof.}.  The meaning of a statement of the above form is that, if the body $\phi$ of the rule is satisfied, the attribute $A(\vec{t})$ is selected at random from the intersection of its domain with the set of all terms $x$ for which $P(x)$ holds (unless some deliberate action intervenes).  The label $r$ is a name for the experiment that performs this random selection.  The choice of which value will be assigned to this attribute is random and, by default, all possible values are considered equally likely.  It is, however, possible to override such a default, using the second kind of statements, called {\em probabilistic atoms}.  These are of the form:
\[ pr_r(A(\vec{t}) = y \mid_c \phi) = \alpha.\]
Such a statement should be read as: if the value of $A(\vec{t})$ is determined by the experiment $r$, and if also $\phi$ holds, then the probability of $A(\vec{t}) = y$ is $\alpha$.

The information expressed by a random selection rule and its associated probabilistic atoms is somewhat similar to a CP-law, but stays closer to a Bayesian network style representation.  Indeed, it expresses that, under certain conditions, the value of a certain attribute will be determined by some implicit random process, which produces each of a number of possible outcomes with a certain probability.  We see that, as in Bayesian networks, there is no way of directly representing information about the actual events that might take place; instead, only information about the way in which they eventually affect the value of some attribute (or random variable, in Bayesian network terminology) can be incorporated.  Therefore, representing the kind of phenomena discussed in Section \ref{sec:BNs}---namely, cyclic causal relations and effects with a number of independent possible  causes---requires the same kind of encoding in P-log as in Bayesian networks.

A second interesting difference is that a $random$-statement of P-log represents an experiment in which a value is selected from a {\em dynamic} set of alternatives, whereas, in CP-logic the set of possible outcomes is specified statically.  Consider, for instance, a robot that leaves a room by selecting at random one of the doors that happens to be open.  In P-log, this can easily be written down as:
\[ [r]\  random(Leave\_through : \{ x : Open\_door(x) \}). \]
In CP-logic, such a concise representation is currently not possible.

Apart from probabilistic statements, a P-log program can also contain a set of regular Answer Set Prolog rules and a set of observations and interventions.  
The difference between observations and interventions is the same as highlighted by Pearl, and \cite{baral07:ijcai} shows that interventions in P-log can be used to perform the same kind of counterfactual reasoning as Pearl does. One interesting difference, however, is that in P-log interventions are actually  represented {\em within} the theory, whereas Pearl's approach (as well as the one we presented in Section \ref{sec:inter}) views interventions as meta-manipulations of theories.  

In summary, the scope of P-log is significantly broader than that of CP-logic and it is a more full-blown knowledge representation language than CP-logic, which is only aimed at expressing a specific kind of probabilistic causal laws.  However, when it comes to representing just this kind of knowledge,  CP-logic offers the same advantages over P-log that it does over Bayesian networks.

\subsubsection{First-order Versions of Bayesian networks}

\label{sec:fobns}

In this section, we discuss two approaches that aim at lifting the propositional formalism of Bayesian networks to a first-order representation, namely {\em Bayesian Logic Programs (BLPs)} \cite{kersting00bayesian} and {\em Relational Bayesian Networks (RBNs)} \cite{jaeger:uai97}.

A Bayesian Logic Program or BLP consists of a set of definite clauses, using the symbol ``$\mid$'' instead of ``$\leftarrow$'', i.e., clauses of the form
\[ P(\vec{t}_0) \mid B_1(\vec{t_1}),\ldots, B_n(\vec{t_n}).\]
 in which $P$ and the $B_i$'s are predicate symbols and the $\vec{t_j}$'s are tuples of terms.  For every predicate symbol $P$, there is a domain $dom(P)$ of possible values.  The meaning of such a program is given by a Bayesian network, whose nodes consist of all the atoms in the least Herbrand model of the program.  The domain of a node for a ground atom $P(\vec{t})$ is $dom(P)$.  For every ground instantiation $P(\vec{t}_0) \mid B_1(\vec{t_1}),\ldots, B_n(\vec{t_n})$ of a clause in the program, the network contains an edge from each $B_i(\vec{t_i})$ to $P(\vec{t_0})$, and these are the only edges that exist.  

To complete the definition of this Bayesian network, all the relevant conditional probabilities also need to be defined.  To this end, the user needs to specify, for each clause in the program, a conditional probability table, which defines the conditional probability of every value in $dom(P)$, given an assignment of values to the atoms in the body of the clause.   Now, let us first assume that every ground atom in the Bayesian network is an instantiation of the head of precisely one clause in the program.  In this case, the tables for the clauses suffice to determine the conditional probability tables of the network, because every node can then simply take its probability table from this unique clause.  However, in general, there might be many such clauses.  To also handle this case, the user needs to specify, for each predicate symbol $P$, a so-called {\em combination rule}, which is a function that produces a single probability from a multiset of probabilities.  The conditional probability table for a ground atom $P(\vec{t})$ can then be constructed from the set of all clauses $r$, such that $P(\vec{t})$ is an instantiation of $head(r)$, by finding the appropriate entries in the tables for all such clauses $r$ and then applying the combination rule for $P$ to the multiset of these values.  According to the semantics of Bayesian Logic Programs, this combination rule will always be applied, even when there exists only a single such $r$. 

This completes the definition of BLPs as given in, e.g., \cite{kersting00bayesian}.  More recently, a number of issues with this formalism have led to the development of Logical Bayesian Networks \cite{Fierens05}.  These issues have also prompted the addition of so-called ``logical atoms'' to the original BLP language \cite{kersting:chapter}.  Since this does not significantly affect any of the comparisons made in this section, however, we will ignore this extension.

A {\em Relational Bayesian Network}  \cite{jaeger:uai97} is a Bayesian network in which the nodes correspond to predicate symbols and the domain of a node for a predicate $P/n$ consists of all possible interpretations of this predicate symbol in some fixed domain $D$, i.e., all subsets of $D^n$.  The conditional probability distribution associated to such a node $P$ is specified by a {\em probability formula} $F_p$.  For every tuple $\vec{d} \in D^n $, $F_p(\vec{d})$ defines the probability of $\vec{d}$ belonging to the interpretation of $P$ in terms of probabilities of tuples $\vec{d'}$ belonging to the interpretation of a predicate $P'$, where $P'$ is either a parent of $P$ in the graph or even, under certain conditions, $P$ itself.  Such a probability formula can contain a number of different operations on probabilities, including the application of arbitrary combination rules.  Such a Relational Bayesian Network can also be compiled into a network that is similar to that generated by a BLP, i.e., one in which the nodes correspond to domain atoms instead of predicate symbols.  The main advantage of such a compiled network is that it allows more efficient inference.

Again, the main difference between these two formalisms and CP-logic is that they both stick to the Bayesian network style of modeling, in the sense that the actual events that determine the values of the random variables are entirely abstracted away and only the resulting conditional probabilities are retained.  However, through the use of, respectively, combination rules and probability formulas, these can be represented in a more structured manner than in a simple table.  In this way, knowledge about, the underlying causal events can be exploited to represent the conditional probability distributions in a concise way.  The most common example is probably the use of the \noisyor\ to handle an effect which has a number of independent possible causes.  For instance, let us consider the Russian roulette problem of Example \ref{ex:roulette}.  In a BLP, the relation between the guns firing and the player's death could be represented by the following clause:
\[\begin{gathered} Death \mid Fire(X).\\
 \begin{array}{|r|c|c|}
\myhline
 & Fire(x) = \true & Fire(x) = \false \\
\myhline
Death = \true & 1/6 & 0 \\
\myhline
Death = \false & 5/6 & 1\\
\myhline
\end{array}\\
\text{Combination rule for }Death: \noisyor
  \end{gathered}\]

In Relational Bayesian Networks, this would be represented as follows:
\[
F_{Death} = \noisyor(\{ 1/6 \cdot Fire(x) \mid x \})
\]
\begin{center}
\begin{pgfpicture}{-0.5cm}{0.5cm}{2.5cm}{1.5cm}
\ovalnode{fire}{\pgfxy(0,1)}{Fire}
\ovalnode{death}{\pgfxy(2,1)}{Death}
\parent{death}{fire}
\end{pgfpicture}
\end{center}

As such, combination rules do allow some knowledge about the events underlying the conditional probabilities to be incorporated into the model.  However, this is of course not the same as actually having a structured representation of the events themselves, as is offered by CP-logic.  As a consequence of this, cyclic causal relations, such as that of our $Pneumonia$-$Angina$ example, still need the same kind of encoding as in a Bayesian network.

\subsubsection{Other approaches}

In this section, we give a quick overview of some other related languages. 
An important class of probabilistic logic programming formalisms are those following the {\em Knowledge Based Model Construction}  approach.  Such formalisms allow the representation of an entire ``class'' of propositional models, from which, for a specific query, an appropriate model can then be constructed ``at run-time''.  This approach was initiated by Breese \cite{breese92} and Bacchus \cite{Bacchus93} and is followed by both Bayesian Logic Programs and Relational Bayesian Networks.  Other formalism in this class are  {\em Probabilistic Knowledge Bases} of Ngo and Haddawy \cite{NgoHaddawy97} and {\em Probabilistic Relational Models} of Getoor et al.\ \cite{Getoor01:coll}.   From the point of view of comparison to CP-logic, both are very similar to Bayesian Logic Programs (see, e.g., \cite{kersting:techrep} for a comparison).

The language used in the {\em Programming in Statistical Modeling} system  (PRISM) \cite{prism3} is very similar to Independent Choice Logic.  Our comments concerning the relation between CP-logic and Independent Choice Logic therefore carry over to PRISM.

Like CP-logic, {\em Many-Valued Disjunctive Logic Programs} \cite{Lukasiewicz01:proc} are also related to disjunctive logic programming.  However, in this language, probabilities are associated with disjunctive clauses as a whole. In this way, uncertainty of the implication itself---and not, as is the case with LPADs or CP-logic, of the disjuncts in the head---is expressed. 

All the works mentioned so far use point probabilities.  There are however also a number of formalisms using probability intervals: {\em Probabilistic Logic Programs} of Ng and Subrahmanian \cite{NgSub92}, their extension to {\em Hybrid Probabilistic Programs} of Dekhtyar and Subrahmanian \cite{DekSub00} and {\em Probabilistic Deductive Databases} of Lakshmanan and Sadri \cite{LaksSadri94}.  Contrary to our approach, programs in these formalisms do not define a $single$ probability distribution, but rather a {\em set} of possible probability distributions, which allows one to express a kind of ``meta-uncertainty'', i.e., uncertainty about which probability distribution is the ``right'' one.  Moreover, the techniques used by these formalisms tend to have more in common with constraint logic programming than standard logic programming.  The more recent formalism of CLP(BN) \cite{clp(bn)} belongs to this class.

We also want to mention {\em Stochastic Logic Programs} of Muggleton and Cussens \cite{Cussens00:proc,Muggleton00:jrnl}, which is a probabilistic extension of Prolog.  In this formalism, probabilities are attached to the selection of clauses in Prolog's SLD-resolution algorithm, which basically results in a first-order version of stochastic context free grammars. Because of this formalism's strong ties to the procedural aspects of Prolog, it appears to be quite different from CP-logic and indeed all of the other formalisms mentioned here.

{\em ProbLog} \cite{deraedt07} is a more recent probabilistic extension of pure Prolog.  Here, too, every clause is labeled with a probability.  The semantics of ProbLog is very similar to that of LPADs and, in fact, the semantics of a {\em ground} ProbLog program coincides completely with that of the corresponding LPAD.  More precisely put, a ProbLog rule of the form:
\[ \alpha:\quad h \leftarrow b_1,\ldots,b_n,\]
where $h$ and the $b_i$ are ground atoms is entirely equivalent to the LPAD rule:   
\[(h: \alpha) \leftarrow b_1,\ldots,b_n.\]
For non-ground programs, however, there is a difference.  The semantics of an LPAD first grounds the entire program and then probabilistically selects instantiations of the rules of this ground program.  In ProbLog, on the other hand, selections directly pick out rules of the original program.  This means that, for instance, the following ProbLog-rule:
\[ 0.8:\quad   likes(X,Y) \leftarrow likes(X,Z), likes(Z,Y),\]
specifies that, with probability $0.8$, the $likes$-relation is entirely transitive, whereas the corresponding LPAD-rule would mean that for all {\em individuals} $a,b$ and $c$, the fact that $a$ likes $b$ and $b$ likes $c$ causes $a$ to like $c$ with probability $0.8$.

\section{Conclusions and future work}

\label{sec:future}

Causality has an inherent dynamic aspect, which can be captured at the semantical level by the probability tree framework that Shafer has developed in \cite{shafer:book}.  He concludes this book with the following observation:

\begin{quote}
When we think of a Bayes net as a representation of a probability tree, we sometimes may also want to leave indeterminate orderings that are not imposed by arrows in the graph, so that the net can be thought of as a representation not of a single tree but of a class of trees, corresponding to different choices for these orderings.  The possibility of introducing indeterminacy in the ordering of judgements is obviously equally present in the type-theoretical representation. [...]  [W]e can think of [a large collection of partially ordered judgements] as a set of rules from which martingale trees can be constructed. [...] More abstractly, they can be thought of as causal laws, and we can imagine many problems of deliberation being posed and solved directly in terms of these causal laws, without the specification of a martingale tree.  Thus type theory can take us beyond probability trees to a more general framework for causal deliberation.
\end{quote}
This paper can be seen as an extension of Shafer's work in the direction pointed at by the above remarks.  We have developed a logical language which uses {\em probabilistic causal laws} to concisely represent classes of probability trees.  Our representation does not explicitly impose any order on the possible events, since we move away from a representation in which it is the outcome of previous events that causes a new event, to one in which  new events are caused by properties of the current state of the domain.  Even though this means that the probability trees corresponding to a given set of causal laws might be considerably different from one another, we prove that they will  all still generate the same probability distribution over their final states.  Therefore, such causal laws capture precisely those properties of probability trees that we need to answer questions about the probabilities in these final states, which is what we are typically interested in.

A first contribution of this paper is the language CP-logic itself.  This language allows to represent a probability distribution over possible states of a domain by an enumeration of the probabilistic causal laws according to which it is generated.  As we tried to show by, among others, the comparison to Bayesian networks, this representation is often natural and concise.  A second contribution is that we have shown that CP-logic can be equivalently defined as a  probabilistic logic programming language.  Because both the meaning of statements in CP-logic, as well as their formal semantics, can be completely explained in terms of intuitions about probabilistic causal laws, this formal equivalence offers a new way of (informally) explaining the meaning of probabilistic logic programs.  This is a useful contribution to the existing modeling methodology for such languages.

By relating causality and logic programming in this way, our paper also serves as a unifying semantic study of existing probabilistic and non-probabilistics logics and formalisms. We showed how CP-logic refines causal Bayesian networks and several logics based on them. We also elaborated on the  links between CP-logic and existing logic programming extensions such as ICL, PRISM and LPADs, thus showing that these logics can also be viewed as causal probabilistic logics. For example, a theory in ICL can be understood as a combination of deterministic causal events and unconditional probabilistic events. As for logic programming itself, we showed that CP-logic induces a causal view on this formalism, in which rules represent deterministic causal events. We also argued that this view basically coincides with the view of logic programs as  inductive definitions. To be more concrete, we have shown that a normal logic program under the well-founded semantics can be understood as a set of deterministic causal statements and we have presented an alternative semantics for disjunctive logic programs (similar to that of \cite{sakama94:journal}) under which these can be interpreted as sets of non-deterministic causal events.

This paper is primarily intended to show how the concept of a probabilistic causal law can be formalized in a logical language, and to demonstrate the close relation of such a language  to probabilistic logic programs.  Because of this, we have intentionally kept our language quite simple.   As became
apparent in the comparison with other logics, such as {P-log}, {CP-logic}
therefore lacks the expressivity  to be truly useful for a broad
class of applications. To make it more suitable for practical
purposes, it should therefore be improved in a number of ways.  We
see the following opportunities for future research.



\paragraph{Refinement of CP-logic.}

The current language of CP-logic is restricted in a number of ways.  First, it only allows a finite number of CP-laws.  Let us consider, for instance, a die that is rolled as long as it takes to obtain a six.  Here, there is no upper bound on the number of throws that might be needed and, therefore, this example can currently not be represented in CP-logic.  Second, CP-logic is also limited in its representation of the effects of an event.  For instance, it is not possible to directly represent events whose range of possible outcomes is not completely fixed beforehand.  Also, we currently do not allow different events to cancel out or reinforce each other's effects.  Third, and somewhat related to the previous point, CP-logic currently can only handle properties that are either fully present or fully absent.  As such, it cannot correctly represent causes which have only a contributory effect, e.g., turning on a tap would not instantaneously cause a basin to be full, but only contribute a certain amount per time unit.

\paragraph{Integration into a larger formalism.}
To correctly formalise a domain in CP-logic, a user must exactly know the causes and effects of all relevant events that might happen.  For real domains of any significant size, this is an unrealistic assumption.  Indeed, typically, one will only have such detailed knowledge about certain parts of a domain.  So, in order to still be able to use CP-logic in such a setting, it would have to be integrated with other forms of knowledge.  There are some obvious candidates for this: statements about the probabilities of certain properties, statements about probabilistic independencies (such as those in Bayesian networks), and constraints on the possible states of the domain.  Integrating these different forms of knowledge without losing conceptual clarity is one of the main challenges for future work regarding CP-logic, and perhaps even for the area of uncertainty in artificial intelligence as a whole. 

\paragraph{Inference.}

The most obvious inference task in the context of CP-logic is calculating the probability $\pi_C(\phi)$ of a formula $\phi$.  A straightforward way of doing this would be to exploit the relation between CP-logic and (probabilistic) logic programming, such that we perform these computations by reusing existing algorithms (e.g., the inference algorithm of Poole's independent choice logic \cite{Poole97:jrnl}) in an appropriate way.  A more advanced technique, using binary decision diagrams, has recently been developed in \cite{Riguzzi07}.  Another interesting inference task concerns the construction of a theory in CP-logic.  For probabilistic modeling languages in general, it is typically not desirable that a user is forced to estimate or compute concrete probability values herself; instead, it should be possible to automatically derive these from a given data set.  For CP-logic, there already exist algorithms that are able to do this in certain restricted cases \cite{Rig04-ILP04-IC,Blockeel06}.  It would be interesting to generalize these, in order to make them generally applicable.  Besides such learning of probabilistic parameters, it is also possible to learn the structure of the theory itself.  This too is an important topic, because if we are able to construct the theory that best describes a given data set, we are in effect finding out which causal mechanisms are most likely present in this data.  Such information can be relevant for many domains.  For instance, when bio-informatics attempts to distinguish active from non-active compounds, this is exactly the kind of information that is needed.  In \cite{Meert07}, it is discussed how certain Bayesian network learning techniques can be adapted to perform structure learning for ground CP-logic.


\section*{Acknowledgements}
This research was supported by GOA 2003/8 Inductive Knowledge Bases and by FWO Vlaanderen. Joost Vennekens is a postdoctoral researcher of the FWO.

\appendix 
\section{Proofs of the theorems}

\label{sec:proofs}
\label{ch:proofs}

In this section, we present proofs of the theorems that were stated in the previous section.  To ease notation, we will assume that there are no exogenous predicates.  This can be done without loss of generality, since all our results can simply be relativized with respect to some fixed interpretation for these predicates.

\subsection{The semantics is well-defined}
\label{prf:well-def}

We start by proving that the semantics of CP-logic---and in
particular, the partial interpretation $\nu_s$, the potential in
$s$, used in the additional condition imposed by Definition
\ref{def:exmodelgen} for handling negation---is indeed well-defined.
Since we defined $\nu_s$ as the unique limit of all terminal
hypothetical derivation sequences of $s$, this requires us to show
that all such sequences indeed end up in the same limit (Theorem
\ref{th:samelim}).

Let us consider a CP-theory $C$ and state $s$ in an execution model of
$C$.  We will denote by $\rl(s)$ the set of all CP-laws $r \in C$
that have not yet happened in $s$, i.e., for which there is no
ancestor $s'$ of $s$ with $\ce(s') = r$. Consider the collection $O_s$
of all partial interpretions $\nu$ such that for each atom $p$,  $p^\nu=\true$ iff 
$p^{\ci(s)}=\true$, and for each rule $r\in \rl(s)$,
if $body(r)^\nu \neq \false$, then for each atom $p\in head_{At}(r)$,
$p^\nu \neq \false$. Stated differently, $\nu$ can be obtained from
$\ci(s)$ by turning false atoms of $\ci(s)$ into unknown atoms in such a way that if
the body of some rule $r\in \rl(s)$ is unknown or true in $\nu$, then
each of its head atoms is unknown or true in $\nu$ as well.

\begin{proposition} Let $(\nu_i)_{0 \leq i \leq n}$ be a  hypothetical derivation
sequence in state $s$. 
\begin{itemize}
\item For each $0\leq i \leq n$ and each $\nu\in O_s$ it holds that $\nu
  \leqp \nu_i$.
\item The limit $\nu_n = \nu_s$ is an element of $O_s$.
\end{itemize} \label{prop:Os}
\end{proposition}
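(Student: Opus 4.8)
The plan is to prove the two items separately, since the first is an induction along the derivation sequence and the second is a terminality argument. Both rest on two elementary facts about the precision order: that $\unknown$ is its least element while $\false$ and $\true$ are maximal (so $x \leqp \false$ holds exactly when $x \in \{\unknown,\false\}$, and $\false \leqp y$ forces $y = \false$), together with $\leqp$-monotonicity of three-valued evaluation, i.e.\ $\nu \leqp \nu'$ implies $\varphi^\nu \leqp \varphi^{\nu'}$ for every formula $\varphi$.

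For the first item I would argue by induction on $i$ that $\nu \leqp \nu_i$ for every $\nu \in O_s$. In the base case $\nu_0$ is just $\ci(s)$ read as a total interpretation: it assigns $\true$ to the atoms of $\ci(s)$ and $\false$ to all others. Any $\nu \in O_s$ assigns $\true$ to exactly the atoms of $\ci(s)$, and on the remaining atoms it is $\unknown$ or $\false$, hence $\leqp \false$; so $\nu \leqp \nu_0$. For the step from $\nu_i$ to $\nu_{i+1}$, recall that it selects a rule $r \in \rl(s)$ (these being the rules that can still fire in $s$) with $body(r)^{\nu_i} \neq \false$ and turns every head atom $p \in head_{At}(r)$ with $\nu_i(p) = \false$ into $\unknown$, leaving all other atoms unchanged. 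At the unchanged atoms $\nu \leqp \nu_{i+1}$ is immediate from the induction hypothesis; the only atoms to check are the head atoms $p$ that become $\unknown$, where I must show $\nu(p) = \unknown$. This is the crux: from $\nu \leqp \nu_i$ and monotonicity, $body(r)^\nu \leqp body(r)^{\nu_i}$, and since $body(r)^{\nu_i} \neq \false$, maximality of $\false$ forces $body(r)^\nu \neq \false$ as well. The defining closure property of $O_s$ for the rule $r \in \rl(s)$ then gives $p^\nu \neq \false$ for every head atom $p$; combining this with $\nu(p) \leqp \nu_i(p) = \false$ pins $\nu(p)$ down to $\unknown$, which is exactly $\nu_{i+1}(p)$. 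This completes the induction and, for $i = n$, yields $\nu \leqp \nu_n = \nu_s$ for all $\nu \in O_s$.

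For the second item I would verify the two defining conditions of $O_s$ for $\nu_n$ directly. The condition on true atoms is an invariant of the construction: every step changes atoms only from $\false$ to $\unknown$ and never touches a $\true$ atom, so the set of atoms evaluated to $\true$ is constant along the sequence and equals that of $\nu_0$, i.e.\ of $\ci(s)$; hence $p^{\nu_n} = \true$ iff $p^{\ci(s)} = \true$. For the closure condition, suppose toward a contradiction that some $r \in \rl(s)$ has $body(r)^{\nu_n} \neq \false$ while some $p \in head_{At}(r)$ has $p^{\nu_n} = \false$. Then $r$ witnesses a legal further step of the derivation sequence, setting this $p$ (and any other false head atom of $r$) to $\unknown$ and producing an interpretation strictly below $\nu_n$, contradicting terminality of $(\nu_i)_{0 \leq i \leq n}$. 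Hence the closure condition holds and $\nu_n \in O_s$.

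The main obstacle is the precision bookkeeping in the inductive step: one must be careful that it is the property ``the body is not $\false$'' that descends along $\leqp$ (because $\false$ is maximal), rather than its negation, so that the closure clause of $O_s$ fires for $\nu$ exactly when it fired for $\nu_i$. Once the proposition is in place, Theorem \ref{th:samelim} follows immediately: given two terminal sequences with limits $\nu_n$ and $\nu'_m$, the second item puts both limits in $O_s$, and the first item, applied once in each direction, gives $\nu'_m \leqp \nu_n$ and $\nu_n \leqp \nu'_m$, whence $\nu_n = \nu'_m$ by antisymmetry of $\leqp$.
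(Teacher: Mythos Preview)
Your proof is correct and follows essentially the same approach as the paper's. The only cosmetic difference is that in the inductive step the paper argues by contradiction (assuming $p$ is false in $\nu$ but unknown in $\nu_{i+1}$ and deriving $p^\nu \neq \false$), whereas you argue directly that $\nu(p)=\unknown$ on the changed atoms; the underlying use of $\leqp$-monotonicity to propagate ``$body(r)$ is not $\false$'' down to $\nu$ and then invoke the closure clause of $O_s$ is identical.
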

\begin{proof}
  The first property can be proven by a straigthforward induction.
  Clearly, it holds that $\nu \leqp \nu_0 = \ci(s)$. Assume $\nu\leqp
  \nu_i$ for some $i<n$. The true atoms of $\nu$ and $\nu_{i+1}$ are
  those of $\ci(s)$, so they are the same.  Therefore, it suffices to show that every atom $p$ that is false in $\nu$ is also false in $\nu_{i+1}$, or, since $\nu$ and $\nu_{i+1}$ have the same true atoms, that every such $p$ is not unknown in $\nu_{i+1}$.  Assume towards contradiction that $p$ is false in $\nu$ and unknown in $\nu_{i+1}$.  By the induction hypothesis, $p$ is still false in $\nu_i$.  Therefore, $p$
  belongs to the head of some rule $r\in \rl(s)$ such that
  $body(r)^{\nu_i}\neq\false$.  Since $\nu\leqp\nu_i$, this would
  imply that $body(r)^\nu\neq\false$, which, given that $\nu\in O_s$,
  leads to the contradiction that $p^\nu\neq \false$.  Hence, $p$ is
  false in $\nu_{i+1}$. It follows that $\nu \leqp \nu_{i+1}$.

  As for the second property, it is clear that $\nu_s$ can be obtained
  from $\ci(s)$ by turning some false atoms into unknown atoms, and
  that there are no more rules $r \in \rl(s)$ with a non-false body
  and false atoms in the head w.r.t. $\nu_s$. Hence, $\nu_s\in O_s$.
\end{proof}

We can now use this set $O_s$ to characterize the limit $\nu_n$ of any hypothetical derivation sequence $(\nu_i)_{0 \leq i \leq n}$ in $s$.

\begin{theorem} Let  $(\nu_i)_{0 \leq i \leq n}$ be a hypothetical derivation sequence in $s$ and let $\nu$ be the least upperbound of $O_s$ w.r.t.~the precision order $\leq_p$.  Then $\nu_n = \nu$.
\end{theorem}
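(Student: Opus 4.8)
The plan is to derive the statement as an immediate consequence of the two parts of Proposition \ref{prop:Os}, which together pin down $\nu_n$ as the greatest element of $O_s$; the only remaining observation is the elementary order-theoretic fact that a greatest element of a partially ordered set is also its least upper bound.

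First I would apply the first part of Proposition \ref{prop:Os} with $i = n$: for every $\nu' \in O_s$ we have $\nu' \leqp \nu_n$, so $\nu_n$ is an upper bound of $O_s$. Since $\nu$ is by definition the \emph{least} upper bound of $O_s$ with respect to $\leqp$, this immediately gives $\nu \leqp \nu_n$.

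Next I would invoke the second part of Proposition \ref{prop:Os}, which states that the limit $\nu_n = \nu_s$ is itself a member of $O_s$. As $\nu$ is an upper bound of $O_s$, every element of $O_s$, and in particular $\nu_n$, satisfies $\nu_n \leqp \nu$. Combining the two inequalities with the antisymmetry of the precision order $\leqp$ yields $\nu_n = \nu$, which is the claim.

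The substance of the argument is therefore carried entirely by Proposition \ref{prop:Os}, so there is no serious obstacle left at this stage; the step is essentially a two-line combination. The one point worth stressing is the interplay between the two parts: the first shows that the endpoint of an \emph{arbitrary} hypothetical derivation sequence dominates all of $O_s$, while the second shows that this same endpoint lies \emph{inside} $O_s$, so that it must coincide with $\lub_{\leqp}(O_s)$. Because $O_s$ is determined solely by the state $s$ (through $\ci(s)$ and $\rl(s)$) and does not depend on the chosen sequence, this characterization also makes the independence of the limit from the particular sequence, i.e.\ Theorem \ref{th:samelim}, transparent.
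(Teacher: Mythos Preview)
Your proof is correct and follows essentially the same approach as the paper: both derive the two inequalities $\nu \leqp \nu_n$ and $\nu_n \leqp \nu$ from the two parts of Proposition~\ref{prop:Os}. The only cosmetic difference is that the paper first observes that the least upper bound $\nu$ itself lies in $O_s$ and then applies the first bullet to $\nu$ directly, whereas you argue that $\nu_n$ is an upper bound of $O_s$ and use the minimality of $\nu$; your route is arguably slightly cleaner since it sidesteps the need to check $\nu \in O_s$.
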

\begin{proof}
It is obvious that $\nu$ itself also belongs to $O_s$.  Therefore, by the first bullet of Proposition \ref{prop:Os}, $\nu \leq_p \nu_n$.  Because, by the second bullet of Proposition \ref{prop:Os}, $\nu_n$ also belongs to  $O_s$, we have that $\nu \geq_p \nu_n$ as well.
\end{proof}

Since this theorem shows that all hypothetical derivation sequences converge to the most precise element of $O_s$, it implies Theorem
\ref{th:samelim} and, therefore, our semantics is indeed well
defined.

\subsection{CP-logic and LPADs are equivalent}
\label{prf:lpads}

Let $C$ be an LPAD. Let us define a {\em partial $C$-selection} as a
partial function $\sigma$ from $C$ mapping rules $r$ of a subset
$dom(\sigma) \subseteq C$ to pairs $(p:\alpha)\in head^*(r)$. The
probability function of selections can be extended to partial
selections by setting $P(\sigma) = \prod_{r\in dom(\sigma)}
\sigma^\alpha(r)$. Define also $S(\sigma)$ as the set of
$C$-selections that extend $\sigma$. The following equation is obvious:
\[
 P(\sigma) = \sum_{\sigma'\in S(\sigma)} P(\sigma')
\]
We define an {\em instance} of $\sigma$ as any instance $C^{\sigma'}$
in which $\sigma'$ is a $C$-selection that extends $\sigma$.

Let $\pts$ be an execution model of $C$. Clearly, each node $s$ in
$\pts$ determines a unique partial $C$-selection, denoted $\sigma(s)$.
Formally, if $(s_i)_{0\leq i\leq n}$ is the path from the root to $s$,
then the domain of $\sigma(s)$ is $\{\ce(s_i) \mid 0 \leq i < n\}$ and each rule $r = \ce(s_i)$ in its domain is mapped to the atom
$p \in head^*(r)$ that was selected for $s_{i+1}$. Moreover, we have 
\begin{equation}\label{eq:prop:sel:nodes}
 \cp(s) = P(\sigma(s))  = \sum_{\sigma'\in S(\sigma(s))} P(\sigma').
\end{equation}

With the path $(s_i)_{0\leq i\leq n}$ from the root to some node $s$, we now
also associate a sequence of partial interpretations $(K_j)_{j=0}^{2n+1}$
defined as follows:
\begin{itemize}
\item $K_0 = \bot$, the partial interpretation mapping all atoms
  to $\unknown$.
\item $K_{2i+1} = \nu_{s_i}$, for all $0\leq i \leq n$.
\item $K_{2i+2} = \nu_{s_i}[p:\true]$, for all $0\leq i < n$, where $p$ is the head atom of $\ce(s_i)$ selected to obtain $s_{i+1}$.
\end{itemize}

\begin{proposition} \label{prop:wf:ind}
  For each $\sigma \in \sigma(s)$, $(K_j)_{j=0}^{2n+1}$ is a well-founded induction  of $C^\sigma$.
\end{proposition}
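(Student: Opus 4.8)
The plan is to verify directly that the finite sequence $(K_j)_{j=0}^{2n+1}$ meets the definition of a well-founded induction of the instance $C^\sigma$, for every $C$-selection $\sigma$ extending the partial selection $\sigma(s)$ (the set denoted $S(\sigma(s))$; that $\sigma$ extends $\sigma(s)$ is exactly what guarantees the rules used below occur in $C^\sigma$). Since the sequence is finite, no limit-ordinal clause is involved, and since $K_0 = \bot$ matches the base case, it remains only to check that each one-step transition $K_j \to K_{j+1}$ is either a ``true'' step or an ``unfounded set'' step. These alternate: the transitions $K_{2i+1}=\nu_{s_i} \to K_{2i+2}=\nu_{s_i}[p:\true]$ are ``true'' steps, while the transitions $K_{2i} \to K_{2i+1}=\nu_{s_i}$ are ``unfounded set'' steps. (When the selected outcome $p$ is $\neither$ no rule enters $C^\sigma$, and the corresponding ``true'' step is vacuous; it may be read as an empty unfounded-set step.)

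For the ``true'' step at $s_i$, let $p = \sigma^h(\ce(s_i))$ be the selected head atom; since $\sigma$ extends $\sigma(s)$, the rule $p \rul body(\ce(s_i))$ belongs to $C^\sigma$, so I would show $body(\ce(s_i))^{\nu_{s_i}} = \true$. This combines three facts: the execution-model conditions give $\ci(s_i) \models body(\ce(s_i))$ and, by Definition \ref{def:exmodelgen}, $body(\ce(s_i))^{\nu_{s_i}} \neq \unknown$; moreover $\nu_{s_i} \leqp \ci(s_i)$, because $\nu_{s_i}$ agrees with $\ci(s_i)$ on the true atoms and is merely less precise elsewhere. Monotonicity of three-valued evaluation then yields $body(\ce(s_i))^{\nu_{s_i}} \leqp body(\ce(s_i))^{\ci(s_i)} = \true$, and together with the $\neq \unknown$ condition this forces the value $\true$, as required.

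For the ``unfounded set'' step, the crux is to show $K_{2i} \leqp \nu_{s_i}$ with the two interpretations having identical true atoms, so that the step flips precisely the set $U_i$ of atoms that are $\unknown$ in $K_{2i}$ and $\false$ in $\nu_{s_i}$. I would obtain $K_{2i}\leqp\nu_{s_i}$ by showing $K_{2i} \in O_{s_i}$ and invoking Proposition \ref{prop:Os}, which characterises $\nu_{s_i}$ as the $\leqp$-greatest element of $O_{s_i}$. Membership of $K_{2i}=\nu_{s_{i-1}}[p_{i-1}:\true]$ in $O_{s_i}$ is checked against the definition of $O_{s_i}$: its true atoms are exactly $\ci(s_i)$, and for any rule $r \in \rl(s_i)=\rl(s_{i-1})\setminus\{\ce(s_{i-1})\}$ with $body(r)^{K_{2i}} \neq \false$, monotonicity applied to $\nu_{s_{i-1}} \leqp K_{2i}$ (whose sole difference is the flip of $p_{i-1}$ from $\unknown$ to $\true$) gives $body(r)^{\nu_{s_{i-1}}} \neq \false$, whence $\nu_{s_{i-1}}\in O_{s_{i-1}}$ keeps all head atoms of $r$ non-$\false$, and the single $\unknown\to\true$ flip preserves this in $K_{2i}$. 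Finally, to see that $U_i$ is unfounded I would take any rule $p \rul \varphi$ of $C^\sigma$ with $p \in U_i$ and argue by cases on its originating rule $r$: if $r \in \rl(s_i)$, then the $O_{s_i}$ property applied to $\nu_{s_i}$, where $p$ is $\false$, forces $\varphi^{\nu_{s_i}} = \false$; if $r$ has already fired on the path, then $\sigma^h(r)=p$ would place $p$ in $\ci(s_i)$, making $p$ true in $\nu_{s_i}$ and contradicting $p \in U_i$, so this case cannot arise.

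The main obstacle is the ``unfounded set'' step, specifically the monotonicity bookkeeping needed to establish $K_{2i}\in O_{s_i}$: one must argue in the correct direction that raising precision by a single $\unknown\to\true$ flip can neither turn a $\false$ rule-body into a non-$\false$ one nor turn any non-$\false$ head atom into $\false$. The ``true'' steps and the case analysis for unfoundedness are then comparatively routine consequences of the definitions and of Proposition \ref{prop:Os}.
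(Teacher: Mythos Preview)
Your proposal is correct and follows essentially the same approach as the paper's proof: both establish the ``true'' step by arguing that $body(\ce(s_i))^{\nu_{s_i}}=\true$, and both handle the unfounded-set step by showing $K_{2i}\in O_{s_i}$ via Proposition~\ref{prop:Os} (using that $\nu_{s_{i-1}}\leqp K_{2i}$ and $\nu_{s_{i-1}}\in O_{s_{i-1}}$) and then reading off unfoundedness from $\nu_{s_i}\in O_{s_i}$. Your write-up is in fact slightly more explicit than the paper's in one place---you spell out the monotonicity argument combining $\ci(s_i)\models body(\ce(s_i))$, $\nu_{s_i}\leqp\ci(s_i)$, and the $\neq\unknown$ clause of Definition~\ref{def:exmodelgen}, where the paper simply writes ``Hence''---and you organise the argument as a direct check of each transition rather than wrapping it in an induction on the path length, but the content is the same.
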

\begin{proof}
  The proof is by induction on the length $n$ of the path from the root of $\pts$ to $s$.  

  We start by proving that $(K_j)_{j=0}^{2n}$ is a well-founded
  induction of all instances $C^\sigma$ with $\sigma \in S(\sigma(s))$. If $n=0$, then $s$ is the root of the tree and  $\sigma(s)$ is
  the empty partial selection.  The sequence $(K_0)$ is obviously a
  well-founded induction  of any instance $C^\sigma$. 
  For $n>0$, the induction hypothesis states that $(K_j)_{j=0}^{2n-1}$
  is a well-founded induction  of all instances $C^\sigma$, where $\sigma$ belongs to 
  $S(\sigma(s_{n-1}))$.  Let $r$ be $\ce(s_{n-1})$, the rule selected in
  $s_{n-1}$, and let $K_{2n}=K_{2n-1}[p:\true]$ where $p$ was selected
  in the head of $r$ to obtain $s$. Hence, $body(r)$ is true in
  $K_{2n-1} =\nu_{s_{n-1}}$. Clearly, for each $\sigma \in  
  S(\sigma(s))$, $C^\sigma$ contains the rule $p\leftarrow body(r)$. Consequently,
  $(K_j)_{j=0}^{2n}$ is a well-founded induction of $C^\sigma$.

  Next, we prove that $(K_j)_{j=0}^{2n+1}$ is a well-founded
  induction of all $C^\sigma$ with  $\sigma \in S(\sigma(s))$. Let us investigate the set
  $U$ of all atoms $q$ such that $K_{2n}(q) \neq K_{2n+1}(q)$. We will
  prove that all atoms of $U$ are unknown in $K_{2n}$ and false in
  $K_{2n+1}$ and that $U$ is an unfounded set of  $C^\sigma$. It then will follow that $(K_j)_{j=0}^{2n+1}$ is a
  well-founded induction of $C^\sigma$.

  Let us first verify that all atoms in $U$ are unknown in $K_{2n}$
  and false in $K_{2n+1}$. If $n=0$, then $K_0 = \nu_0 = K_1$, so $U = \{\}$ and the statement trivially holds. Let $n>0$.
  Recall that $K_{2n}$ is $\nu_{s_{n-1}}[p:\true]$, where $p$ is the
  atom selected in the head of $\ce(s_{n-1})$ to obtain $s$, and
  $K_{2n+1} = \nu_{s}$.  It is easy to see that the true atoms of
  $K_{2n}$ and $K_{2n+1}$ are identical to those true in $\ci(s)$.
  Hence, $K_{2n}$ and $K_{2n+1}$ only differ on false or unknown
  atoms.  To show that $U$ contains only atoms that are unknown in $K_{2n}$ and false in $K_{2n+1}$, it therefore suffices to show that all atoms false in $K_{2n}$ are also
  false in $K_{2n+1}$.  To prove this, it suffices to show that
  $K_{2n}\in O_s$. Indeed, if $K_{2n} \in O_s$, Proposition
  \ref{prop:Os} entails that $\nu_s\geq_p K_{2n}$ and hence, all atoms
  false in $K_{2n}$ are false in ${\nu_{s}} = K_{2n+1}$.

  We observe that, since $\nu_{s_{n-1}}$ belongs to $O_{s_{n-1}}$
  (Proposition \ref{prop:Os}), all head atoms of rules $r\in
  \rl(s_{n-1})$ with a non-false body in $\nu_{s_{n-1}}$, are true or
  unknown in $\nu_{s_{n-1}}$. In particular, $\ce(s_{n-1}) \in
  \rl(s_{n-1})$ and has a true body in $\nu_{s_{n-1}}$, hence $p$ is
  true or unknown in $\nu_{s_{n-1}}$. It follows that:
  $$\nu_{s_{n-1}} \leqp \nu_{s_{n-1}}[p:\true] = K_{2n}.$$  
  It follows that any rule $r\in \rl(s) \subseteq \rl(s_{n-1})$ with a
  non-false body in $K_{2n}$, has a non-false body in $\nu_{s_{n-1}}$;
  hence, all atoms in the head of such an $r$ are true or unknown in
  $\nu_{s_{n-1}}$ and, {\em a fortiori}, in $K_{2n} =
  \nu_{s_{n-1}}[p:\true]$. Thus, we obtain that $K_{2n}\in O_{s}$, as
  desired.

  So far, we have proven that $K_{2n+1}=K_{2n}[U:\false]$ and that all
  elements in $U$ are unknown in $K_{2n}$. It follows that $K_{2n}
  \leqp K_{2n+1}$ and, more generally, that $K_j \leqp K_{2n+1}$, for
  all $j\leq 2n$.  All that remains to be shown is that $U$ is
  an unfounded set of each instance of $\sigma(s)$. Let $C'$ be such
  an instance and for any atom $q \in U$, let $q\leftarrow \varphi$ be
  a rule of $C'$. We need to show that $\varphi$ is false in
  $K_{2n+1}$. The rule is obtained as an instance of some rule $r\in C$
  with $q$ in its head. The rule $r$ is not one of the rules
  $\ce(s_i)$ with $i < n$, since otherwise $q$ would be true in $\ci(s_j)$ for all $j > i$ and, in particular, also in $\nu_{s_n} = K_{2n +1}$, which would contradict the fact that we have already shown $q$ to be false in $K_{2n+1}$.  It follows that $r\in \rl(s)$.  Since
  $K_{2n+1} = \nu_{s} \in O_{s}$ and $q$ is false in $\nu_{s}$,
  $body(r) = \varphi$ is false in $K_{2n+1}$.

\end{proof}

\begin{proposition} \label{prop:wfm:leaf} For each leaf $l$ of an
  execution model $\pts$ of $C$,  $\ci(l)$ is the well-founded
  model of each instance $C^\sigma$ with $\sigma \in S(\sigma(l))$.
\end{proposition}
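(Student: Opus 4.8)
The plan is to show that the sequence $(K_j)_{j=0}^{2n+1}$ attached to the leaf $l$ is a \emph{terminal} well-founded induction of $C^\sigma$ whose limit is $\ci(l)$; the conclusion then follows at once from the facts recalled in Section~\ref{prel:lp}, namely that a well-founded induction whose limit is total is terminal, and that every terminal well-founded induction converges to the well-founded model. Here $n$ denotes the length of the path from the root of $\pts$ to $l$. By Proposition~\ref{prop:wf:ind}, for each $\sigma \in S(\sigma(l))$ the sequence $(K_j)_{j=0}^{2n+1}$ is already known to be a well-founded induction of $C^\sigma$, and by its construction its limit is $K_{2n+1} = \nu_{l}$, the potential in $l$. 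So the whole proposition reduces to the single identity $\nu_l = \ci(l)$: once we know that this limit is the total interpretation $\ci(l)$, the preliminaries guarantee both that the induction is terminal and that $\ci(l)$ is the well-founded model of $C^\sigma$.

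The key step, therefore, is to prove that the potential $\nu_l$ collapses to the two-valued interpretation $\ci(l)$. I would argue this directly from the leaf condition of Definition~\ref{execpos}: since $l$ is a leaf of an execution model, there is no rule $r \in \rl(l)$ with $\ci(l) \models body(r)$, and because $\ci(l)$ is total this means $body(r)^{\ci(l)} = \false$ for every $r \in \rl(l)$. Now consider any hypothetical derivation sequence in $l$ started from $\nu_0$, which by definition assigns $\true$ to the atoms of $\ci(l)$ and $\false$ to all others, i.e.\ $\nu_0 = \ci(l)$ regarded two-valuedly. Since every rule $r \in \rl(l)$ already has $body(r)^{\nu_0} = \false$, no rule can turn a false atom into an unknown one, so the one-element sequence $(\nu_0)$ is itself terminal. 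By Theorem~\ref{th:samelim}, all terminal hypothetical derivation sequences in $l$ reach the same limit, whence $\nu_l = \nu_0 = \ci(l)$.

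Combining the two observations finishes the argument: $(K_j)_{j=0}^{2n+1}$ is a well-founded induction of $C^\sigma$ whose limit $K_{2n+1} = \nu_l = \ci(l)$ is total, hence terminal, hence its limit is the well-founded model of $C^\sigma$, as claimed. The hard part will be exactly the identity $\nu_l = \ci(l)$, and the delicate point there is that the hypothetical derivation in a leaf must quantify only over the not-yet-fired rules $\rl(l)$ (just as the set $O_s$ of the previous subsection does), since it is precisely for those rules that the leaf condition supplies a false body; if already-fired rules were permitted to fire again, the potential could acquire spurious unknown atoms and the collapse to $\ci(l)$ would fail. Once this is pinned down, the remainder is a routine invocation of the machinery already established in Proposition~\ref{prop:wf:ind} and Theorem~\ref{th:samelim}.
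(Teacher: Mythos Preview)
Your proposal is correct and follows essentially the same route as the paper: invoke Proposition~\ref{prop:wf:ind} to obtain a well-founded induction of $C^\sigma$ with limit $\nu_l$, establish $\nu_l = \ci(l)$ from the leaf condition, and conclude by totality. The only minor variation is in how you obtain $\nu_l = \ci(l)$: you argue directly that the one-element sequence $(\ci(l))$ is a terminal hypothetical derivation sequence, whereas the paper observes that $\ci(l) \in O_l$ and then appeals to Proposition~\ref{prop:Os} to get $\nu_l \geq_p \ci(l)$, which together with two-valuedness forces equality; both arguments are short and rest on the same observation that every $r \in \rl(l)$ has a false body in $\ci(l)$.
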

\begin{proof}
  Let $l$ be a leaf and $\sigma \in S(\sigma(l))$.  By Proposition \ref{prop:wf:ind},
  $(K_j)_{j=0}^{2n+1}$ is a well-founded induction of $C^\sigma$.  Because $l$ is a leaf, we have that for every rule $r \in \rulesleft(l)$, $body(r)$ is false in $\ci(l)$.  Therefore, $\ci(l) \in O_l$ and Proposition \ref{prop:Os} states that $\nu_l \geq_p \ci(l)$.  However, because $\ci(l)$ is two-valued, this implies that $\nu_l = \ci_l$.  Therefore, $K_{2n +1} = \nu_l$ is a total interpretation.  Because a well-founded induction with a total limit is
  terminal, $\ci(l)$ is the well-founded model of $C^\sigma$.
\end{proof}

This now allows us to prove the desired equivalence, which was
previously stated as Theorem \ref{semcorr}.

\begin{theorem} Let $\pts$ be an execution model of a CP-theory $C$.  For each interpretation $J$,
\begin{equation*}  \mu_C(J) = \pi_\pts(J).
\end{equation*}
\end{theorem}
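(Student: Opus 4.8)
The plan is to show that both distributions assign to $J$ the same total probability, by establishing a partition of the selection space $\sels_C$ that is induced by the leaves of $\pts$. I would fix an interpretation $J$ and start from the definition $\pi_\pts(J) = \sum_{l \in L_\pts(J)} \cp(l)$, where the sum ranges over all leaves $l$ of $\pts$ with $\ci(l) = J$. Applying equation \eqref{eq:prop:sel:nodes} to each such leaf rewrites this as the double sum $\sum_{l : \ci(l) = J} \sum_{\sigma' \in S(\sigma(l))} P(\sigma')$. The goal is then to recognise this quantity as $P(W(J)) = \mu_C(J)$.

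The central step is to prove that the family $\{ S(\sigma(l)) \mid l \text{ a leaf of } \pts \}$ partitions $\sels_C$ (up to selections of probability zero, which contribute nothing to any of the sums). For coverage, I would argue that any selection $\sigma$ of positive probability determines a unique branch of $\pts$: starting from the root, at each non-leaf node $s$ the value $\sigma(\ce(s)) \in head^*(\ce(s))$ selects exactly one outgoing edge, namely the child that adds the chosen head atom, or the child leaving $\ci(s)$ unchanged when $\sigma(\ce(s)) = (\neither, \cdot)$. Since each rule fires at most once on a branch and $C$ is finite, this descent terminates at a leaf $l$, and by construction $\sigma$ extends $\sigma(l)$, so $\sigma \in S(\sigma(l))$. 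For disjointness, the same deterministic descent shows the leaf reached by $\sigma$ is unique, so $\sigma$ cannot extend both $\sigma(l)$ and $\sigma(l')$ for distinct leaves $l \neq l'$.

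Given the partition, the probabilities combine cleanly: $\sum_{\sigma' \in S(\sigma(l))} P(\sigma') = P(S(\sigma(l)))$, and summing over the leaves with $\ci(l) = J$ gives $P\big( \bigcup_{l : \ci(l) = J} S(\sigma(l)) \big)$ since the union is disjoint. It then remains to identify this union with $W(J)$. For one inclusion, Proposition \ref{prop:wfm:leaf} yields that every $\sigma' \in S(\sigma(l))$ with $\ci(l) = J$ satisfies $wfm(C^{\sigma'}) = (\ci(l), \ci(l)) = (J, J)$, hence $\sigma' \in W(J)$. For the converse, any $\sigma \in W(J)$ lies in $S(\sigma(l))$ for the unique leaf $l$ it determines, and Proposition \ref{prop:wfm:leaf} forces $(J, J) = wfm(C^\sigma) = (\ci(l), \ci(l))$, so $\ci(l) = J$ and $\sigma$ is indeed counted. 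Combining everything gives $\pi_\pts(J) = P(W(J)) = \mu_C(J)$.

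I expect the main obstacle to be the partition claim, specifically verifying that the descent induced by a selection is both well-defined (it terminates at a leaf) and deterministic (it never branches). Finiteness of branches relies on the ``fires at most once'' restriction built into the definition of execution models together with the finiteness of $C$, while determinism relies on the edges leaving a node being labelled by the distinct head atoms of the single rule $\ce(s)$ plus the $\neither$ alternative. The zero-probability selections (those choosing an omitted edge) need to be handled separately, but since they contribute $0$ to every sum they can simply be discarded. Once these points are settled, the remainder is routine bookkeeping with Proposition \ref{prop:wfm:leaf} and equation \eqref{eq:prop:sel:nodes}.
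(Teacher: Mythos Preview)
Your proposal is correct and follows essentially the same route as the paper: partition the selection space by the sets $S(\sigma(l))$ indexed by leaves, invoke Proposition~\ref{prop:wfm:leaf} to identify the $J$-block of the partition with $W(J)$, and combine with equation~\eqref{eq:prop:sel:nodes}. The only difference is that you spell out the partition argument (via the deterministic descent through $\pts$ and the caveat about zero-probability selections) where the paper simply declares it ``easy to see''.
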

\begin{proof} 
  Given an execution model $\pts$ of a CP-theory $C$ (Def.\ \ref{def:exmodelgen}), we associate
  to each node $s$ of $\pts$ the set $S(\sigma(s))$ of all those
  $C$-selections $\sigma$ (Def.\ \ref{def:selection}) that extend $\sigma(s)$.  It is easy to see
  that, with $L_\pts$ the set of all leaves of $\pts$, the class $\{
  S(\sigma(l)) | l \in L_\pts\}$ is a partition of the set $\sels_C$
  of all selections. Let $L_\pts(J)$ be the set of all leaves $l$ of
  $\pts$ for which $\ci(l) = J$, and let $Sels(J)$ be the set of selections $\sigma$ such
  that $WFM(C^\sigma) = J$. Because for each leaf $l$, the well-founded model of a selection $\sigma \in S(\sigma(l))$ for is $\ci(l)$ (Proposition \ref{prop:wfm:leaf}), the class $\{ S(\sigma(l)) | l \in L_\pts(J)\}$
  is a partition of the collection $Sels(J)$ .  This now allows us to derive the
  following equation:
\begin{equation*}
\begin{array}{lll} \mu_C(J) = \sum_{\sigma \in Sels(J)} P(\sigma)  & = \sum_{l \in L_\pts(J)} \sum_{\sigma \in S(\sigma(l))} P(\sigma)& \text{\ \ (Def.\ \ref{def:musem})} \\  & = \sum_{l \in L_\pts(J)} \cp(l) & \text{\ \ (see equation \eqref{eq:prop:sel:nodes})}\\  & = \pi_\pts(J).
\end{array}
\end{equation*}
\end{proof}

For any execution model $\pts$ of $C$, this theorem now characterizes
the probability distribution $\pi_\pts$ in a way that depends only on
$C$ and not on $\pts$ itself.  It follows that, indeed, for all
execution models $\pts$ and $\pts'$ of $C$, $\pi_\pts = \pi_{\pts'}$,
which means that we have now also proven Theorem \ref{th:uniquegen}
(and, therefore, Theorem \ref{th:uniquepos} as well).  

\subsection{Execution models that follow the timing}
\label{prf:strat}

In this section, we will prove Theorem \ref{th:strat}, which states that every stratified CP-theory has an execution model which follows its stratification.  Recall that a CP-theory is stratified if it strictly respects some timing $\lambda$ (i.e., for all $h \in head_{At}(r)$ and $b \in body_{At}^+(r)$, $\lambda(h) \geq \lambda(b)$ and for all $h \in head_{At}(r)$ and $b \in body_{At}^-(r)$, $\lambda(h) > \lambda(b)$).   As we did in Definition \ref{lambdaproc} of Section \ref{sec:complex}, we will again introduce a event-timing $\kappa$ of $\lambda$ (i.e., $\kappa$ maps rules to time points in such a way that $\lambda(h) \geq \kappa(r) \geq \lambda(b)$ for all $h \in head_{At}(r)$ and $b \in body_{At}(r)$).  Moreover, we assume that $\kappa$ is such that for all $b \in body_{At}^-(r)$, $\kappa(r) > \lambda(b)$.  It can easily be seen that for any stratified theory $C$, it is always possible to find such a $\kappa$.  

Our goal is now to show that, first, all weak execution models that follow $\kappa$ (Def.\ \ref{def:following}) also satisfy temporal precedence and, second, that such a process indeed exists.

Let us start by making some general observations about any weak execution model $\pts$ that follows $\kappa$.  For any descendant $s'$ of a node $s$ of $\pts$, it is, by definition, the case that $\kappa(\ce(s')) \geq \kappa(\ce(s))$.  Because every event $r$ can only affect the truth value of atoms with timing $\geq \kappa(r)$, it must be the case that, for each rule $r$ with timing $< \kappa(\ce(s))$, $body(r)^{\ci(s)} = body(r)^{\ci(s')}$.  Suppose now that for such an $r$ it would be the case that $r \in \rulesleft(s)$ and $\ci(s) \models body(r)$, i.e., $r$ is an event that could also have happened in $s$.  In this case, $body(r)$ would remain satisfied in all descendants of $s$, up to and including each leaf $l$ that might be reached.  However, it is impossible that $r$ actually happens in some descendant $s'$ of $s$, since that would violate the constraint that $\kappa(\ce(s')) \geq \kappa(\ce(s))$.  So, it would be the case that $r \in \rulesleft(l)$ and $\ci(l) \models body(r)$, which would contradict the fact that $l$ is a leaf.  We conclude that such an $r$ cannot exist, i.e., for each $s$, it must be the case that $\ce(s)$ is a rule with minimal timing among all rules $r \in \rulesleft(s)$ for which $\ci(s) \models body(r)$.

Let us now assume that we non-deterministically construct a probabilistic $\Sigma$-process $\pts$ as follows:
\begin{itemize}
\item We start with only a root $s$, with $\ci(s) = \{\}$;
\item As long as one exists, we select a leaf $s$ of our current tree, for which the set of rules $r \in \rulesleft(s)$ such that $\ci(s) \models body(r)$ is non-empty.  We then extend $\pts$ by executing one of the rules whose timing is minimal in this set.
\end{itemize}  

As shown in the previous paragraph, all weak execution models that follow $\kappa$ can be constructed in this way.  Conversely, each process $\pts$ that we can construct in this way can easily be seen to also be a weak execution model.  Moreover, it is again easy to see that for all descendants $s'$ of $s$ of such a $\pts$ and each rule $r$ with timing $< \kappa(\ce(s))$, $body(r)^{\ci(s)} = body(r)^{\ci(s')}$.  Therefore, as we go along any particular branch of $\pts$, the minimum timing of all rules with true body can only increase, which means that each process constructed in the above way must follow $\kappa$.  So, this provides an alternative, constructive characterization of the set of all weak execution models that follow $\kappa$.  An immediate consequence is that there exist such processes.  Therefore, it now suffices to show that all these processes also satisfy temporal precedence. 

\begin{proposition}
Each weak execution model $\pts$ that follows the timing $\kappa$ also satisfies temporal precedence and is, therefore, an execution model.
\end{proposition}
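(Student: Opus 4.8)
The goal is to show that for every non-leaf node $s$ the temporal precedence condition $body(\ce(s))^{\nu_s} \neq \unknown$ holds; the ``therefore an execution model'' clause is then immediate from Definition \ref{def:exmodelgen}, and the existence of such a process was already secured by the constructive characterization in the paragraph preceding this proposition. Write $r = \ce(s)$ and $t = \kappa(r)$. Since $r$ fires in $s$ we have $\ci(s) \models body(r)$, so $body(r)^{\ci(s)} = \true$. By Proposition \ref{prop:Os}, $\nu_s$ is obtained from the two-valued interpretation $\ci(s)$ by leaving the true atoms untouched and turning some false atoms into $\unknown$; hence $\nu_s \leq_p \ci(s)$, and precision-monotonicity gives $body(r)^{\nu_s} \leq_p body(r)^{\ci(s)} = \true$, so $body(r)^{\nu_s} \in \{\true,\unknown\}$. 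The whole argument therefore reduces to excluding the value $\unknown$.

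The key lemma I would prove is that \emph{every atom turned unknown in $\nu_s$ has timing at least $t$}, by induction along a terminal hypothetical derivation sequence $(\nu_i)$ computing $\nu_s$. At the step that turns some $p \in head_{At}(r')$ into $\unknown$ (with $body(r')^{\nu_i} \neq \false$ and $r' \in \rulesleft(s)$), we have $\lambda(p) \geq \kappa(r')$ because $C$ strictly respects $\lambda$, so it suffices to show $\kappa(r') \geq t$. Suppose instead $\kappa(r') < t$. As established just before this proposition, $r = \ce(s)$ has minimal $\kappa$-value among all rules of $\rulesleft(s)$ whose body holds in $\ci(s)$; hence $body(r')^{\ci(s)} = \false$. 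For $body(r')$ to be non-false under $\nu_i$, some atom $q \in body_{At}(r')$ must already have been turned unknown, giving $\lambda(q) \geq t$ by the induction hypothesis; but every body atom of $r'$ satisfies $\lambda(q) \leq \kappa(r') < t$, a contradiction. So $\kappa(r') \geq t$ and $\lambda(p) \geq t$, completing the induction.

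Finally I would combine this lemma with the strictness of the timing on negative conditions. Every $b \in body_{At}^-(r)$ satisfies $\lambda(b) < \kappa(r) = t$, so by the lemma such a $b$ is never turned unknown and thus keeps its $\ci(s)$-value in $\nu_s$; every $b \in body_{At}^+(r)$ can only move from $\false$ to $\unknown$, i.e. $\ci(s)(b) \leq_t \nu_s(b)$. Since three-valued evaluation is monotone in positively-occurring atoms and antitone in negatively-occurring ones, this yields $body(r)^{\ci(s)} \leq_t body(r)^{\nu_s}$, and because $body(r)^{\ci(s)} = \true$ is the $\leq_t$-greatest value we conclude $body(r)^{\nu_s} = \true \neq \unknown$, as required. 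I expect the main obstacle to be the borderline positive body atoms with $\lambda(b) = t$, which the lemma does \emph{not} forbid from becoming unknown; the resolution is precisely the observation that a positively-occurring atom moving $\false \to \unknown$ can only raise the body in the truth order, so only the strictly earlier (hence unchanged) negative conditions could ever have endangered the value $\true$.
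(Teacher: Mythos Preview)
Your proof is correct and follows essentially the same route as the paper's. The paper's argument establishes the same key lemma---that $\ci(s)$ and $\nu_s$ coincide on all atoms $p$ with $\lambda(p) < \kappa(\ce(s))$---via the same induction along the hypothetical derivation sequence (phrased more tersely as ``applying an event with timing $\geq i$ \ldots\ only modifies atoms with timing $\geq i$''), and then concludes using the same positive/negative split and the $\leq_t$-monotonicity of three-valued evaluation; your write-up simply spells out the inductive step more explicitly.
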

\begin{proof}
We need to show that, for each node $s$ of $\pts$, $\nu_s(body(\ce(s)) = \true$.  
In general, applying an event with timing $\geq i$ during a
hypothetical derivation sequence only modifies atoms with timing
$\geq i$, and hence, can only modify the truth value of bodies of
events with timing $\geq i$.  Because, in the first step $\nu_0$ of a sequence constructing $\nu_s$, the only events that can be used are those $r \in \rulesleft(s)$ for which $\ci(s) \models body(r)$ and we know that the time of $\ce(s)$ is minimal among these events, we conclude that $\ci(s)$ and $\nu_s$ coincide on all atoms $p$ with timing $\lambda(p)< i$.
Because $C$ strictly respects $\lambda$, all atoms
$p\in body_{At}^-(r)$ therefore have the same truth value in $\nu_s$ as in $\ci(s)$.
Moreover, $\ci(s) \leq_t \nu_s$, so, in particular, for all atoms $p\in body_{At}^+(r)$,  $\ci(s)(p)\leqt \nu_s(p)$. By a
well-known monotonicity property of three-valued logic, $\true =
body(r)^{\ci(s)} \leqt body(r)^{\nu_s}$.  Hence, $body(r)^{\nu_s}$ is indeed $\true$.
\end{proof}

This concludes our proof of Theorem \ref{th:strat}.  Since this theorem clearly generalizes Theorem \ref{posstrat}, we have now proven all theorems stated in this paper.

\bibliographystyle{acmtrans}

\begin{thebibliography}{}

\bibitem[\protect\citeauthoryear{Bacchus}{Bacchus}{1993}]{Bacchus93}
{\sc Bacchus, F.} 1993.
\newblock Using first-order probability logic for the construction of
  {B}ayesian networks.
\newblock In {\em Proceedings of the Sixth Conference on Uncertainty in
  Artificial Intelligence, UAI'93}. 219--226.

\bibitem[\protect\citeauthoryear{Baral, Gelfond, and Rushton}{Baral
  et~al\mbox{.}}{2004}]{Baral04}
{\sc Baral, C.}, {\sc Gelfond, M.}, {\sc and} {\sc Rushton, N.} 2004.
\newblock Probabilistic reasoning with answer sets.
\newblock In {\em Proc. of the 7th International Conference on Logic
  Programming and Nonmonotonic Reasoning {(LPNMR-7)}}. Lecture notes in
  artifical intelligence {(LNAI)}, vol. 2923. {S}pringer-{V}erlag, 21--33.

\bibitem[\protect\citeauthoryear{Baral, Gelfond, and Rushton}{Baral
  et~al\mbox{.}}{2008}]{baral08}
{\sc Baral, C.}, {\sc Gelfond, M.}, {\sc and} {\sc Rushton, N.} 2008.
\newblock Probabilistic reasoning with answer sets.
\newblock {\em Theory and Practice of Logic Programming\/}.
\newblock to appear.

\bibitem[\protect\citeauthoryear{Baral and Hunsaker}{Baral and
  Hunsaker}{2007}]{baral07:ijcai}
{\sc Baral, C.} {\sc and} {\sc Hunsaker, M.} 2007.
\newblock Using the probabilistic logic programming language {P-log} for causal
  and counterfactual reasoning and non-naive conditioning.
\newblock In {\em Proceedings of IJCAI}.

\bibitem[\protect\citeauthoryear{Baral, Tran, and Tuan}{Baral
  et~al\mbox{.}}{2002}]{baral02}
{\sc Baral, C.}, {\sc Tran, N.}, {\sc and} {\sc Tuan, L.} 2002.
\newblock Reasoning about actions in a probabilistic setting.
\newblock In {\em AAAI}.

\bibitem[\protect\citeauthoryear{Blockeel and Meert}{Blockeel and
  Meert}{2007}]{Blockeel06}
{\sc Blockeel, H.} {\sc and} {\sc Meert, W.} 2007.
\newblock Towards learning non-recursive {LPAD}s by transforming them into
  {B}ayesian networks.
\newblock In {\em Inductive Logic Programming, ILP'06, Revised Selected
  Papers}. Lecture Notes in Computer Science, vol. 4455. 94--108.

\bibitem[\protect\citeauthoryear{Breese}{Breese}{1992}]{breese92}
{\sc Breese, J.} 1992.
\newblock Construction of belief and decision networks.
\newblock {\em Computational intelligence\/}~{\em 8,\/}~4, 624--647.

\bibitem[\protect\citeauthoryear{Comley and Dowe}{Comley and
  Dowe}{2003}]{Comley:2003}
{\sc Comley, J.~W.} {\sc and} {\sc Dowe, D.~L.} 2003.
\newblock General {B}ayesian networks and asymmetric languages.
\newblock In {\em Proceedings of the Second Hawaii International Conference on
  Statistics and Related fields}.

\bibitem[\protect\citeauthoryear{Cussens}{Cussens}{2000}]{Cussens00:proc}
{\sc Cussens, J.} 2000.
\newblock Stochastic logic programs: Sampling, inference and applications.
\newblock In {\em Proceedings of the Sixteenth Annual Conference on Uncertainty
  in Artificial Intelligence}. Morgan Kaufmann, 115--122.

\bibitem[\protect\citeauthoryear{{De Finetti}}{{De Finetti}}{1937}]{definetti}
{\sc {De Finetti}, B.} 1937.
\newblock La prévision: ses lois logiques, ses sources subjectives.
\newblock {\em Annales de l'{I}nstitut {H}enri {P}oincar{\'e}\/}~{\em 7},
  1--68.

\bibitem[\protect\citeauthoryear{{De Raedt}, Kimmig, and Toivonen}{{De Raedt}
  et~al\mbox{.}}{2007}]{deraedt07}
{\sc {De Raedt}, L.}, {\sc Kimmig, A.}, {\sc and} {\sc Toivonen, H.} 2007.
\newblock {P}rob{L}og: A probabilistic {P}rolog and its application in link
  discovery.
\newblock In {\em Proceedings of the 20th International Joint Conference on
  Artificial Intelligence, IJCAI'07}. 2462--2467.

\bibitem[\protect\citeauthoryear{Dekhtyar and Subrahmanian}{Dekhtyar and
  Subrahmanian}{2000}]{DekSub00}
{\sc Dekhtyar, A.} {\sc and} {\sc Subrahmanian, V.~S.} 2000.
\newblock Hybrid probabilistic programs.
\newblock {\em Journal of Logic Programming\/}~{\em 43,\/}~3, 187--250.

\bibitem[\protect\citeauthoryear{Denecker}{Denecker}{1998}]{Denecker98c}
{\sc Denecker, M.} 1998.
\newblock The well-founded semantics is the principle of inductive definition.
\newblock In {\em Logics in Artificial Intelligence (JELIA'98)}, {J.~Dix},
  {L.~Fari\~{n}as~del Cerro}, {and} {U.~Furbach}, Eds. Lecture Notes in
  Artificial Intelligence, vol. 1489. Springer-Verlag, 1--16.

\bibitem[\protect\citeauthoryear{Denecker and Ternovska}{Denecker and
  Ternovska}{2007}]{Denecker/Ternovska:2007:TOCL}
{\sc Denecker, M.} {\sc and} {\sc Ternovska, E.} 2007.
\newblock A logic of non-monotone inductive definitions.
\newblock {\em Transactions On Computational Logic (TOCL)\/}.

\bibitem[\protect\citeauthoryear{Denecker and Vennekens}{Denecker and
  Vennekens}{2007}]{Denecker:LPNMR07}
{\sc Denecker, M.} {\sc and} {\sc Vennekens, J.} 2007.
\newblock {W}ell-founded semantics and the algebraic theory of non-monotone
  inductive definitions.
\newblock In {\em {L}ogic {P}rogramming and {N}onmonotonic {R}easoning, 9th
  {I}nternational {C}onference, {LPNMR} 2007, {P}roceedings}. Lecture Notes in
  Artificial Intelligence, vol. 4483. Springer, 84--96.

\bibitem[\protect\citeauthoryear{Fierens, Blockeel, Bruynooghe, and
  Ramon}{Fierens et~al\mbox{.}}{2005}]{Fierens05}
{\sc Fierens, D.}, {\sc Blockeel, H.}, {\sc Bruynooghe, M.}, {\sc and} {\sc
  Ramon, J.} 2005.
\newblock {L}ogical {B}ayesian networks and their relation to other
  probabilistic logical models.
\newblock In {\em {P}roceedings of the 15th {I}nternational {C}onference on
  {I}nductive {L}ogic {P}rogramming, ILP'05}. Lecture Notes in Computer
  Science, vol. 3625. Springer, 121--135.

\bibitem[\protect\citeauthoryear{Finzi and Lukasiewicz}{Finzi and
  Lukasiewicz}{2003}]{finzi03}
{\sc Finzi, A.} {\sc and} {\sc Lukasiewicz, T.} 2003.
\newblock Structure-based causes and explanations in the independent choice
  logic.
\newblock In {\em Proceedings of the 19h Conference on Uncertainty in
  Artificial Intelligence, UAI'03}.

\bibitem[\protect\citeauthoryear{Gelfond and Lifschitz}{Gelfond and
  Lifschitz}{1991}]{Gelfond91a}
{\sc Gelfond, M.} {\sc and} {\sc Lifschitz, V.} 1991.
\newblock Classical negation in logic programs and disjunctive databases.
\newblock {\em New Generation Computing\/}~{\em 9}, 365--387.

\bibitem[\protect\citeauthoryear{Gelfond and Lifschitz}{Gelfond and
  Lifschitz}{1993}]{Gelfond93b}
{\sc Gelfond, M.} {\sc and} {\sc Lifschitz, V.} 1993.
\newblock {Representing Action and Change by Logic Programs}.
\newblock {\em Journal of Logic Programming\/}~{\em 17,\/}~2,3,4, 301--322.

\bibitem[\protect\citeauthoryear{Gelfond and Lifschitz}{Gelfond and
  Lifschitz}{1998}]{gelfond98:link}
{\sc Gelfond, M.} {\sc and} {\sc Lifschitz, V.} 1998.
\newblock Action languages.
\newblock {\em Link{\"o}ping Electronic Articles in Computer and Information
  Science\/}~{\em 3,\/}~16.

\bibitem[\protect\citeauthoryear{Getoor, Friedman, Koller, and Pfeffer}{Getoor
  et~al\mbox{.}}{2001}]{Getoor01:coll}
{\sc Getoor, L.}, {\sc Friedman, N.}, {\sc Koller, D.}, {\sc and} {\sc Pfeffer,
  A.} 2001.
\newblock Learning probabilistic relational models.
\newblock In {\em Relational Data Mining}, {S.~Dzeroski} {and} {N.~Lavrac},
  Eds. Springer-Verlag, 7--34.

\bibitem[\protect\citeauthoryear{Ghahramani}{Ghahramani}{1998}]{Ghahramani98}
{\sc Ghahramani, Z.} 1998.
\newblock Learning dynamic bayesian networks.
\newblock In {\em Adaptive Processing of Sequences and Data Structures}.
  Lecture Notes in Artificial Intelligence. Springer-Verlag, 168--197.

\bibitem[\protect\citeauthoryear{Giunchiglia and Lifschitz}{Giunchiglia and
  Lifschitz}{1998}]{giunchiglia98}
{\sc Giunchiglia, E.} {\sc and} {\sc Lifschitz, V.} 1998.
\newblock An action language based on causal explanation: Preliminary report.
\newblock In {\em Proceedings of AAAI 98}.

\bibitem[\protect\citeauthoryear{Halpern and Pearl}{Halpern and
  Pearl}{2001a}]{halpernpearl}
{\sc Halpern, J.} {\sc and} {\sc Pearl, J.} 2001a.
\newblock Causes and explanations: A structural model approach -- part {I}:
  Causes.
\newblock In {\em Proceedings of the 17th Conference on Uncertainty in
  Artificial Intelligence, UAI'01}.

\bibitem[\protect\citeauthoryear{Halpern and Pearl}{Halpern and
  Pearl}{2001b}]{halpernpearl2}
{\sc Halpern, J.} {\sc and} {\sc Pearl, J.} 2001b.
\newblock Causes and explanations: A structural model approach -- part {II}:
  Explanations.
\newblock In {\em Proceedings of the 17th Conference on Uncertainty in
  Artificial Intelligence, UAI'01}.

\bibitem[\protect\citeauthoryear{Jaeger}{Jaeger}{1997}]{jaeger:uai97}
{\sc Jaeger, M.} 1997.
\newblock Relational {B}ayesian networks.
\newblock In {\em Proceedings of the Thirteenth Conference on Uncertainty in
  Artificial Intelligence (UAI-97)}.

\bibitem[\protect\citeauthoryear{Kakas, Kowalski, and Toni}{Kakas
  et~al\mbox{.}}{1992}]{Kakas93a}
{\sc Kakas, A.~C.}, {\sc Kowalski, R.}, {\sc and} {\sc Toni, F.} 1992.
\newblock Abductive logic programming.
\newblock {\em Journal of Logic and Computation\/}~{\em 2,\/}~6, 719--770.

\bibitem[\protect\citeauthoryear{Kersting and {De Raedt}}{Kersting and {De
  Raedt}}{2000}]{kersting00bayesian}
{\sc Kersting, K.} {\sc and} {\sc {De Raedt}, L.} 2000.
\newblock Bayesian logic programs.
\newblock In {\em Proceedings of the Work-in-Progress Track at the 10th
  International Conference on Inductive Logic Programming}, {J.~Cussens} {and}
  {A.~Frisch}, Eds. 138--155.

\bibitem[\protect\citeauthoryear{Kersting and {De Raedt}}{Kersting and {De
  Raedt}}{2001}]{kersting:techrep}
{\sc Kersting, K.} {\sc and} {\sc {De Raedt}, L.} 2001.
\newblock Bayesian logic programs.
\newblock Tech. Rep. 151, Institute for Computer Science, University of
  Freiburg, Germany.

\bibitem[\protect\citeauthoryear{Kersting and {De Raedt}}{Kersting and {De
  Raedt}}{2007}]{kersting:chapter}
{\sc Kersting, K.} {\sc and} {\sc {De Raedt}, L.} 2007.
\newblock Bayesian logic programming: Theory and tool.
\newblock In {\em An Introduction to Statistical Relational Learning},
  {L.~Getoor} {and} {B.~Taskar}, Eds. MIT Press.
\newblock To appear.

\bibitem[\protect\citeauthoryear{Lakshmanan and Sadri}{Lakshmanan and
  Sadri}{1994}]{LaksSadri94}
{\sc Lakshmanan, L., V.~S.} {\sc and} {\sc Sadri, F.} 1994.
\newblock Probabilistic deductive databases.
\newblock In {\em Proceedings of the International Symposium on Logic
  Programming, ILPS'94}, {M.~Bruynooghe}, Ed. MIT Press, 254--268.

\bibitem[\protect\citeauthoryear{Lifschitz and Turner}{Lifschitz and
  Turner}{1999}]{lifschitz99:lpnmr}
{\sc Lifschitz, V.} {\sc and} {\sc Turner, H.} 1999.
\newblock Representing transition systems by logic programs.
\newblock In {\em LPNMR}.

\bibitem[\protect\citeauthoryear{Lukasiewicz}{Lukasiewicz}{2001}]{Lukasiewicz0%
1:proc}
{\sc Lukasiewicz, T.} 2001.
\newblock Fixpoint characterizations for many-valued disjunctive logic
  programs.
\newblock In {\em Proceedings of the 6th International Conference on Logic
  Programming and Nonmonotonic Reasoning (LPNMR'01)}. Lecture Notes in
  Artificial Intelligence, vol. 2173. Springer-Verlag, 336--350.

\bibitem[\protect\citeauthoryear{{Martin-L\"of}}{{Martin-L\"of}}{1982}]{Martin%
Lof82}
{\sc {Martin-L\"of}, P.} 1982.
\newblock Constructive mathematics and computer programming.
\newblock In {\em Proceedings of the Sixth International Congress of Logic,
  Methodology, and Philosophy of Science}. 153--175.

\bibitem[\protect\citeauthoryear{McCain}{McCain}{1997}]{mccain:phd}
{\sc McCain, N.} 1997.
\newblock Causality in commonsense reasoning about actions.
\newblock Ph.D. thesis, University of Texas at Austin.

\bibitem[\protect\citeauthoryear{{McCain} and Turner}{{McCain} and
  Turner}{1996}]{mccainturner}
{\sc {McCain}, N.} {\sc and} {\sc Turner, H.} 1996.
\newblock Causal theories of action and change.
\newblock In {\em Proceedings of the Thirteenth National Conference on
  Artificial Intelligence and the Eighth Innovative Applications of Artificial
  Intelligence Conference ( 13th AAAI/8th IAAI)}. AAAI Press, 460--465.

\bibitem[\protect\citeauthoryear{Meert, Struyf, and Blockeel}{Meert
  et~al\mbox{.}}{2007}]{Meert07}
{\sc Meert, W.}, {\sc Struyf, J.}, {\sc and} {\sc Blockeel, H.} 2007.
\newblock {L}earning ground {CP}-logic theories by means by {B}ayesian network
  techniques.
\newblock In {\em {P}roceedings of the 6th {I}nternational {W}orkshop on
  {M}ulti-{R}elational {D}ata {M}ining}. 93--104.

\bibitem[\protect\citeauthoryear{Muggleton}{Muggleton}{2000}]{Muggleton00:jrnl}
{\sc Muggleton, S.} 2000.
\newblock Learning stochastic logic programs.
\newblock {\em Electronic Transactions in Artificial Intelligence\/}~{\em
  5,\/}~041, 141--153.

\bibitem[\protect\citeauthoryear{Ng and Subrahmanian}{Ng and
  Subrahmanian}{1992}]{NgSub92}
{\sc Ng, R.~T.} {\sc and} {\sc Subrahmanian, V.~S.} 1992.
\newblock Probabilistic logic programming.
\newblock {\em Information and Computation\/}~{\em 101,\/}~2, 150--201.

\bibitem[\protect\citeauthoryear{Ngo and Haddawy}{Ngo and
  Haddawy}{1997}]{NgoHaddawy97}
{\sc Ngo, L.} {\sc and} {\sc Haddawy, P.} 1997.
\newblock Answering queries from context-sensitive probabilistic knowledge
  bases.
\newblock {\em Theoretical Computer Science\/}~{\em 171,\/}~1--2, 147--177.

\bibitem[\protect\citeauthoryear{Pearl}{Pearl}{1988}]{pearl:firstbook}
{\sc Pearl, J.} 1988.
\newblock {\em Probabilistic Reasoning in Intelligent Systems : Networks of
  Plausible Inference}.
\newblock Morgan Kaufmann.

\bibitem[\protect\citeauthoryear{Pearl}{Pearl}{2000}]{pearl:book}
{\sc Pearl, J.} 2000.
\newblock {\em Causality: Models, Reasoning, and Inference}.
\newblock Cambridge University Press.

\bibitem[\protect\citeauthoryear{Pinto and Reiter}{Pinto and
  Reiter}{1993}]{Pinto93}
{\sc Pinto, J.} {\sc and} {\sc Reiter, R.} 1993.
\newblock Temporal reasoning in logic programming: A case for the situation
  calculus.
\newblock In {\em Proc. of the International Conference on Logic Programming}.
  203--221.

\bibitem[\protect\citeauthoryear{Poole}{Poole}{1993}]{pha1}
{\sc Poole, D.} 1993.
\newblock Probabilistic \uppercase{H}orn abduction and \uppercase{B}ayesian
  networks.
\newblock {\em Artificial Intelligence 64\/}~{\em 64,\/}~1, 81--129.

\bibitem[\protect\citeauthoryear{Poole}{Poole}{1997}]{Poole97:jrnl}
{\sc Poole, D.} 1997.
\newblock The {I}ndependent {C}hoice {L}ogic for modelling multiple agents
  under uncertainty.
\newblock {\em Artificial Intelligence\/}~{\em 94,\/}~1-2, 7--56.

\bibitem[\protect\citeauthoryear{Przymusinski}{Przymusinski}{1991}]{przymusins%
ki91stable}
{\sc Przymusinski, T.~C.} 1991.
\newblock Stable semantics for disjunctive programs.
\newblock {\em New Generation Computing\/}~{\em 3/4}, 401--424.

\bibitem[\protect\citeauthoryear{Riguzzi}{Riguzzi}{2004}]{Rig04-ILP04-IC}
{\sc Riguzzi, F.} 2004.
\newblock Learning logic programs with annotated disjunctions.
\newblock In {\em 14th Internation Conference on Inductive Logic Programming
  (ILP2004), Porto, 6-8 September 2004}, {A.~Srinivasan} {and} {R.~King}, Eds.
  Springer Verlag, Heidelberg, Germany, 270--287.

\bibitem[\protect\citeauthoryear{Riguzzi}{Riguzzi}{2007}]{Riguzzi07}
{\sc Riguzzi, F.} 2007.
\newblock A top down interpreter for {LPAD} and {CP}-logic.
\newblock In {\em The 14th RCRA workshop Experimental Evaluation of Algorithms
  for Solving Problems with Combinatorial Explosion}.

\bibitem[\protect\citeauthoryear{Sakama and Inoue}{Sakama and
  Inoue}{1994}]{sakama94:journal}
{\sc Sakama, C.} {\sc and} {\sc Inoue, K.} 1994.
\newblock An alternative approach to the semantics of disjunctive logic
  programs and deductive databases.
\newblock {\em Journal of automated reasoning\/}~{\em 13,\/}~1, 145--172.

\bibitem[\protect\citeauthoryear{{Santos Costa}, Page, Qazi, and
  Cussens}{{Santos Costa} et~al\mbox{.}}{2003}]{clp(bn)}
{\sc {Santos Costa}, V.}, {\sc Page, D.}, {\sc Qazi, M.}, {\sc and} {\sc
  Cussens, J.} 2003.
\newblock {CLP}({BN}): Constraint logic programming for probabilistic
  knowledge.
\newblock In {\em Proceedings of the Nineteenth Annual Conference on
  Uncertainty in Artificial Intelligence (UAI-2003)}. Morgan Kaufmann,
  517--524.

\bibitem[\protect\citeauthoryear{Sato and Kameya}{Sato and
  Kameya}{1997}]{prism3}
{\sc Sato, T.} {\sc and} {\sc Kameya, Y.} 1997.
\newblock \uppercase{PRISM}: A language for symbolic-statistical modeling.
\newblock In {\em Proceedings of the International Joint Conferences on
  Artificial Intelligence, IJCAI'97}. 1330--1335.

\bibitem[\protect\citeauthoryear{Shafer}{Shafer}{1996}]{shafer:book}
{\sc Shafer, G.} 1996.
\newblock {\em The art of causal conjecture}.
\newblock {MIT} Press.

\bibitem[\protect\citeauthoryear{Tran and Baral}{Tran and Baral}{2004}]{tran04}
{\sc Tran, N.} {\sc and} {\sc Baral, C.} 2004.
\newblock Encoding probabilistic causal model in probabilistic action language.
\newblock In {\em AAAI}.

\bibitem[\protect\citeauthoryear{{Van Belleghem}, Denecker, and {De
  Schreye}}{{Van Belleghem} et~al\mbox{.}}{1997}]{VanBelleghem97a}
{\sc {Van Belleghem}, K.}, {\sc Denecker, M.}, {\sc and} {\sc {De Schreye}, D.}
  1997.
\newblock {On the Relation between Situation Calculus and Event Calculus}.
\newblock {\em Journal of Logic Programming, special issue on Reasoning about
  Actions and Change\/}~{\em 31,\/}~1-3, 3--37.

\bibitem[\protect\citeauthoryear{{Van Gelder}, Ross, and Schlipf}{{Van Gelder}
  et~al\mbox{.}}{1991}]{Vangelder91}
{\sc {Van Gelder}, A.}, {\sc Ross, K.}, {\sc and} {\sc Schlipf, J.} 1991.
\newblock The well-founded semantics for general logic programs.
\newblock {\em Journal of the ACM\/}~{\em 38,\/}~3, 620--650.

\bibitem[\protect\citeauthoryear{Venn}{Venn}{1866}]{Venn}
{\sc Venn, J.} 1866.
\newblock {\em The Logic of Chance: An Essay on the Foundations and Province of
  the Theory of Probability}.

\bibitem[\protect\citeauthoryear{Vennekens, Denecker, and Bruynooghe}{Vennekens
  et~al\mbox{.}}{2006}]{jelia06}
{\sc Vennekens, J.}, {\sc Denecker, M.}, {\sc and} {\sc Bruynooghe, M.} 2006.
\newblock {R}epresenting causal information about a probabilistic process.
\newblock In {\em {L}ogics in {A}rtificial {I}ntelligence, 10th {E}uropean
  {C}onference, {JELIA}'06, {P}roceedings}. Lecture Notes in Computer Science,
  vol. 4160. Springer, 452--464.

\bibitem[\protect\citeauthoryear{Vennekens, Verbaeten, and
  Bruynooghe}{Vennekens et~al\mbox{.}}{2004}]{iclp04:lpads}
{\sc Vennekens, J.}, {\sc Verbaeten, S.}, {\sc and} {\sc Bruynooghe, M.} 2004.
\newblock {L}ogic programs with annotated disjunctions.
\newblock In {\em {L}ogic {P}rogramming, 20th {I}nternational {C}onference,
  {ICLP}'04, {P}roceedings}. Lecture Notes in Computer Science, vol. 3132.
  Springer, 431--445.

\end{thebibliography}

\end{document}